\documentclass{article} % For LaTeX2e
\usepackage{times}
\usepackage[accepted]{icml2025}
% Optional math commands from https://github.com/goodfeli/dlbook_notation.
%%%%% NEW MATH DEFINITIONS %%%%%

\usepackage{amsmath,amsfonts,bm}

% Mark sections of captions for referring to divisions of figures

% Highlight a newly defined term

% Figure reference, lower-case.

% Figure reference, capital. For start of sentence

% Section reference, lower-case.

% Section reference, capital.

% Reference to two sections.

% Reference to three sections.

% Reference to an equation, lower-case.
\def\eqref#1{equation~\ref{#1}}
% Reference to an equation, upper case

% A raw reference to an equation---avoid using if possible

% Reference to a chapter, lower-case.

% Reference to an equation, upper case.

% Reference to a range of chapters

% Reference to an algorithm, lower-case.

% Reference to an algorithm, upper case.

% Reference to a part, lower case

% Reference to a part, upper case

\def\1{\bm{1}}

% Random variables

% rm is already a command, just don't name any random variables m

% Random vectors

% Elements of random vectors

% Random matrices

% Elements of random matrices

% Vectors

% Elements of vectors

% Matrix

% Tensor
\DeclareMathAlphabet{\mathsfit}{\encodingdefault}{\sfdefault}{m}{sl}
\SetMathAlphabet{\mathsfit}{bold}{\encodingdefault}{\sfdefault}{bx}{n}

% Graph

% Sets

% Don't use a set called E, because this would be the same as our symbol
% for expectation.

% Entries of a matrix

% entries of a tensor
% Same font as tensor, without \bm wrapper

% The true underlying data generating distribution

% The empirical distribution defined by the training set

% The model distribution

% Stochastic autoencoder distributions

 % Laplace distribution

% Wolfram Mathworld says $L^2$ is for function spaces and $\ell^2$ is for vectors
% But then they seem to use $L^2$ for vectors throughout the site, and so does
% wikipedia.

 % See usage in notation.tex. Chosen to match Daphne's book.

\usepackage{hyperref}
\usepackage{url}

\usepackage{wrapfig}

\usepackage[utf8]{inputenc} % allow utf-8 input
\usepackage[T1]{fontenc}    % use 8-bit T1 fonts
\usepackage{pgf, tikz, pgfplots}

\usepackage{booktabs}       % professional-quality tables
\usepackage{subcaption}
\usepackage{amsfonts}       % blackboard math symbols
\usepackage{bm} 
\usepackage{bbm}
\usepackage{nicefrac}       % compact symbols for 1/2, etc.
\usepackage{microtype}      % microtypography
\usepackage{xcolor}         % colors
\usepackage{graphicx}
\usepackage{amsmath}
\usepackage{amsthm}

\input{latex_resources/mysymbol.sty}
\input{latex_resources/juan_defs.sty}

% Risks
 % empirical risk on graphs
 % statistical risk on graphs
 % empirical risk on manifolds
 % statistical risk on manifolds

%Problems
 % empirical risk on graphs
 % statistical risk on graphs
 % empirical risk on manifolds
 % statistical risk on manifolds

 % statistical risk on manifolds

% \title{}
% \author{%
%   Zhiyang Wang \thanks{The authors contribute equally.} \\
%   Dept. of Electrical and Systems Eng.\\
%   University of Pennsylvania\\
%   Philadelphia, PA 19104 \\
%   \texttt{zhiyangw@seas.upenn.edu} \\
%   % examples of more authors
%   \And
%   Juan Cervi\~no$^\ast$\\
%   Dept. of Electrical and Systems Eng.\\
%   University of Pennsylvania\\
%   Philadelphia, PA 19104 \\
%   \texttt{jcervino@seas.upenn.edu } \\
%   \AND
%   Alejandro Ribeiro \\
%   Dept. of Electrical and Systems Eng.\\
%   University of Pennsylvania\\
%   Philadelphia, PA 19104 \\
%   \texttt{aribeiro@seas.upenn.edu } \\
%   % \And
%   % Coauthor \\
%   % Affiliation \\
%   % Address \\
%   % \texttt{email} \\
%   % \And
%   % Coauthor \\
%   % Affiliation \\
%   % Address \\
%   % \texttt{email} \\
% }

\begin{document}

\twocolumn[
\icmltitlerunning{A Manifold Perspective on the Statistical Generalization of Graph Neural Networks}
\icmltitle{A Manifold Perspective on the Statistical Generalization of \\Graph Neural Networks}
% It is OKAY to include author information, even for blind
% submissions: the style file will automatically remove it for you
% unless you've provided the [accepted] option to the icml2025
% package.

% List of affiliations: The first argument should be a (short)
% identifier you will use later to specify author affiliations
% Academic affiliations should list Department, University, City, Region, Country
% Industry affiliations should list Company, City, Region, Country

% You can specify symbols, otherwise they are numbered in order.
% Ideally, you should not use this facility. Affiliations will be numbered
% in order of appearance and this is the preferred way.
\icmlsetsymbol{equal}{*}

\begin{icmlauthorlist}
\icmlauthor{Zhiyang Wang}{equal,upenn}
\icmlauthor{Juan Cervi\~no}{equal,mit}
\icmlauthor{Alejandro Ribeiro}{upenn}
\end{icmlauthorlist}

\icmlaffiliation{upenn}{Department of Electrical and Systems Engineering, University of Pennsylvania, Philadelphia, USA}
\icmlaffiliation{mit}{Laboratory of Information and Decision Systems
(LIDS), Massachusetts Institute of Technology, Cambridge, USA}

\icmlcorrespondingauthor{Zhiyang Wang}{zhiyangw@seas.upenn.edu}
\icmlcorrespondingauthor{Juan Cervi\~no}{jcervino@mit.edu}

% You may provide any keywords that you
% find helpful for describing your paper; these are used to populate
% the "keywords" metadata in the PDF but will not be shown in the document
% \icmlkeywords{Machine Learning, ICML}

\vskip 0.3 in
]

% this must go after the closing bracket ] following \twocolumn[ ...

% This command actually creates the footnote in the first column
% listing the affiliations and the copyright notice.
% The command takes one argument, which is text to display at the start of the footnote.
% The \icmlEqualContribution command is standard text for equal contribution.
% Remove it (just {}) if you do not need this facility.

%\printAffiliationsAndNotice{}  % leave blank if no need to mention equal contribution
\printAffiliationsAndNotice{\icmlEqualContribution} % otherwise use the standard text.

\begin{abstract}
Graph Neural Networks (GNNs) extend convolutional neural networks to operate on graphs.  Despite their impressive performances in various graph learning tasks, the theoretical understanding of their generalization capability is still lacking. Previous GNN generalization bounds ignore the underlying graph structures, often leading to bounds that increase with the number of nodes -- a behavior contrary to the one experienced in practice. In this paper, we take a manifold perspective to establish the statistical generalization theory of GNNs on graphs sampled from a manifold in the spectral domain. As demonstrated empirically, we prove that the generalization bounds of GNNs decrease linearly with the size of the graphs in the logarithmic scale, and increase linearly with the spectral continuity constants of the filter functions. Notably, our theory explains both node-level and graph-level tasks. Our result has two implications: i) guaranteeing the generalization of GNNs to unseen data over manifolds; ii) providing insights into the practical design of GNNs, i.e., restrictions on the discriminability of GNNs are necessary to obtain a better generalization performance. We demonstrate our generalization bounds of GNNs using synthetic and multiple real-world datasets. 
\end{abstract}

\section{Introduction}
Graph convolutional neural networks (GNNs) \citep{scarselli2008graph, defferrard2016convolutional, bruna2013spectral} have emerged as one of the leading tools for processing graph-structured data.
There is abundant evidence of their empirical success across various fields, including but not limited to weather prediction \citep{lam2023learning}, protein structure prediction in biochemistry \citep{jumper2021highly, strokach2020fast}, resource allocation in wireless communications \citep{wang2022learning}, social network analysis in sociology \citep{fan2020graph}, point cloud in 3D model reconstruction \citep{shi2020point} and learning simulators \citep{fortunato2022multiscale}. 

The effectiveness of GNNs relies on their empirical ability to \textit{predict} over unseen data. This capability is evaluated theoretically with \textit{statistical generalization} in deep learning theory \citep{kawaguchi2017generalization}, which quantifies the difference between the \textit{empirical risk} (i.e. training error) and the \textit{statistical risk }(i.e. testing error). Despite the abundant evidence of GNNs' generalization capabilities in practice, developing concrete theories to explain their generalization is an active area of research. Many recent works have studied the generalization bounds of GNNs without any dependence on the underlying model responsible for generating the graph data \citep{scarselli2018vapnik, garg2020generalization, verma2019stability}. Generalization analysis on graph classification, when graphs are drawn from random limit models, is also studied in a series of works \citep{ruiz2023transferability, maskey2022generalization, maskey2024generalization, levie2024graphon}. In this work, we take the manifold perspective to formulate graph data on continuous topological spaces, i.e., manifolds. We emphasize that manifolds are realistic models to generate graph data that enable rigorous theoretical analysis and a deep understanding of the behaviors of GNNs.

%GNNs success is measured by the ability to make correct predictions over unseen data. 

%In several applications, graphs can be considered as samples of a manifold. This is sometimes quite explicit as in the case of, e.g., point clouds \citep{bronstein2017geometric, bronstein2021geometric} and implicit as in the case of, e.g., wireless communication networks and citation networks \citep{ wu2019session}. 

We explore the generalization bound of GNNs through the lens of manifold theory on both node-level and graph-level tasks in the spectral domain. The graphs are constructed based on points randomly sampled from underlying manifolds, indicating that the manifold can be viewed as a statistical model for these discretely sampled points. As deep learning architectures have been established over manifolds \citep{wang2022convolutional, chew2024geometric}, the convergence of GNNs to manifold neural networks (MNNs) and the algebraical equivalence of these two frameworks facilitate a detailed generalization understanding of GNNs through spectral analysis.  
We demonstrate that, with an appropriate graph construction based on the sampled points from the manifold, the generalization gap between empirical and statistical risks decreases with the number of sampled points in the graphs (Figure \ref{fig:syntethic_subfig3}) on both node-level and graph-level tasks. More importantly, the generalization gap increases linearly with the continuity constants of frequency response functions of graph filters composing the GNN (Figure \ref{fig:syntethic_subfig4}). We observe that with spectral continuous filters, the GNNs are generalizable across different nodes or graphs generated from the same underlying manifold. This provides insight into the practical graph filter design from a spectral perspective. Moreover, the theoretical results indicate a trade-off between the discriminability and generalization capability of GNNs, suggesting that restrictions on the discriminability of GNNs are necessary to maintain generalization performance.

\begin{figure*}[htbp]
    \centering
    \begin{subfigure}{0.19\textwidth}
        \centering
        \includegraphics[width=1.\linewidth]{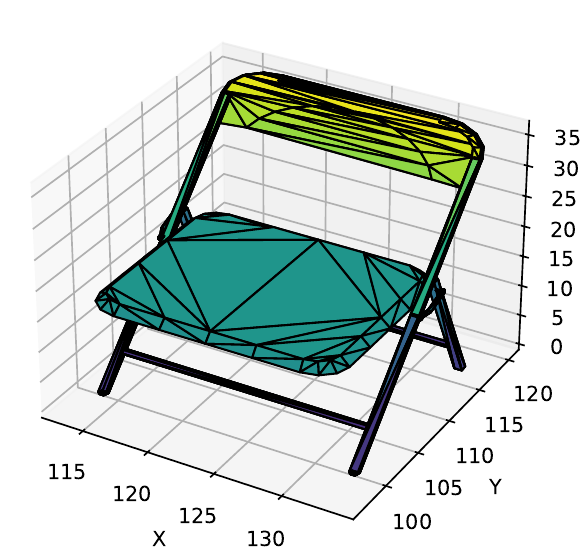}
        \caption{Chair Manifold}
        \label{fig:syntethic_subfig1}
    \end{subfigure}
    % \hspace{-0.5cm}
    \hfill
    \begin{subfigure}{0.19\textwidth}
        \centering
        \includegraphics[width=1.\linewidth]{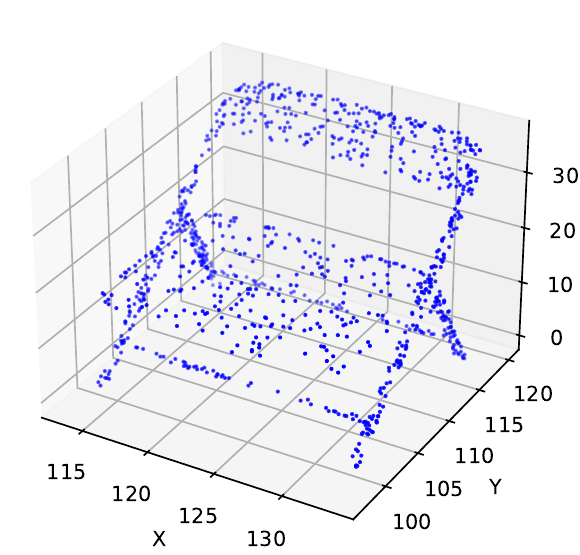}
        \caption{Sampled Chair}
        \label{fig:syntethic_subfig2}
    \end{subfigure}
    \hfill
    \begin{subfigure}{0.3\textwidth}
        \centering
        \includegraphics[width=\linewidth]{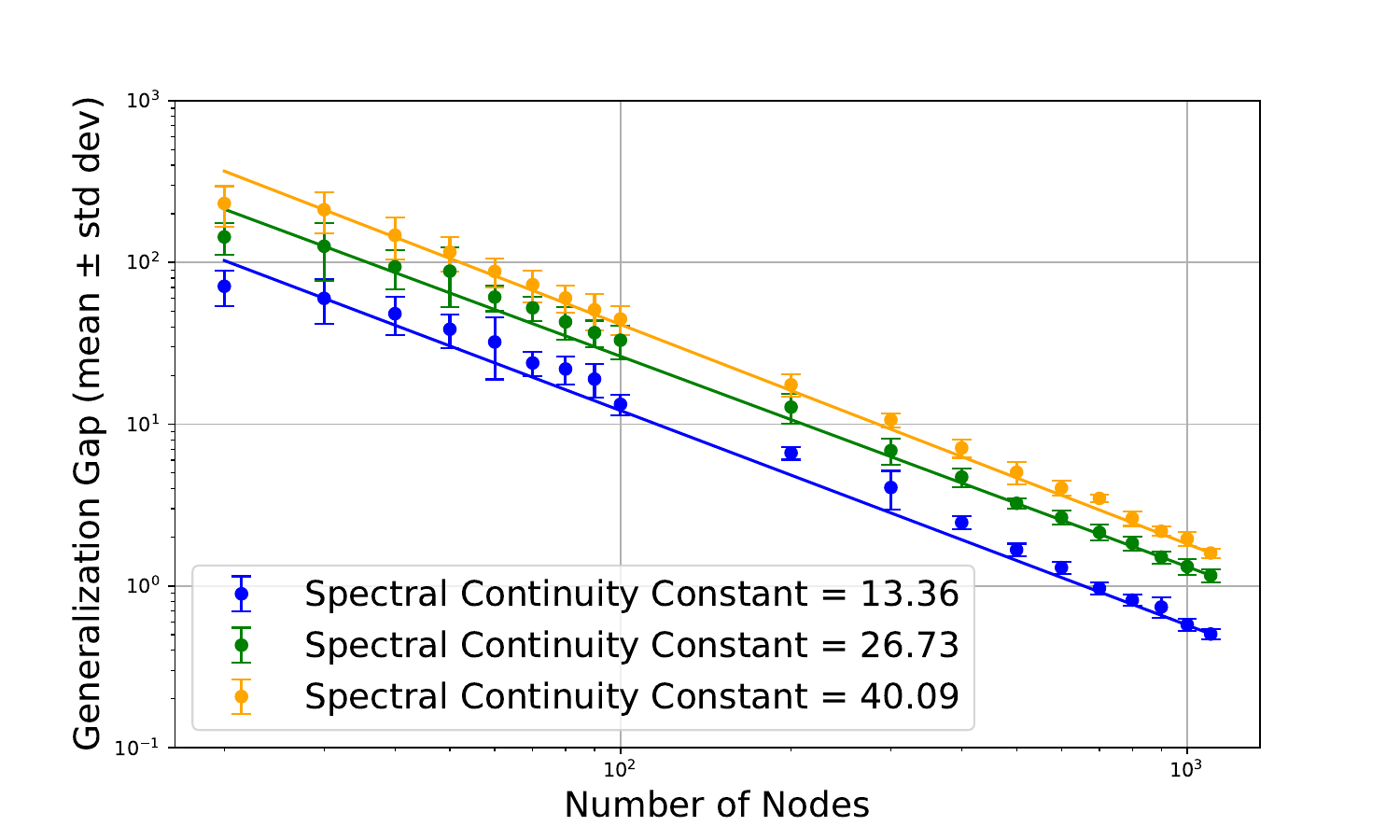}
        \caption{Gen. Gap vs. Num. of Nodes}
        \label{fig:syntethic_subfig3}
    \end{subfigure}
    \hfill
    \begin{subfigure}{0.3\textwidth}
        \centering
        \includegraphics[width=\linewidth]{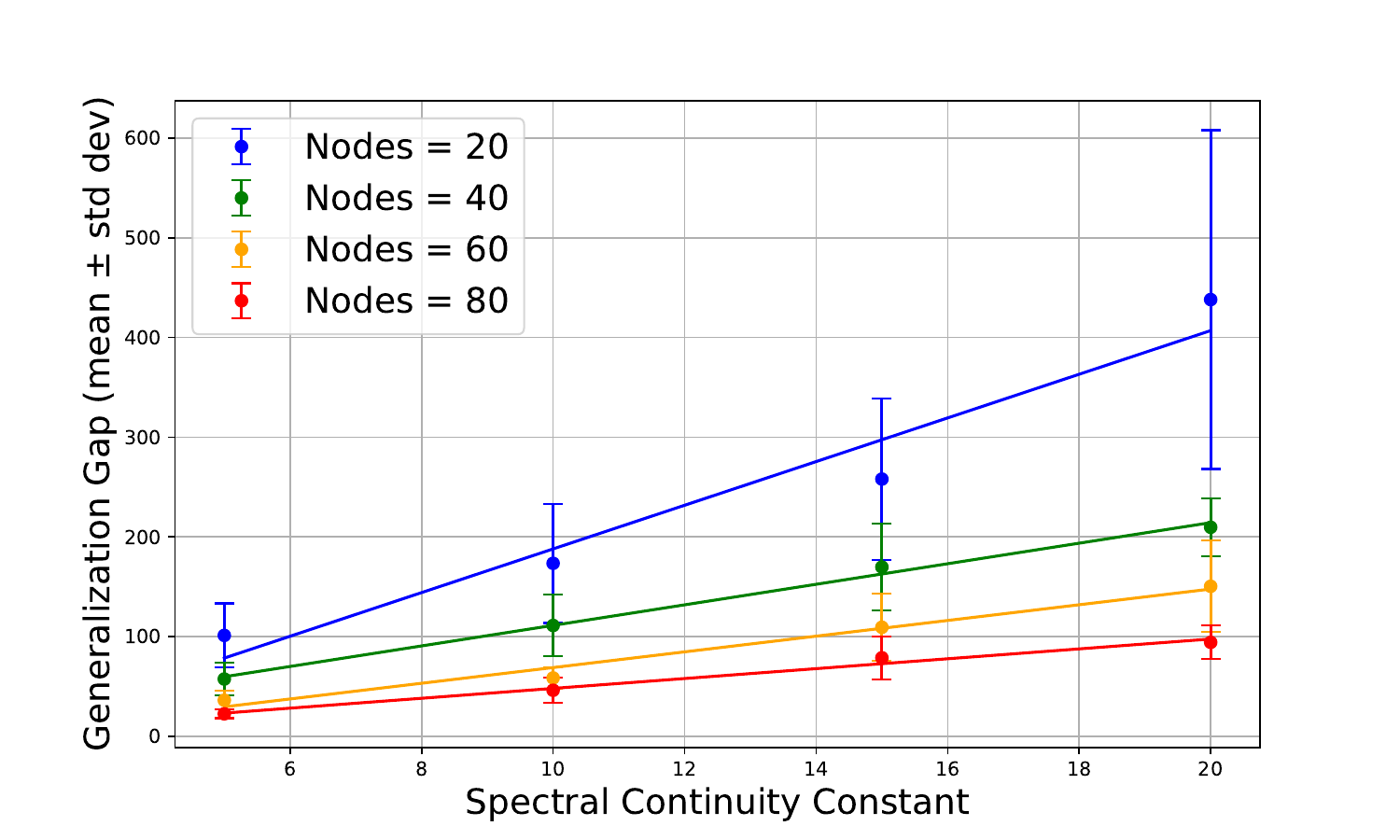}
        \caption{Gen. Gap vs. Continuity Constants}
        \label{fig:syntethic_subfig4}
    \end{subfigure}
    \caption{Synthetic experimental results are shown on the uniformly sampled chair manifold. We construct a graph with different numbers of nodes, fix the weights of a GNN, and compute the generalization gap. 
    % In , we show the manifold of a chair, and in Figure \ref{fig:syntethic_subfig2} we present a graph with $1000$ points sampled uniformly from the chair manifold.  
    % To compute the generalization gap, we fix the value of the GNN coefficients, and calculate the loss \eqref{eqn:empirical-graphloss-node} for the different number of nodes. 
    We construct the graph by computing the edges for nodes that are $\epsilon$ close (cf. \eqref{def:epsilon}). 
% In Figure \ref{fig:syntethic_subfig3} we show the generalization gap as a function of the number of nodes for GNNs with different Lipschitz constants. In Figure \ref{fig:syntethic_subfig4} we show the generalization gap as a function of the Lipschitz constants for different numbers of nodes in the underlying graph. As can be seen in these two Figures, the generalization gap is linear in the logarithmic scale (power law). 
In Figure  \ref{fig:syntethic_subfig3}, we fix the spectral continuity constant (see Assumption \ref{ass:low-pass}) and vary the number of nodes. As our theory predicts, we see that a smaller spectral continuity constant translates into a smaller generalization gap -- as the blue line is below the green line which is below the orange line. In Figure \ref{fig:syntethic_subfig4} we fix the number of nodes in the graph and vary the spectral continuity constant in the GNN. For the same number of nodes, a larger spectral continuity constant translates into a larger generalization gap. 
}
    \label{fig:syntethic_main}
\end{figure*}
We introduce a novel unified analysis of the generalization of GNNs to unseen nodes and graphs, by relating the GNNs with MNNs in the spectral domain. We further propose restrictions on the discriminability of GNNs from the spectral perspective which results from assumptions on the continuity of the filter frequency response functions. We provide extensive experiments both on synthetic and real-world datasets to verify our generalization conclusions. Our contribution is four-fold:
\begin{enumerate}
    \item We prove the generalization bound of GNNs on graphs generated from an underlying manifold on both node-level (Theorem \ref{thm:generalization-node-gauss}) and graph-level (Theorem \ref{thm:generalization-graph-gauss}) by relating the algebraically equivalent GNNs and MNN in the spectral domain.
    \item We provide novel generalization gap bounds that decrease linearly with the nodes of the graph in the logarithmic scale, and increase linearly with the spectral continuity constants (Assumption \ref{ass:low-pass}) of the filter functions.
    \item We uncover an important trade-off between the discriminability and the generalization gap of GNNs, which guides practical GNN designs.
    \item We verify the dependence of our generalization gaps on parameters, especially the continuity parameter, with a synthetic dataset -- chair manifold -- and eight real-world datasets -- ArXiv, Citeseer, etc.
\end{enumerate}

\section{Related works}
\subsection{ Generalization bounds of GNNs}
\paragraph{Node level tasks} 
We first give a brief recap of the generalization bounds of GNNs on node level tasks. In \cite{scarselli2018vapnik}, the authors give a generalization bound of GNNs with a Vapnik–Chervonenkis dimension of GNNs. 
The authors in \cite{verma2019stability} analyze the generalization of a single-layer GNN based on stability analysis, which is further extended to a multi-layer GNN in \cite{zhou2021generalization}. %scales with the largest eigenvalue of the graph Laplacian. 
In \cite{ma2021subgroup}, the authors give a novel PAC-Bayesian analysis on the generalization bound of GNNs across arbitrary subgroups of training and testing datasets.
The authors derive generalization bounds for GNNs via transductive uniform stability and transductive Rademacher complexity in \cite{esser2021learning, cong2021provable, tang2023towards}. The authors in \cite{yehudai2021local} propose a size generalization analysis of GNNs correlated to the discrepancy between local distributions of graphs. Different from these works, we consider a continuous manifold model when generating the graph data, which is theoretically powerful and realistic when characterizing real-world data. Furthermore, the generalization bounds proved in these works either grow with the size of the graph \cite{esser2021learning,tang2023towards, scarselli2018vapnik}, with the node degree of the graphs \cite{  cong2021provable} or the maximum eigenvalues of the graph \cite{verma2019stability}. Notably, our generalization bound decreases with the size of the graph given that it depends on the spectral properties of the filter functions over the manifold.

% With the limit manifold model introduced, our generalization bound decreases with the size of the graph while depending on the spectral properties of the filter functions, which can be designed appropriately.} 

\paragraph{Graph level tasks} 
There are also related works on the generalization analysis of GNNs on graph-level tasks. In \cite{garg2020generalization}, the authors form the generalization bound via Rademacher complexity.
The authors in \cite{liao2020pac} build a PAC-Bayes framework to analyze the generalization capabilities of graph convolutional networks \cite{kipf2016semi} and message-passing GNNs \cite{gilmer2017neural}, based on which the authors in \cite{ju2023generalization} improve the results and prove a lower bound. The bounds either grow with the number of nodes \cite{liao2020pac} or the degree of the graphs \cite{garg2020generalization} while our bound decreases with the number of nodes in the graph given that it better approximates the underlying model -- the manifold. The works 
in \cite{maskey2022generalization, maskey2024generalization,levie2024graphon}  are most related to ours, which also consider the generalization of GNNs on a graph limit model, in their case a \textit{graphon}. Different from our setting, the authors see the graph limit as a random continuous model. They study the generalization of graph classification problems with message-passing GNNs with graphs belonging to the same category sampled from a continuous limit model. The generalization bound grows with the model complexity and decreases with the number of nodes in the graph. We show that a GNN trained on a single graph sampled from each manifold is enough, and can generalize and classify unseen graphs sampled from the manifold set.

\subsection{Neural networks on manifolds} Geometric deep learning has been proposed in \cite{bronstein2017geometric} with neural network architectures raised in manifold space. The authors in \cite{monti2017geometric} and \cite{chakraborty2020manifoldnet} provide neural network architectures for manifold-valued data.  In \cite{wang2024stability} and \cite{wang2022convolutional}, the authors define convolutional operation over manifolds and see the manifold convolution as a generalization of graph convolution, which establishes the limit of neural networks on large-scale graphs as manifold neural networks (MNNs). The authors in \cite{wang2023geometric, chew2023convergence,johnson2025manifoldfiltercombinenetworks} further establish the relationship between GNNs and MNNs with non-asymptotic convergence results for different graph constructions. Some studies have used graph samples to infer properties of the underlying manifold itself. These properties include the validity of the manifold assumption \cite{fefferman2016testing}, the manifold dimension \cite{farahmand2007manifold} and the complexity of these inferences \cite{narayanan2009sample,aamari2021statistical}. Other research has focused on prediction and classification using manifolds and manifold data, proposing various algorithms and methods. Impressive examples include the Isomap algorithm \cite{choi2004kernel,wu2004extended,yang2016multi} and other manifold learning techniques \cite{talwalkar2008large}. These techniques aim to infer manifold properties without analyzing the generalization capabilities of GNNs operated on the sampled manifold.

\section{Preliminaries}
 
\subsection{Graph neural networks}

\paragraph{Setup } An undirected graph $\bbG = (\ccalV, \ccalE, \ccalW)$ contains a node set $\ccalV$ with $N$ nodes and an edge set $\ccalE \subseteq \ccalV\times \ccalV$. The weight function $\ccalW: \ccalE \rightarrow \reals$ assigns values to the edges. We define the graph Laplacian $\bbL =\text{diag}(\bbA \textbf{1})-\bbA$ where   $\bbA\in \reals^{N\times N}$ is the weighted adjacency matrix. Graph signals are functions mapping nodes to a feature value. We write it as a vector $\bbx \in \reals^N$, with each entry $[\bbx]_i$ representing the function value on node $i$. 
%A graph shift operator (GSO) \citep{ortega2018graph, shuman2013emerging} $\bbL \in \reals^{N\times N}$ is a graph matrix% with $[\bbL]_{ij} \neq 0$ if and only if $(i,j)\in \ccalE$ or $i=j$
% , e.g., the adjacency matrix, or the  

\paragraph{Graph convolutions and frequency response} 
A graph convolutional filter $\bbh_\bbG$ is composed of consecutive graph shifts by graph Laplacian, defined as $\bbh_\bbG(\bbL) \bbx = \sum_{k=0}^{K-1} h_k \bbL^k \bbx$ with 
%A graph convolution is defined based on a consecutive graph shift operation. A graph convolutional filter $\bbh_\bbG$ \cite{gama2019convolutional,ortega2018graph,sandryhaila2013discrete} with filter coefficients $\{h_k\}_{k=0}^{K-1}$ is formally defined as  
% \begin{equation}
%     \label{eqn:graph_convolution}
% \bbh_\bbG(\bbL) \bbx = \sum_{k=0}^{K-1} h_k \bbL^k \bbx,
% \end{equation}
$\{h_k\}_{k=0}^{K-1}$ as filter parameters. We replace $\bbL$ with eigendecomposition $\bbL = \bbV \bm\Lambda \bbV^H$, where $\bbV$ is the eigenvector matrix and $\bm\Lambda$ is a diagonal matrix with eigenvalues $\{\lambda_{i,N}\}_{i=1}^N$ as the entries. The spectral representation of a graph filter is
\begin{equation}
\label{eqn:graph_convolution_spectral}
    \bbV^H \bbh_\bbG(\bbL) \bbx =  \sum_{k=1}^{K-1} h_k \bm\Lambda^k \bbV^H \bbx = \hat{h}(\bm\Lambda)\bbV^H \bbx.
\end{equation}
This leads to a point-wise frequency response of the graph convolution as $\hat{h}(\lambda)= \sum_{k=0}^{K-1} h_k \lambda^k$.
%, depending only on the weights $\{h_k\}_{k=0}^{K-1}$ and on the eigenvalues $\bm\Lambda$ of $\bbL$.

\paragraph{Graph neural networks } A graph neural network (GNN) is a layered architecture, where each layer consists of a bank of graph convolutional filters followed by a point-wise nonlinearity $\sigma:\reals\to\reals$. Specifically, the $l$-th layer of a GNN that produces $F_l$ output features $\{\bbx_l^p\}_{p=1}^{F_l}$ with $F_{l-1}$ input features $\{\bbx^q_{l-1}\}_{q=1}^{F_{l-1}}$ is written as 
\begin{equation} 
\label{eqn:gnn-eq}
    \bbx_l^p = \sigma\left(\sum_{q=1}^{F_{l-1}} \bbh_\bbG^{lpq}(\bbL) \bbx^q_{l-1} \right),
\end{equation}
for each layer $l=1,2\cdots, L$. The graph filter $\bbh_\bbG^{lpq}(\bbL)$ maps the $q$-th feature of layer $l-1$ to the $p$-th feature of layer $l$. We denote the GNN as a mapping $\bm\Phi_\bbG(\bbH, \bbL, \bbx)$, where $\bbH\in \ccalH\subset \reals^P$ denotes a set of the graph filter coefficients with a finite $P$ dimension at all layers and $\ccalH$ denotes the set of all possible parameter sets.

\subsection{Manifold neural networks}\label{sec:mnn}
\paragraph{Setup}We consider a $d$-dimensional compact, smooth and differentiable Riemannian submanifold $\ccalM$ embedded in a $\mathsf{M}$-dimensional space $\reals^\mathsf{M}$ with finite volume. %\red{It is confusing to use $\reals^\mathsf{N}$, can we use $M,\reals^\mathsf{M}$ }. 
This induces a measure $\mu$ which has a non-vanishing Lipschitz continuous density $\rho$ with respect to the Riemannian volume over the manifold with $\rho:\ccalM\rightarrow(0,\infty)$, assumed to be bounded as $0<\rho_{min}\leq \rho(x) \leq \rho_{max}<\infty$ for all $x\in\ccalM$. The manifold data supported on each point $x\in\ccalM$ is defined by scalar functions $f:\ccalM\rightarrow \reals$ \citep{wang2024stability}. %In other words, we define manifold data in a point-wise fashion by attaching value $f(x)$ to the datum at each point $x \in \ccalM$. 
We use $L^2(\ccalM)$ to denote $L^2$ functions over $\ccalM$ with respect to measure $\mu$. The manifold with probability density function $\rho$ is equipped with a weighted Laplace operator \citep{grigor2006heat}, generalizing the Laplace-Beltrami operator as
\begin{equation}
    \label{eqn:weight-Laplace}
    \ccalL_\rho f = -\frac{1}{2\rho} \text{div}(\rho^2 \nabla f),
\end{equation}
with $\text{div}$ denoting the divergence operator of $\ccalM$ and $\nabla$ denoting the gradient operator of $\ccalM$ \citep{bronstein2017geometric, gross2023manifolds}. 
\paragraph{Manifold convolutions and frequency responses }The manifold convolution operation is defined relying on the Laplace operator $\ccalL_\rho$ and on the heat diffusion process over the manifold \citep{wang2024stability}. For a function $f\in L^2(\ccalM)$ as the initial heat condition over $\ccalM$, the heat condition diffused by a unit time step can be explicitly written as $e^{-\ccalL_\rho} f$. %Analogous to graph convolution, 
A manifold convolutional filter \citep{wang2024stability} can be defined in a diffuse-and-sum manner as  
\begin{align} \label{eqn:manifold-convolution}
   g(x) = \bbh(\ccalL_\rho)f(x) =\sum_{k=0}^{K-1} h_ke^{-k\ccalL_\rho}f(x) \text{,}
\end{align}
with the $k$-th diffusion scaled with a filter parameter $h_k\in\reals$. We consider the case in which the Laplace operator is self-adjoint, positive-semidefinite and the manifold $\ccalM$ is compact. In this case, $\ccalL_\rho$ has real, positive and discrete eigenvalues $\{\lambda_i\}_{i=1}^\infty$, written as $\ccalL_\rho \bm\phi_i =\lambda_i \bm\phi_i$ where $\bm\phi_i$ is the eigenfunction associated with eigenvalue $\lambda_i$. The eigenvalues are ordered in increasing order as $0=\lambda_1\leq \lambda_2\leq \lambda_3\leq \hdots$, and the eigenfunctions are orthonormal and form an eigenbasis of $L^2(\ccalM)$. When mapping a manifold signal onto the eigenbasis $ [\hat{f} ]_i=\langle f, \bm\phi_i\rangle_{ \ccalM} = \int_\ccalM f(x) \bm\phi_i(x)\text{d}\mu(x)$, the manifold convolution can be seen in the spectral domain as
\begin{align}
    [\hat{g}]_i = \sum_{k=0}^{K-1} h_k e^{-k\lambda_i}  [\hat{f}]_i\text{.}
\end{align}
Hence, the frequency response of manifold filter is given by $\hat{h}(\lambda)=\sum_{k=0}^{K-1} h_k e^{-k\lambda}$.
%, depending only on the filter coefficients $\{h_k\}_{k=0}^{K-1}$ and eigenvalues of $\ccalL_\rho$. 

\paragraph{Manifold neural networks } A manifold neural network (MNN) is constructed by cascading $L$ layers, each of which contains a bank of manifold convolutional filters and a pointwise nonlinearity $\sigma
:\reals \rightarrow\reals$. The output manifold function of each layer $l=1,2\cdots, L$ can be explicitly denoted as
\begin{equation}\label{eqn:mnn}
f_l^p(x) = \sigma\left( \sum_{q=1}^{F_{l-1}} \bbh_l^{pq}(\ccalL_\rho) f_{l-1}^q(x)\right),
\end{equation}
where $f_{l-1}^q$, $1 \leq q \leq F_{l-1}$ is the $q$-th input feature from layer $l-1$ and $f_l^p$, $1 \leq p \leq F_l$ is the $p$-th output feature of layer $l$. %In each layer, manifold convolutional filters map $F_{l-1}$ input features to $F_l$ output features. 
We denote MNN  as a mapping $\bbPhi(\bbH,\ccalL_\rho,f)$% for the ease of representation
, where $\bbH\in\ccalH\subset \reals^P$ is a collective set of filter parameters in all the manifold convolutional filters.

% \section{Problem Formulation}
% \section{Generalization gap of graph neural networks}
\section{Generalization analysis of GNNs based on manifolds}
% \subsection{Graphs sampled from manifolds}
We consider a manifold $\ccalM$ as defined in Section \ref{sec:mnn},
% .
% an embedded manifold $\ccalM\subset \reals^\mathsf{M}$ which is compact, smooth, and differentiable without boundary. The embedding induces a probability measure $\mu$ over the manifold with density function $\rho:\ccalM\rightarrow(0,\infty)$, which is assumed to be bounded as $0<\rho_{min}\leq \rho(x) \leq \rho_{max}<\infty$ for all $x\in\ccalM$. 
with a weighted Laplace operator $\ccalL_\rho$ as defined in \eqref{eqn:weight-Laplace}.
% (see  \eqref{eqn:weight-Laplace}), which is self-adjoint and positive-semidefinite. %Considering that manifold $\ccalM$ is compact without boundary, the operator $\ccalL_rho$ possesses a real discrete spectrum $\{\lambda_i\}_{i=1}^\infty$ with the eigenvalues $\lambda_i$ and the corresponding eigenfunctions $\phi_i$ satisfying $\ccalL \bm\phi_i =\lambda_i \bm\phi_i$. By projecting a manifold signal $f$ onto the eigenfunction, we can write the \emph{frequency representation} $\hat{f}$ as $[\hat{f}]_i= \langle f, \bm\phi_i \rangle_{ \ccalM }$ with the inner product defined in \eqref{eqn:innerproduct}. 
Since functions $f\in L^2(\ccalM)$ characterize information over manifold $\ccalM$,
we restrict our analysis to a finite-dimensional subset of $L^2(\ccalM)$ up to some eigenvalue of $\ccalL_\rho$, defined as a bandlimited signal.
% A bandlimited manifold signal is explicitly defined as follows.
\begin{definition}
\label{def:band}
      A manifold signal $f\in L^2(\ccalM)$ is bandlimited if there exists some $\lambda>0$ such that for all eigenpairs $\{\lambda_i, \bm\phi_i\}_{i=1}^\infty$ of the weighted Laplacian $\ccalL_\rho$ when $\lambda_i>\lambda$, we have $\langle f, \bm\phi_i \rangle_\ccalM = 0$.
\end{definition}

% The spectrum and eigenbasis of $\ccalL_\rho$ also help to view the manifold filter $\bbh(\ccalL_\rho)$ defined in \eqref{eqn:manifold-convolution} in the spectral domain. By projecting both the input and output manifold signals onto the eigenfunction $\bm\phi_i$, we can get
% \begin{equation}\label{eqn:projection}
%     [\hat{g}]_i = \sum_{k=0}^{K-1} h_k e^{-k\lambda_i}  [\hat{f}]_i \text{.}
% \end{equation}
% The function solely dependent on $\lambda_i$ is defined as the \emph{frequency response} of the filter $\bbh(\ccalL_rho)$, which can be stated formally as follows.

% %%%%%%%%%%%%%%%%%%%%%%%%%%%%%%%%%%%%%%%%%%%%%%%%
% %%%%%%%%%%%%%%%%%% DEFINITION %%%%%%%%%%%%%%%%%% 
% %%%%%%%%%%%%%%%%%%%%%%%%%%%%%%%%%%%%%%%%%%%%%%%%
% \begin{definition}[Frequency response of manifold filter]
% \label{def:frequency-response}
% The frequency response of the filter $\bbh(\ccalL)$ is given by
% \begin{equation}\label{eqn:operator-frequency}
% \hat{h}(\lambda)=\sum_{k=0}^{K-1} h_k e^{-k\lambda } \text{,}
% \end{equation}
% which leads \eqref{eqn:projection} to
% $
% [\hat{g}]_i = \hat{h}(\lambda_i)[\hat{f}]_i \text{.}$
% \end{definition}

Suppose we are given a set of $N$ i.i.d. randomly sampled points $X_N = \{x_i\}_{i=1}^N$ over $\ccalM$, with $x_i\in\ccalM$ sampled according to measure $\mu$. %The discrete measure is defined to be $\mu'=\sum_{i=1}^N \mu_i\delta_{x_i}$.
We construct a graph $\bbG(\ccalV,\ccalE,\ccalW)$ on these $N$ sampled points $X_N$, where each point $x_i$ is a vertex of graph $\bbG$, i.e. $\ccalV = X_N$. Each pair of vertices $(x_i,x_j)$ is connected with an edge while the weight attached to the edge $\ccalW(x_i,x_j)$ is determined by a kernel function $K_\epsilon$. The kernel function is decided by the Euclidean distance $\|x_i-x_j\|$ between these two points. The graph Laplacian denoted as $\bbL_N$ can be calculated based on the weight function \citep{merris1995survey}. The constructed graph Laplacian with an appropriate kernel function has been proved to approximate the Laplace operator $\ccalL_\rho$ of $\ccalM$ \citep{calder2022improved, belkin2008towards, dunson2021spectral}. We present the following two definitions of $K_\epsilon$. 

\begin{definition}[Gaussian kernel based graph \citep{belkin2008towards}]\label{def:gauss}
    The graph $\bbG(X_N, \ccalE, \ccalW)$ can be constructed in $\quad (x_i,x_j)\in\ccalE$, as a dense graph degree when the kernel function is defined as
\begin{align}
\label{eqn:gauss_kernel}
\ccalW(x_i,x_j) &= K_{\epsilon,1}\left(\frac{\|x_i-x_j\|^2}{\epsilon}\right)\\
&= \frac{1}{N}\frac{1}{\epsilon^{d/2+1}(4\pi)^{d/2}} e^{-\frac{\|x_i-x_j\|^2}{4\epsilon}}.
\end{align}
\end{definition} 
The weight function of a Gaussian kernel based graph is defined on unbounded support (i.e. $[0, \infty)$), which connects $x_i$ and $x_j$ regardless of the distance between them. This results in a dense graph with $N^2$ edges.
In particular, this Gaussian kernel based graph has been widely used to define the weight value function due to the good approximation properties of the corresponding graph Laplacians to the manifold Laplace operator \citep{dunson2021spectral, belkin2008towards, xie2013multiple}.

\begin{definition}[$\epsilon$-graph \citep{calder2022improved}]\label{def:epsilon}
The graph $\bbG(X_N, \ccalE, \ccalW)$ can be constructed as an $\epsilon$-graph with the kernel function defined as
\begin{align}
\label{eqn:compact_kernel}
\ccalW(x_i,x_j) &= K_{\epsilon,2}\left(\frac{\|x_i-x_j\|^2}{\epsilon}\right)\\
&= \frac{1}{N}\frac{d+2}{\epsilon^{d/2+1}\alpha_d} \mathbbm{1}_{[0,1]}\left(\frac{\|x_i-x_j\|^2}{\epsilon}\right),
\end{align}  
with $\quad (x_i,x_j)\in\ccalE$, where $\alpha_d$ is the volume of a unit ball of dimension $d$ and $\mathbbm{1}$ is the characteristic function.
\end{definition} 
The weight function of an $\epsilon$-graph is defined on a bounded support, i.e., only nodes that are within a certain distance of one another can be connected by an edge. It has also been shown to provide a good approximation of the manifold Laplace operator \citep{calder2022improved}. 
\begin{figure}
  \centering
        %!TEX root = ../../blackstoneSlides.tex

\def \thisplotscale {3} % Reduced scale factor for smaller plots
\def \unit {\thisplotscale cm}

\def \frequencyresponse 
 { 0.7*exp(-(0.8*(x-0.5))^2)+ 0.5* 1/((x+3)^2)+0.3*exp(-(0.8*(x-2))^2) +0.3*exp(-(0.8*(x))^2)
       }

\def \frequencyresponsebound 
 {0.8 * 1/(0.3*(x)^2)}

\resizebox{6cm}{!}{ % Reduced width while maintaining aspect ratio
\begin{tikzpicture}[x = 1*\unit, y=1*\unit]
\begin{axis}[scale only axis,
             width  = 2.4*\unit,
             height = 0.8*\unit,
             xmin = 0.1, xmax=7.5,
             xtick = { 0.1, 0, 2.63, 7.5},
             xticklabels = {0, \red{$\lambda_{1}$}, 
                            \red{$\lambda_{i}$}, 
                            $\qquad \lambda$},
             ymin = -0, ymax = 1.15,
             ytick = {1.15},
             yticklabels = {$\hat{h}(\lambda)$},
             enlarge x limits=false]
\node[color = blue] at (600, 0.2) {$\ccalO(\lambda^{-2})$};
\addplot+[samples at = {0.00, 0.91, 1.57, 
                        2.63}, 
          color = red!60, 
          ycomb, thick,
          mark=otimes*, 
          mark options={red!60}]
         {\frequencyresponse};

\addplot[ domain=0.1:7.5, 
          samples = 80, 
          color = black,
          line width = 1.2]
         {\frequencyresponse};
\addplot[ dashed, domain=0.1:7.5, 
          samples = 80, 
          color = blue,
          line width = 1.2]
         {\frequencyresponsebound};

\end{axis}
\end{tikzpicture}
}
        \caption{Frequency response illustration}
        \label{fig:frequency_response}
\end{figure}

\subsection{Manifold label prediction via node label prediction}
%Let $\ccalM\subset \reals^\mathsf{N}$ be an embedded manifold with weighted Laplace operator $\ccalL_\rho$. 
Suppose we have an input manifold signal $f\in L^2(\ccalM)$ and a label (i.e. target) manifold signal $g\in L^2(\ccalM)$ over $\ccalM$. With an MNN $\bm\Phi(\bbH,\ccalL_\rho, \cdot)$, we predict the target value $g(x)$ based on input $f(x)$ at each point $x\in\ccalM$. By sampling $N$ points $X_N$ over this manifold, we can approximate this problem in a discrete graph domain. Consider a graph $\bbG(X_N,\ccalE,\ccalW)$ constructed with $X_N$ as either a Gaussian kernel based graph (Definition \ref{def:gauss}) or an $\epsilon$-graph (Definition \ref{def:epsilon}) equipped with the graph Laplacian $\bbL_N$. Suppose we are given graph signal $\{\bbx , \bby \}$ sampled from $\{f,g\}$ to train a GNN $\bm\Phi_\bbG(\bbH,\bbL_N,\cdot)$, explicitly written as 
\begin{equation}
    [\bbx]_i = f(x_i),\qquad [\bby]_i=g(x_i) \quad \text{for all }x_i \in X_N.
\end{equation}

We assume that the filters in MNN $\bm\Phi(\bbH,\ccalL_\rho, \cdot)$ and GNN $\bm\Phi_\bbG(\bbH,\bbL_N,\cdot)$ satisfy a continuity assumption as follows, which is illustrated in Figure \ref{fig:frequency_response}.
%are low-pass filters, which are defined explicitly as follows and illustrated in Figure \ref{fig:frequency_response}.

% Manifold neural networks (MNNs) and graph neural networks (GNNs), as defined in \eqref{eqn:mnn} and \eqref{eqn:gnn-eq} respectively, operate on manifold $\ccalM$ to process the information on a continuous domain and on graph $\bbG$ to process the information on a discrete domain. MNNs and GNNs share similar architectures, with the only difference in the operator implemented in the filters.
% %as shown in \eqref{eqn:manifold-convolution} and \eqref{eqn:graph_convolution}. 
% %We denote $M$ as the cardinality of the limited spectrum of $\ccalL_\rho$, i.e. $M =\# \{\lambda_i <\lambda_M\}$. 

 % \begin{figure} [h]  
 % \centering \input{figures/frequency_response} 
 % \caption{The $x$-axis stands for the spectrum, with each sample representing an eigenvalue. The black line illustrates a low-pass filter with red lines initiating the frequency responses of a filter on a given manifold. The blue dotted line shows the upper bound of the frequency response in Definition \ref{def:low-pass}.}
 %      \end{figure} 
 \begin{assumption}\label{ass:low-pass}
     The frequency response function of the filter satisfies 
\begin{equation}
        \label{eqn:low-pass}
        \left|\hat{h}(\lambda)\right| =\ccalO \left(\lambda^{-d}\right),\quad \left|\hat{h}'(\lambda)\right| \leq C_L \lambda^{-d-1},\quad \lambda\in (0,\infty),
    \end{equation}
    with $C_L$ a \emph{spectral continuity constant} that regularizes the smoothness of the filter function.
 \end{assumption}
% \begin{definition}\label{def:low-pass}
% A filter is a low-pass filter if its frequency response function satisfies 
% \begin{equation}
%         \label{eqn:low-pass}
%         \left|\hat{h}(\lambda)\right| =\ccalO \left(\lambda^{-d}\right),\quad \left|\hat{h}'(\lambda)\right| \leq C_L \lambda^{-d-1},\quad \lambda\in (0,\infty),
%     \end{equation}
%     with $C_L$ a \emph{spectral continuity constant} that regularizes the smoothness of the filter function.
% \end{definition}
\vspace{-2mm}
To introduce the first of our two main results, we require introducing two assumptions. 
%%%%%%%%%%%%%%%%%%%%%%%%%%%%%%%%%%%%%%%%%%%%%%%%
%%%%%%%%%%%%%%%%%% ASSUMPTION %%%%%%%%%%%%%%%%%% 
%%%%%%%%%%%%%%%%%%%%%%%%%%%%%%%%%%%%%%%%%%%%%%%%
\begin{assumption}(Normalized Lipschitz nonlinearity)\label{ass:activation}
 The nonlinearity $\sigma$ is normalized Lipschitz continuous, i.e., $|\sigma(a)-\sigma(b)|\leq |a-b|$, with $\sigma(0)=0$.
\end{assumption}
% We note that this assumption 

%A GNN trained on $\bbG$ is denoted as $\hat\bby = \bm\Phi(\bbH, \bbL_N, \bbx) \in \reals^N$, where $\bbH$ is the filter coefficient set, $\bbL_N \in\reals^{N\times N}$ is the graph Laplacian and $\bbx\in\reals^N$ is the input graph signal while $\bby\in\reals^N$ is the target output graph signal. 
% A loss function $\ell$ is employed to measure the prediction performance point-wisely. %between $[\bm\Phi(\bbH,\bbL_N,\bbx_i)]_i$ and $[\bby_i]_i$ point-wisely. 
%%%%%%%%%%%%%%%%%%%%%%%%%%%%%%%%%%%%%%%%%%%%%%%%
%%%%%%%%%%%%%%%%%% ASSUMPTION %%%%%%%%%%%%%%%%%% 
%%%%%%%%%%%%%%%%%%%%%%%%%%%%%%%%%%%%%%%%%%%%%%%%
\begin{assumption}(Normalized Lipschitz loss function)\label{ass:loss}
 The loss function $\ell$ is normalized Lipschitz continuous, i.e., $|\ell(y_i,y)-\ell(y_j,y)|\leq |y_i-y_j|$, with $\ell(y,y)=0$.
\end{assumption}
Assumption \ref{ass:activation} is satisfied by most activations used in practice such as ReLU, modulus and sigmoid. 
%For the loss function, we import an assumption on its continuity as normalized Lipschitz for the ease of presentation.
% The same filter parameter set can be transferred to a MNN by replacing the graph Laplacian as Laplace-Beltrami operator $\ccalL$ and the input signal as a manifold signal $f\in L^2(\ccalM)$. The target graph signal $\bby\in \reals^N$ is supposed to be sampled from a target manifold signal $g\in L^2(\ccalM)$, i.e. $\bby = \bbP_N g$.

The generalization gap is evaluated between the \emph{empirical risk} over the discrete graph model and the \emph{statistical risk} over manifold model, with the manifold model viewed as a statistical model since the expectation of the sampled point is with respect to the measure $\mu$ over the manifold. 
% Suppose we train the GNN on  input and target graph signals as $\{\bbx ,\bby \}$, where $\bbx, \bby \in \reals^N$ are sampled from $\{f$
%only has an entry in the $i$-th element, as
% \begin{equation}
%   [\bbx_i]_j =
%     \begin{cases}
%       f(x_i) & \text{if $i\leq j$}\\
%       0 & \text{otherwise}
%     \end{cases}       \qquad \qquad 
%      [\bby_i]_j =
%     \begin{cases}
%       g(x_i) & \text{if $i\leq j$}\\
%       0 & \text{otherwise.}
%     \end{cases}  
% \end{equation}
%\red{I know we discussed this, but it is not very intuitive. If this is the way we write the problem, there is no graph at all. We need to rewrite this part. }
The empirical risk over the sampled graph that we trained to minimize is therefore defined as 
\begin{align}
    \label{eqn:empirical-graphloss-node}
    R_\bbG(\bbH) =  \frac{1}{N}\sum_{i=1}^N\ell \left(  [\bm\Phi_\bbG(\bbH, \bbL_N, \bbx )]_i  ,  [\bby ]_i \right).
\end{align}
The statistical risk over the manifold is defined as 
\begin{align}
    \label{eqn:statistical-manifoldloss-node}
    R_\ccalM(\bbH) =\int_\ccalM \ell \left( \bm\Phi(\bbH, \ccalL_\rho, f)(x), g(x)\right) \text{d}\mu(x).
\end{align}
The generalization gap is defined to be
\begin{align}
    \label{eqn:generalization-gap}
    GA = \sup_{\bbH\in \ccalH} \left| R_\ccalM(\bbH) -   R_\bbG(\bbH)\right|.
\end{align}

\begin{theorem}
\label{thm:generalization-node-gauss}
    %Let $\ccalM\subset \reals^\mathsf{N}$ be an embedded manifold with weighted Laplace operator $\ccalL_\rho$. Let the input manifold signal $f\in L^2(\ccalM)$ and target manifold signal $g\in L^2(\ccalM)$ be functions over $\ccalM$. 
    %Consider a graph constructed with $N$ nodes sampled i.i.d. with measure $\mu$ over $\ccalM$ equipped with the graph Laplacian $\bbL_N$. Suppose we are given pairs of graph signals $\{\bbx_i, \bby_i\}_{i=1}^N$ sampled from $\{f,g\}$ to train a GNN $\bm\Phi(\bbH^*,\bbL_N,\cdot)$. Let $\bm\Phi(\bbH^\dagger,\ccalL_\rho,\cdot)$ be an MNN on $\ccalM$ \eqref{eqn:mnn} trained with input and output manifold functions $\{f, g\}$. 
    %Both the GNN and MNN contain low-pass filters and normalized Lipschitz nonlinearities in each layer. 
    Suppose the GNN and MNN with filters satisfying Assumption \ref{ass:low-pass} have $L$ layers with $F$ features in each layer and the input signal is bandlimited (Definition \ref{def:band}). Under Assumptions \ref{ass:activation} and \ref{ass:loss} it holds in probability at least $1-\delta$ that 
    \begin{align}
        GA\leq& F^L C_3 \left(\frac{\log N}{N}\right)^{\frac{1}{d}} \\
        &+ L F^{L-1}\left((C_1C_L+C_2) \sqrt{\frac{\epsilon }{{N}}}
     +  \frac{\pi^2\sqrt{\log(1/\delta)}}{6N}  \right) \nonumber
     ,
    \end{align}
when $d\geq 3$. If $d=2$, the first term would be $F^L C_3 \frac{(\log N) ^{3/4}}{N^{1/2}}$, with $C_1$, $C_2$, and $C_3$ depending on the geometry of $\ccalM$, $C_L$ is the spectral continuity constant in Assumption \ref{ass:low-pass}.
    \begin{enumerate}
    \item When the graph is constructed with a Gaussian kernel \eqref{eqn:gauss_kernel}, then $\epsilon \sim \left(\frac{\log(C/\delta)}{N} \right)^{\frac{2}{d+4}}$.
    %and the graph is dense with $\Theta(N)$ degree. 
    \vspace{-1mm}
    \item When the graph is constructed as an $\epsilon$-graph as \eqref{eqn:compact_kernel}, then $\epsilon \sim \left(\frac{\log(CN/\delta)}{N} \right)^{\frac{2}{d+4}}$.
    %and the graph is relatively sparse with $\Theta(\log N)$ degree.
    \end{enumerate}
\end{theorem}
\begin{proof}
    See Appendix \ref{app:proof-node} for proof and the definitions of $C_1$, $C_2$ and $C_3$.
\end{proof}

%\red{I would remove the "and the graph is dense with ..." in both Theorems, and both cases.}
\vspace{-1mm}

Theorem \ref{thm:generalization-node-gauss} shows that the generalization gap decreases approximately linearly with the number of nodes $N$ in the logarithmic scale, that is, $\log(GA) = \tilde \ccalO(-\log N)$ with $\tilde\ccalO$ as the $\ccalO$ notation that ignores logarithmic orders, and that it also increases with the dimension of the underlying manifold $d$. 
Another observation is that the generalization gap scales with the size of the GNN architecture. 
Most importantly, we note the bound increases linearly with the spectral continuity constant $C_L$ (Assumption \ref{ass:low-pass}) -- a smaller $C_L$ leads to a smaller generalization gap bound, and thus a better generalization capability. While a smaller $C_L$ leads to a smoother GNN, it discriminates fewer spectral components and, therefore, possesses worse discriminability. Consequently, we may observe a larger training loss with these smooth filters, as filters with worse discriminability encompass a smaller hypothesis function class and deteriorate the GNNs' approximation to the target functions during training. %which means we may end up with \red{a larger training loss} with these smooth filters. This shows an interesting trade-off between the generalization and discriminative capabilities. 
Since the testing loss can be upper bounded by the sum of training loss and the bound of generalization gap, on a smoother GNN (a smaller $C_L$), the performance on the training data will be closer to the performance on unseen testing data. 
Therefore, having a GNN with a smaller spectral continuity constant $C_L$ can guarantee more generalizable performance over unseen data from the same manifold. This also indicates that similar testing performance can be achieved by either a GNN with smaller training loss and worse generalization or a GNN with larger training loss and better generalization.
%However, there exists a trade-off between the discriminability and the generalization gap -- the smoother the GNN the smaller the generalization gap it will attain at the cost of a worse training performance given by the lack of discriminability. 
% and larger $N$ yields a larger training loss but a smaller generalization gap. 
In all, this indicates that there exists an optimal point to take the best advantage of the trade-off between a smaller generalization gap and better discriminability, resulting in a smaller testing loss decided by the spectral continuity constant of the GNN. 

\subsection{Manifold classification via graph classification }
Suppose we have a set of manifolds $\{\ccalM_k\}_{k=1}^K$, each of which is $d_k$-dimensional, smooth, compact, differentiable and embedded in $\reals^\mathsf{M}$ with measure $\mu_k$. Each manifold $\ccalM_k$ equipped with a weighted Laplace operator $\ccalL_{\rho_k,k}$ is labeled with $y_k \in \reals$. We assume to have access to $N_k$ randomly sampled points according to measure $\mu_k$ over each manifold $\ccalM_k$ and construct $K$ graphs $\{\bbG_k\}_{k=1}^K$ with graph Laplacians $\bbL_{N_k,k}$. The GNN $\bm\Phi_{\bbG_\cdot}(\bbH, \bbL_{N_\cdot,\cdot},\bbx_\cdot)$ is trained on this set of graphs with $\bbx_k$ as the input graph signal sampled from the manifold signal $f_k\in L^2(\ccalM_k)$ and $y_k\in \reals$ as the scalar target label. The final output of the GNN is set to be the average of the output signal values on each node while the output of MNN $\bm\Phi(\bbH,\ccalL_{\rho_\cdot,\cdot},f_\cdot)$ is the statistical average value of the output signal over the manifold. A loss function $\ell$ evaluates the difference between the output of GNN and MNN with the target label. 
%Suppose that GNN and MNN consist of low-pass filters (Definition \ref{def:low-pass}), normalized Lipschitz nonlinearities (Assumption \ref{ass:activation}) and normalized Lipschitz loss functions (Assumption \ref{ass:loss}). 
The empirical risk of the GNN is
\begin{align}
    \label{eqn:empirical-graphloss-graph}
    R_\bbG(\bbH) =  \sum_{k=1}^K \ell\left( \frac{1}{N_k}\sum_{i=1}^{N_k}  [\bm\Phi(\bbH, \bbL_{N_k,k}, \bbx_k)]_i  ,  y_k \right).
\end{align}

\begin{figure*}[h!]
    \centering
    % First row: Arxiv dataset
    \begin{subfigure}{\textwidth}
        \centering
        \includegraphics[width=\linewidth]{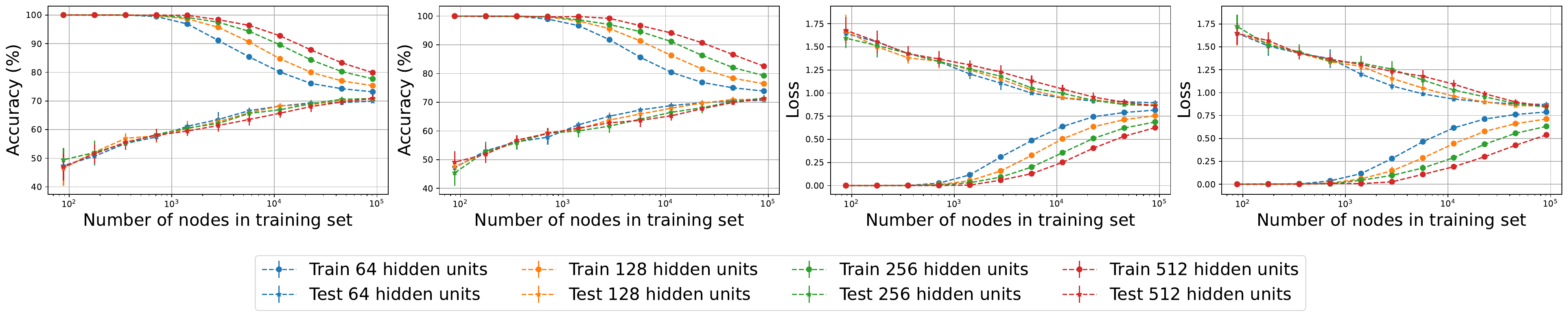}
    \end{subfigure}
    \begin{subfigure}{\textwidth}
        \centering
        \includegraphics[width=\linewidth]{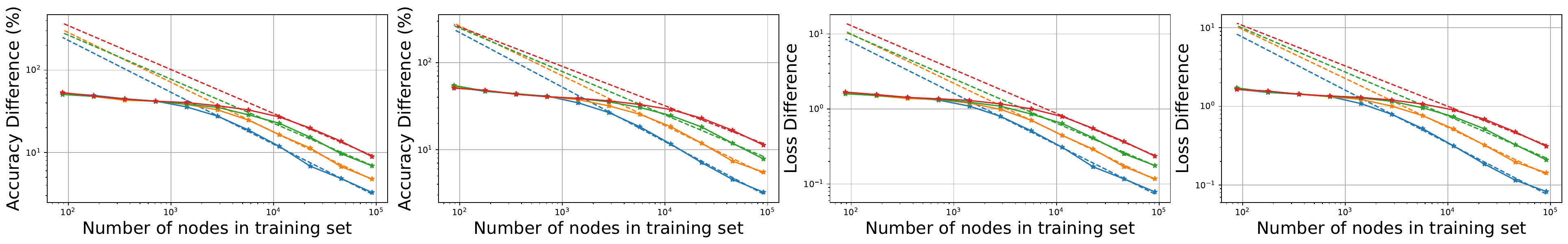}
    \end{subfigure}
    \begin{subfigure}{0.24\textwidth}
        \centering
        \caption{Arxiv: Two Layers}
        \label{fig:arxiv_acc_2_nodes}
    \end{subfigure}
    \begin{subfigure}{0.24\textwidth}
        \centering
        \caption{Arxiv: Three Layers}
        \label{fig:arxiv_acc_3_nodes}
    \end{subfigure}
    \begin{subfigure}{0.24\textwidth}
        \centering
        \caption{Arxiv: Two Layers (Loss)}
        \label{fig:arxiv_loss_2_nodes}
    \end{subfigure}
    \begin{subfigure}{0.24\textwidth}
        \centering
        \caption{Arxiv: Three Layers (Loss)}
        \label{fig:arxiv_loss_3_nodes}
    \end{subfigure}

    % Second row: Planetoid dataset
    \begin{subfigure}{\textwidth}
        \centering
        \includegraphics[width=\linewidth]{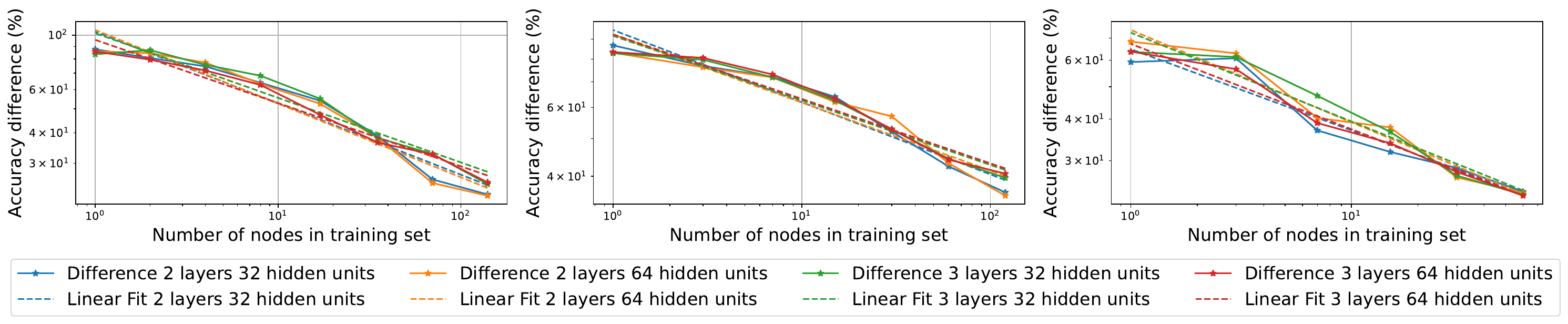}
    \end{subfigure}
    \begin{subfigure}{\textwidth}
        \centering
        \includegraphics[width=\linewidth]{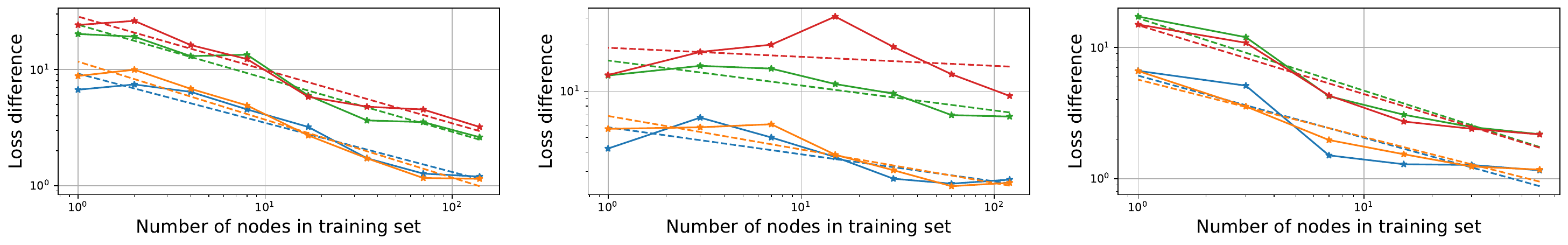}
    \end{subfigure}
    \begin{subfigure}{0.32\textwidth}
        \centering
        \caption{Cora}
        \label{fig:Cora}
    \end{subfigure}
    \begin{subfigure}{0.32\textwidth}
        \centering
        \caption{CiteSeer}
        \label{fig:CiteSeer}
    \end{subfigure}
    \begin{subfigure}{0.32\textwidth}
        \centering
        \caption{PubMed}
        \label{fig:pubmed}
    \end{subfigure}

    % Third row: Heterophilic and CoAuthors datasets
    \begin{subfigure}{\textwidth}
        \centering
        \includegraphics[width=\linewidth]{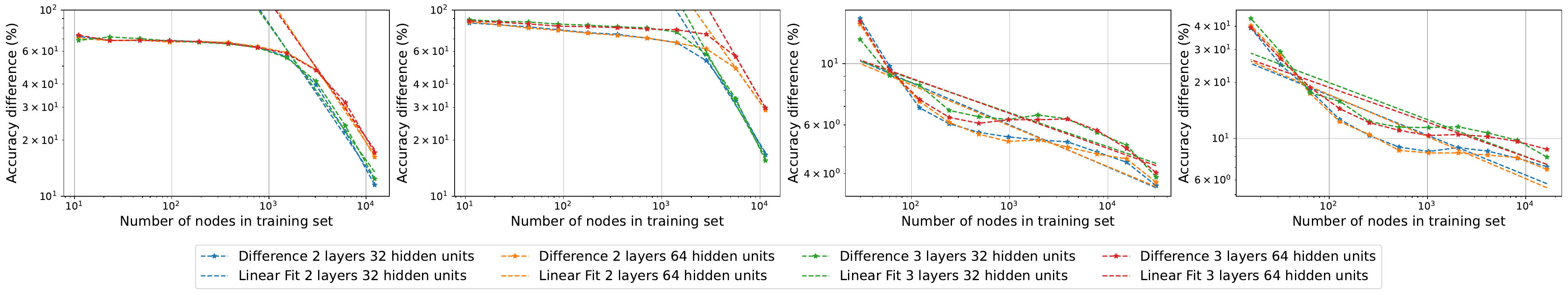}
    \end{subfigure}
    \begin{subfigure}{0.24\textwidth}
        \centering
        \caption{Amazon-Ratings}
        \label{fig:Amazon}
    \end{subfigure}
    \begin{subfigure}{0.24\textwidth}
        \centering
        \caption{Roman-Empire}
        \label{fig:roman}
    \end{subfigure}
    \begin{subfigure}{0.24\textwidth}
        \centering
        \caption{CoAuthors CS}
        \label{fig:coauthorsCS}
    \end{subfigure}
    \begin{subfigure}{0.24\textwidth}
        \centering
        \caption{CoAuthors Physics}
        \label{fig:coauthorsPhysics}
    \end{subfigure}
    \caption{Merged visualization of all datasets: Arxiv (top row), Planetoid (middle row), and Heterophilic and CoAuthors datasets (bottom row). Each row provides accuracy and loss generalization gaps across different configurations and datasets.}
    \label{fig:merged_all}
\end{figure*}

\begin{figure*}[h!]
    \centering
    \includegraphics[width=\linewidth]{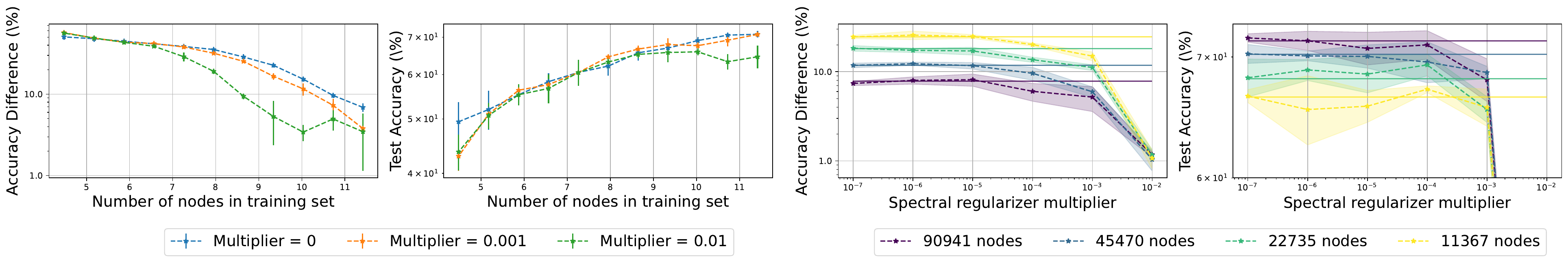}
    \begin{subfigure}{0.24\textwidth}
        \centering
        \caption{Accuracy gap vs nodes}
        \label{fig:gap_lip_accuracy_difference_vs_nodes}
    \end{subfigure}
    \begin{subfigure}{0.24\textwidth}
        \centering
        \caption{Test accuracy vs nodes}
        \label{fig:test_lip_accuracy_difference_vs_nodes}
    \end{subfigure}%
    \begin{subfigure}{0.24\textwidth}
        \centering
        \caption{Accuracy gap vs regularizer}
        \label{fig:gap_lip_accuracy_difference_vs_multiplier}
    \end{subfigure}
    \begin{subfigure}{0.24\textwidth}
        \centering
        \caption{Test accuracy vs regularizer}
        \label{fig:test_lip_accuracy_difference_vs_multiplier}
    \end{subfigure}
    \caption{Spectral continuity constant effect on generalization gap and test accuracy. }
    \label{fig:lipschitz}
\end{figure*}
While the output of MNN is the average value over the manifold, the statistical risk is defined based on the loss evaluated between the MNN output and the label as
\begin{align}
    \label{eqn:statistical-manifoldloss-graph}
    R_\ccalM(\bbH) = \sum_{k=1}^K\ell \left( \int_{\ccalM_k} \bm\Phi(\bbH, \ccalL_{\rho_k,k}, f_k)(x) \text{d}\mu_k(x), y_k\right).
\end{align}

The generalization gap is therefore
\begin{align}
    \label{eqn:generalization-gap-mani}
    GA = \sup_{\bbH\in \ccalH} \left| R_\ccalM(\bbH) -   R_\bbG(\bbH)\right|.
\end{align}
\begin{theorem}
\label{thm:generalization-graph-gauss}
    %Let $\ccalM_k\subset \reals^\mathsf{N}$ be an embedded manifold with weighted Laplace operator $\ccalL_{\rho,k}$ for each $k=1,2\cdots, K$ and $\lambda_M$-bandlimited manifold signals $f_k$ over each manifold. Consider $K$ graph constructed with $N$ nodes sampled i.i.d. with measure $\mu_k$ over $\ccalM$. The graph Laplacian $\bbL_{N,k}$ is calculated based on \eqref{eqn:gauss_kernel}. Suppose we are given pairs of graph signals $\{\bbx_k, \bby_k\}_{k=1}^K$ to train a GNN $\bm\Phi(\bbH,\bbL_N,\cdot)$. Let $\bm\Phi(\bbH,\ccalL_\rho,\cdot)$ be a MNN on $\ccalM$ \eqref{eqn:mnn} trained with input and output manifold functions $\{f_k, y_k\}$. 
    Suppose the GNN and MNN with filters satisfying Assumption \ref{ass:low-pass} have $L$ layers with $F$ features in each layer and the input signal is bandlimited (Definition \ref{def:band}). Under Assumptions \ref{ass:activation} and \ref{ass:loss} it holds in probability at least $1-\delta$ that
    \begin{align}
    GA\leq& LF^{L-1}\sum_{k=1}^K (C_1 C_L + C_2)\bigg(\sqrt{\frac{\epsilon_k}{{N_k}}}  \\
    &+\frac{\pi^2\sqrt{\log(1/\delta)}}{6N_k} \bigg) + F^L C_3 \sum
_{k=1}^K \left(\frac{\log N_k}{N_k}\right)^{\frac{1}{d_k}},\nonumber
     %    GA\leq LF^{L-1}\left(\frac{ {C_1} + C_2\theta_M^{-1}}{\sqrt{N}}\epsilon 
     % + C_3 \frac{\sqrt{\log(1/\delta)}}{N} + C_4 \frac{M^{-1}}{\sqrt{N}} + L \left(\frac{\log N}{N}\right)^{\frac{1}{d}}\right)
    \end{align}
when $d\geq 3$. If $d=2$, the last term would be $F^L C_3 \sum_{k=1}^K\frac{(\log N_k) ^{3/4}}{N_k^{1/2}}$,with $C_1$, $C_2$, and $C_3$ depending on the geometry of $\ccalM$, $C_L$ is the spectal continuity constant in Assumption \ref{ass:low-pass}.
     \begin{enumerate}
    \item When the graphs are constructed with a Gaussian kernel \eqref{eqn:gauss_kernel}, then $\epsilon_k \sim  \left(\frac{\log(C/\delta)}{N_k} \right)^{\frac{2}{d_k+4}}$.
    %and the graph is dense with $\Theta(N)$ degree. 
    \item When the graphs are constructed as $\epsilon$-graphs as \eqref{eqn:compact_kernel}, then $\epsilon_k \sim   \left(\frac{\log(CN_k/\delta)}{N_k} \right)^{\frac{2}{d_k+4}}$.
    %and the graph is relatively sparse with $\Theta(\log N)$ degree.
    \end{enumerate}
\end{theorem}
\begin{proof}
    See Appendix \ref{app:proof-graph} for proof and the definitions of $C_1$, $C_2$ and $C_3$.
\end{proof}

Theorem \ref{thm:generalization-graph-gauss} shows that a single graph sampled from the underlying manifold with large enough sampled points $N_k$ from each manifold $\ccalM_k$ can provide an effective approximation to classify the manifold itself. The generalization gap also attests that the trained GNN can generalize to classify other unseen graphs sampled from the same manifold. Similar to the generalization result in node-level tasks, the generalization gap decreases with the number of points sampled over each manifold while increasing with the manifold dimension. A higher dimensional manifold, i.e. higher complexity, needs more samples to guarantee the generalization. The generalization gap also shows a trade-off between the generalization and discriminability as the bound increases linearly with the spectral continuity constant $C_L$. That is, to guarantee that a GNN for graph classification can generalize effectively, we must impose restrictions on the continuity of its filter functions, which in turn limits the filters' ability to discriminate between different graph features.

We note that our assumption of a constant number of features can be generalized to include a different number of features in each layer for both node and graph classification.
%The generalization gap also scales with the size of GNN architecture polynomially with the number of features and exponentially with the number of layers.

% \begin{theorem}
% \label{thm:generalization-graph-sparse}
%     Let $\ccalM_k\subset \reals^\mathsf{N}$ be an embedded manifold with weighted Laplace operator $\ccalL_{\rho,k}$ for each $k=1,2\cdots, K$ and $\lambda_M$-bandlimited manifold signals $f_k$ over each manifold. Consider $K$ graph constructed with $N$ nodes sampled i.i.d. with measure $\mu_k$ over $\ccalM$. The graph Laplacian $\bbL_{N,k}$ is calculated based on \eqref{eqn:compact_kernel}. Suppose we are given pairs of graph signals $\{\bbx_k, \bby_k\}_{k=1}^K$ to train a GNN $\bm\Phi(\bbH,\bbL_N,\cdot)$. Let $\bm\Phi(\bbH,\ccalL_\rho,\cdot)$ be a single layer MNN on $\ccalM$ \eqref{eqn:mnn} trained with input and output manifold functions $\{f_k, y_k\}$. The GNN and MNN are single-layer structures containing low-pass filters and a normalized Lipschitz nonlinearity. It holds in probability at least $1-\delta$ that
%     \begin{equation}
%         GA\leq \frac{C_1}{\sqrt{N}}\left(\frac{\log(CN/\delta)}{N} \right)^{\frac{1}{d+4}}  +  \frac{C_2\theta_M^{-1}}{\sqrt{N}}   \left(\frac{\log(CN/\delta)}{N} \right)^{\frac{1}{d+4}}
%      + C_3 \frac{\sqrt{\log(1/\delta)}}{N} + C_4 \frac{M^{-1}}{\sqrt{N}} + L \left(\frac{\log N}{N}\right)^{\frac{1}{d}}
%     \end{equation}
% \end{theorem}

\section{Experiments}

In this section, we empirically study the generalization gap in $8$ real-world datasets. The task is to predict the label of a node given a set of features. The datasets vary in the number of nodes from $169,343$ to $3,327$, and in the number of edges from $1,166,243$ to $9,104$. 
% The details of datasets can be found in the Appendix \ref{?}.
The feature dimension also varies from $8,415$ to $300$ features, and the number of classes of the node label from $40$ to $3$. 
In all cases, we vary the number of nodes in the training set by partitioning it in $\{1,2,4,8,16,32,64,32,64,128,256,512,1024\}$ partitions when possible. 
For both the training and testing sets, we computed the loss in \textit{cross-entropy loss}, and the accuracy in percentage ($\%$). 
% Juan Removed Spaces
Our main goal is to show that the rate presented in Theorem \ref{thm:generalization-node-gauss} holds in practice. 
%That is to say, if we plot the logarithm of the generalization gap as a function of the logarithm of the number of nodes we see a linear rate. To be consistent with the theory, we also want to show that if the number of layers or the size of the features increases, so does the generalization gap. 
% \input{tables/correlation}
% Juan Removed Spaces
In Figure \ref{fig:merged_all}, we plot the generalization gap of the accuracy in the logarithmic scale for a two-layered GNN (Figure \ref{fig:arxiv_acc_2_nodes}), and for a three-layered GNN (Figure \ref{fig:arxiv_acc_3_nodes}). On the upper side, we can see that the generalization bound decreases with the number of nodes and that outside of the strictly overfitting regime (when the training loss is below $95\%$), the generalization gap shows a linear decay, as depicted in the dashed line. The same behavior can be seen in Figures \ref{fig:arxiv_loss_2_nodes}, and \ref{fig:arxiv_loss_3_nodes} which correspond to the loss for $2$ and $3$ layered GNNs. As predicted by our theory, the generalization gap increases with the number of features and layers in the GNN. The behavior of the training and testing accuracy as a function of the number of nodes is intuitive. 
For the training loss, when the number of nodes in the training set is small, the GNN can overfit the training data. As the number of features increases, the GNN's capacity to overfit also increases.
%For the training loss, when the number of nodes is small, the GNN can overfit the training set -- the larger the number of features, the GNN can overfit more nodes.
% Juan Removed Spaces
In Figures \ref{fig:Cora} to \ref{fig:coauthorsPhysics}, we present the accuracy generalization gaps for $2$ and $3$ layers with $32$ and $64$ features. In the overfitting regime, the rate of our generalization bound seems to hold  -- decreases linearly with the number of nodes in the logarithmic scale. In the non-overfitting regime, our rate holds for the points whose training accuracy is below $95\%$. Also, we validate that the bound increases both with the number of features and the number of layers. 

% In Table \ref{tab:correlation}, we present the Pearson correlation coefficient to measure the linear relationships in the generalization gaps of a $2$ layers GNN with $64$ hidden units in all datasets considered. In almost every case, the coefficient is above $0.95$ which translates into a strong linear correlation. In Appendix \ref{sec:Appendix_Experiments} we explain how we computed these values.
% As seen in the experiment, the GNN generalization gap experiences a linear decay with respect to the number of nodes in the logarithmic scale. 
% Theorem \ref{thm:generalization-node-gauss} presents an upper bound on the generalization gap, whose rate can be seen to match the one seen in practice both for the loss, as well as the accuracy gaps.

% \paragraph{Spectral Continuity Constant Effect.}
To measure the impact of the spectral continuity constant $C_L$, we add a regularizer to the cross-entropy loss (see Appendix \ref{appendix_subsection:SCCR}). We vary the value of the regularizer, noting that a larger regularizer translates into a smaller $C_L$ and therefore a smoother function. In Figures \ref{fig:gap_lip_accuracy_difference_vs_nodes} and \ref{fig:gap_lip_accuracy_difference_vs_multiplier} we see the empirical manifestation of the bound that we showed (cf. Theorem \ref{thm:generalization-node-gauss}) -- a GNN with a smaller $C_L$ (a larger regularizer) will attain a smaller generalization gap.  
% GNN with a smaller spectral continuity constant (trained with a larger regularizer), translates into a smaller generalization gap. 
We can see that a larger regularizer (smaller continuity constant $C_L$, green line, regularizer $0.01$) attains a smaller generalization gap, and as the regularization decreases ($C_L$ increases), the generalization gap increases. 
% It must be noted that the regularization gap is the difference between train and test errors, and being small is not necessarily positive. 
The effect of having smaller spectral continuity constants $C_L$ is the lack of discriminability of the GNN.
As can be seen in Figures \ref{fig:test_lip_accuracy_difference_vs_nodes} and \ref{fig:test_lip_accuracy_difference_vs_multiplier}, the test error decreases when the multiplier is too large ($C_L$ too small). Therefore, a spectral regularize not too large can be shown to guarantee good test accuracy, but if the regularizer is too large, the test accuracy will be hurt by the lack of discriminability of the GNN as shown in Figure \ref{fig:test_lip_accuracy_difference_vs_multiplier}. In all, we verify the fact that a GNN with a smoother spectral response will have a smaller generalization gap as shown in Theorem \ref{thm:generalization-node-gauss}. 

% \paragraph{Graph classification} 

% We evaluate the generalization gap on graph prediction using the ModelNet10   dataset \citep{wu20153d}.  We set the coordinates of each point as input graph signals, and the weights of the edges are calculated based on the Euclidean distance between the nodes. The generalization gap is calculated by training GNNs on graphs with $N= 20, 40 \dots, 100$ sampled points, and plotting the differences between the average output of the trained GNNs on the trained graph and a testing graph with size $N=100$. Figure \ref{fig:diff_3layer} shows the generalization gaps for GNNs with 2 layers and Figure \ref{fig:diff_4layer} shows the results of GNNs with 3 layers. We can see that the output differences between the GNNs decrease with the number of nodes and decrease with the multiplier (increase with $C_L$). This verifies the claims of Theorem \ref{thm:generalization-graph-gauss}. In Appendix \ref{subsec:model}, we present experiment results on more model datasets.

%The architectures contain $F_0 = 3$ input features which are the 3-d coordinates of each point, $F_1= 64$ and $F_2=32$ features  in each layer. In GNN and Lipschitz GNN, ReLU is used as the nonlinearity function. The filters in Lipschitz GNN is regularized as Lipschitz continuous by imposing a penalty term $C_L h'(\lambda)$ to the loss function with $C_L$ set as 0.3. All architectures include a linear readout layer to map the final classification outputs. 

\section{Conclusion}
We study the statistical generalization of GNNs from a manifold perspective. We consider graphs sampled from manifolds and prove that GNNs could effectively generalize to unseen data from the manifolds when the number of sampled points is large enough and the filter functions are continuous in the spectral domain. We verify our theoretical results on both synthetic and real-world datasets. The impact of this paper is to show a better understanding of GNN generalization capabilities from a spectral perspective relying on a continuous model. Our work also motivates the practical design of large-scale GNNs. Specifically, in order to achieve a better generalization, it is essential to restrict the discriminability of GNNs by putting assumptions on the spectral continuity of the filter functions in the GNNs.
%as the generalization gap allows deriving a minimum number of nodes needed to achieve a satisfying generalization capability.  
% Juan Removed the future work. 
%For future work, we will study the generalization of GNNs in more settings include transductive learning and out-of-distribution generalization. We are also willing to look into more application scenarios to fully utilize our theory on more complex and general manifold models.
%We will consider a better explanation and exploration deep into the overfitting regime of node classification, which is of great interest where the figures show that our proposed generalization upper bounds fit the rate. 
%Furthermore, more real-world applications with large amount of datasets are required as we need to approximate a continuous manifold model.

% I have a MNN on $\bbM\reals^{P_M\times P_M}$
% \begin{align}
%     \bby_M=\sum_{n=0}^N h_n \bbM^n \bbx
% \end{align}

% Generated the data with an invented set of filters $h$. 
% For the GNN, I subsample $G\in \reals^{P_g\times P_g}$,
% \begin{align}
%     \bby_G=\sum_{n=0}^N h_n \bbG^n \bbx
% \end{align}
% What do I compare, the average 

% \red{Plot with number of nodes, and plot with Lipschitz.}

% \begin{align}
%    \frac{1}{P_M} |\bby_M| - \frac{1}{P_G}|\bby_G|
% \end{align}
\newpage
\section*{Impact Statement}
In this work, we explore the statistical generalization of GNNs from a manifold perspective by considering graphs sampled from manifolds. The impact of our work relies on showing that GNNs can effectively generalize to unseen data from the manifolds when the number of sampled points is large enough and the filter functions are continuous in the spectral domain. Our work also motivates the practical design of large-scale GNNs given that training on larger graphs attains a smaller generalization gap. Lastly, we observe that other than training on larger graphs, it is essential to restrict the discriminability of GNNs by putting assumptions on the spectral continuity of the filter functions in the GNNs.

\bibliography{bib}
\bibliographystyle{icml2025}

%%%%%%%%%%%%%%%%%%%%%%%%%%%%%%%%%%%%%%%%%%%%%%%%%%%%%%%%%%%%
\newpage
\appendix

\onecolumn 

\tableofcontents
\newpage
% Juan added this for easier adaptation

\section{Induced manifold signals}
\label{sec:induced}
The graph signal attached to this constructed graph $\bbG$ can be seen as the discretization of the continuous function over the manifold. Suppose $f\in L^2(\ccalM)$, the graph signal $\bbx_N$ is composed of discrete data values of the function $f$ evaluated at $X_N$, i.e. $[\bbx_N]_i = f(x_i)$ for $ i=1,2\cdots, N$. With a sampling operator $\bbP_N:L^2(\ccalM)\rightarrow L^2(X_N)$, the discretization can be written as 
\begin{equation}
    \label{eqn:sampling}
    \bbx_N = \bbP_N f.
\end{equation}
Let $\mu_N$ be the empirical measure of the random sample as 
\begin{equation}
    \mu_N =\frac{1}{N} \sum_{i=1}^N \delta_{x_i}.
\end{equation}
Let $\{V_i\}_{i=1}^N$ be the decomposition \citep{garcia2020error} of 
$\ccalM$ with respect to $X_N$ with $V_i\subset B_r(x_i)$, 
%and $\text{vol}(V_i) =\mu_i=1/N$, 
where $B_r(x_i)$ denotes the closed metric ball of radius $r$ centered at $x_i\in\ccalM$ with respect to the Euclidean distance in the Euclidean ambient space. The decomposition can be achieved by the optimal transportation map $T:\ccalM\rightarrow X_N$, which is defined by the $\infty$-Optimal Transport distance between $\mu$ and $\mu_N$.
\begin{equation}
    d_\infty (\mu,\mu_N) := \min_{T:T_\# \mu =\mu_N} \text{esssup}_{x\in\ccalM} d(x,T(x)),
\end{equation}
where $T_\# \mu =\mu_N$ indicates that $\mu(T^{-1}(V))=\mu_N(V)$ for every $V_i$ of $\ccalM$. This transportation map $T$ induces the partition $V_1, V_2,\cdots V_N$ of $\ccalM$, where $V_i:= T^{-1}(\{x_i\})$ with $\mu(V_i)=\frac{1}{N}$ for all $i =1 ,\cdots N$.
% \begin{equation}
%     \label{eqn:voronoi}
%     V_i = \{x\in \ccalM: \text{dist}(x_i,x) \leq \text{dist}(x_j,x), j= 1,2\cdots i-1,i+1,\cdots N\},
% \end{equation}
%where $\text{dist}$ represents the curvature distance between two points on the manifold \citep{gross2023manifolds}.
% where $\text{dist}$ represents the curvature distance between two points on the manifold \citep{gross2023manifolds}. 
The radius of $V_i$ can be bounded as $r\leq A (\log N/N)^{1/d}$ when the manifold dimension $d\geq 3$ and $r\leq A (\log N)^{3/4}/N^{1/2}$ when $d=2$ with $A$ related to the geometry of $\ccalM$ \cite{garcia2020error}[Theorem 2]. 

The manifold function induced by the graph signal $\bbx_N$ over the sampled graph $\bbG$ is defined by 
\begin{equation}
    (\bbI_N \bbx_N) (x)  = \sum_{i=1}^N [\bbx]_i \mathbbm{1}_{x\in V_i}, \forall \; x \in \ccalM
\end{equation}
where we denote $\bbI_N: L^2(X_N)\rightarrow L^2(\ccalM)$ as the inducing operator.

\section{Convergence of GNN to MNN}
The convergence of GNN on sampled graphs to MNN provides the support for the generalization analysis. We first introduce the inner product over the manifold. The inner product of signals $f, g\in L^2(\ccalM)$ is defined as 
\begin{equation}\label{eqn:innerproduct}
    \langle f,g \rangle_{\ccalM}=\int_\ccalM f(x)g(x) \text{d}\mu(x), 
\end{equation}
where $\text{d}\mu(x)$ is the volume element with respect to the measure $\mu$ over $\ccalM$. Similarly, the norm of the manifold signal $f$ is
\begin{equation}\label{eqn:manifold_norm}
    \|f\|^2_{\ccalM}={\langle f,f \rangle_{\ccalM}}.
\end{equation}

\begin{proposition}
\label{prop:prob-diff}
    Let $\ccalM\subset \reals^\mathsf{M}$ be an embedded manifold with weighted Laplace operator $\ccalL_\rho$ and a bandlimited manifold signal $f$. Graph $\bbG_N$ is constructed based on a set of $N$ i.i.d. randomly sampled points $X_N=\{x_1, x_2,\cdots, x_{N}\}$ according to measure $\mu$ over $\ccalM$. A graph signal $\bbx$ is the sampled manifold function values at $X_N$. The graph Laplacian $\bbL_N$ is calculated based on \eqref{eqn:gauss_kernel} or \eqref{eqn:compact_kernel} with $\epsilon$ as the graph parameter. Let $\bm\Phi(\bbH,\ccalL_\rho,\cdot)$ be a MNN on $\ccalM$ \eqref{eqn:mnn} with $L$ layers and $F$ features in each layer. Let $\bm\Phi_\bbG(\bbH,\bbL_N,\cdot)$ be the GNN with the same architecture applied to the graph $\bbG_N$. Then, with the filters satisfy Assumption \ref{ass:low-pass} and nonlinearities as normalized Lipschitz continuous, it holds in probability at least $1-\delta$ that 
    \begin{align}
         \label{eqn:prob-diff}
    \frac{1}{N}\sum_{i=1}^N \| \bm\Phi_\bbG(\bbH,\bbL_N,\bbx) - \bbP_N {\bm\Phi}(\bbH,\ccalL_\rho,\bbI_N \bbx) \|_2 \leq   LF^{L-1}\left(C_1 \sqrt{\epsilon}  + C_2 \sqrt{\frac{\log(1/\delta)}{N}}\right)
    \end{align}
    where $C_1 $, $C_2 $ are constants defined in the following proof.
\end{proposition}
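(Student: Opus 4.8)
The plan is to reduce the claim to a single-filter spectral comparison between $\bbh_\bbG(\bbL_N)$ and $\bbh(\ccalL_\rho)$ and then propagate that comparison through the $L$ layers, following the non-asymptotic GNN-to-MNN convergence strategy of \citep{wang2023geometric, wang2024stability}.

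\textbf{Layerwise propagation.} Fix $\bbH\in\ccalH$ and write $\bbx_\ell^p$ and $f_\ell^p$ for the $p$-th features at layer $\ell$ of $\bm\Phi_\bbG(\bbH,\bbL_N,\bbx)$ and of $\bm\Phi(\bbH,\ccalL_\rho,\bbI_N\bbx)$ respectively. Inserting $\bbP_N\bbh^{\ell pq}(\ccalL_\rho)f_{\ell-1}^q$ as an intermediate term and using the triangle inequality together with Assumption \ref{ass:activation} (so that $\sigma$ is non-expansive and fixes $0$), $\|\bbx_\ell^p-\bbP_N f_\ell^p\|$ is dominated by a sum over $q$ of: a propagated error $\|\bbh_\bbG^{\ell pq}(\bbL_N)\|\,\|\bbx_{\ell-1}^q-\bbP_N f_{\ell-1}^q\|$, and a fresh filter-discretization error $\|\bbh_\bbG^{\ell pq}(\bbL_N)\bbP_N f_{\ell-1}^q-\bbP_N\bbh^{\ell pq}(\ccalL_\rho)f_{\ell-1}^q\|$. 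The operator norms $\|\bbh_\bbG^{\ell pq}(\bbL_N)\|$ and their manifold counterparts are uniformly bounded by the low-pass decay $|\hat h(\lambda)|=\ccalO(\lambda^{-d})$ of Definition \ref{def:low-pass} (together with boundedness of the spectra), and normalizing the filter coefficients so these norms are at most $1$ is what turns the recursion $e_\ell\le F(e_{\ell-1}+\Delta)$ into the polynomial factor $LF^{L-1}$ rather than an exponential one. Unrolling over $\ell=1,\dots,L$ reduces the proposition to controlling the worst single filter-discretization error $\Delta$.

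\textbf{Single-filter spectral comparison.} The core estimate is that, for a low-pass $\hat h$ and an (essentially) bandlimited $\psi$,
\[
\big\|\bbh_\bbG(\bbL_N)\bbP_N\psi-\bbP_N\bbh(\ccalL_\rho)\psi\big\|_{L^2(X_N)} \;\le\; \big(C_1\epsilon+C_2\sqrt{\log(1/\delta)/N}\big)\|\psi\|_\ccalM
\]
with probability at least $1-\delta$. Expanding in the two eigenbases, $\bbh(\ccalL_\rho)\psi=\sum_i\hat h(\lambda_i)\langle\psi,\bm\phi_i\rangle_\ccalM\bm\phi_i$ (a finite sum by Definition \ref{def:band}) and $\bbh_\bbG(\bbL_N)\bbP_N\psi=\sum_i\hat h(\lambda_{i,N})\langle\bbP_N\psi,\bm\phi_{i,N}\rangle\bm\phi_{i,N}$, I would split the difference into (i) an eigenvalue-mismatch term $|\hat h(\lambda_i)-\hat h(\lambda_{i,N})|\le C_L\lambda_i^{-d-1}|\lambda_i-\lambda_{i,N}|$, where the continuity constant $C_L$ enters linearly through the derivative bound of Definition \ref{def:low-pass}; (ii) an eigenfunction-mismatch term in $\|\bbP_N\bm\phi_i-\bm\phi_{i,N}\|$; and (iii) a sampling term $|\langle\bbP_N\psi,\bm\phi_{i,N}\rangle-\langle\psi,\bm\phi_i\rangle_\ccalM|$. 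Each of (i)--(iii) is then bounded using the known high-probability spectral-convergence rates of the graph Laplacians \eqref{eqn:gauss_kernel}/\eqref{eqn:compact_kernel} toward $\ccalL_\rho$ (\citep{belkin2008towards, calder2022improved, dunson2021spectral}), of bias-plus-concentration form $C\epsilon+C'\sqrt{\log(1/\delta)/N}$ per eigenpair; a union bound over the finitely many $i$ with $\lambda_i\le\lambda$ -- of cardinality $\ccalO(\lambda^{d/2})$ by Weyl's law -- produces the stated probability and absorbs the eigenvalue-gap, bandlimit, and curvature factors into $C_1,C_2$.

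\textbf{Assembly and the main obstacle.} Chaining the two steps gives the bound of the proposition, uniformly in $\bbH$ since it enters only through the normalized filter norms. I expect the crux to be step (ii): $L^2$ convergence of graph-Laplacian eigenfunctions is substantially more delicate than that of eigenvalues, it degrades when spectral gaps near $\lambda$ are small, and one must keep the rate genuinely of the form $\epsilon+\sqrt{\log(1/\delta)/N}$ rather than carrying spurious inverse powers of $\epsilon$; this is exactly why the low-pass and bandlimited hypotheses are in force, as they confine everything to a fixed finite band of the spectrum. A secondary bookkeeping issue is that $\bbI_N\bbx$ is only piecewise constant and hence not exactly bandlimited, but $\bbP_N\bbI_N$ is the identity on $L^2(X_N)$, the low-pass filters suppress its high-frequency content up to an $\ccalO(\lambda^{-d})$ tail, and the residual $\|\bbI_N\bbP_N f-f\|_\ccalM$ is controlled by the fine-partition estimate of Appendix \ref{sec:induced}, so it can be folded into the same rate in the regime where $\epsilon$ is tuned as in Theorems \ref{thm:generalization-node-gauss} and \ref{thm:generalization-graph-gauss}.
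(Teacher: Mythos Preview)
Your proposal is correct and follows essentially the same route as the paper: a single-filter spectral comparison split into eigenvalue-mismatch (controlled via $|\hat h'|\le C_L\lambda^{-d-1}$), eigenfunction-mismatch, and inner-product sampling error, each bounded by the $\epsilon$-and-concentration rates of \citep{belkin2008towards,calder2022improved,dunson2021spectral} and summed via Weyl's law, then propagated through the layers with the Lipschitz nonlinearity to pick up the $LF^{L-1}$ factor. The paper additionally isolates an explicit tail term $A_4(M)$ over $i>M$ and argues it vanishes as $M\to\infty$ before fixing $M$ large, whereas you fold this into ``low-pass suppresses the tail''; and your remark that $\bbI_N\bbx$ is not literally bandlimited but $\bbP_N\bbI_N=\mathrm{id}$ is in fact more careful than the paper, which simply substitutes $f=\bbI_N\bbx$ at the end.
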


% \begin{align}
%     \frac{1}{N}\sum_{i=1}^N \| \bm\Phi(\bbH,\bbL_N,\bbx) - \bbP_N {\bm\Phi}(\bbH,\ccalL_\rho,\bbI_N \bbx) \|_2 \leq   C_1 \epsilon  +  C_2\epsilon  \theta_M^{-1} 
%      + C_3 \sqrt{\frac{\log(1/\delta)}{N}} + C_4 M^{-1}
% \end{align}
\begin{proposition}{\cite{wang2023geometric}[Proposition~2, Proposition~4]}\label{thm:converge-spectrum-sparse}
Let $\ccalM\subset \reals^\mathsf{M}$ be equipped with Laplace operator $\ccalL_\rho$, whose eigendecomposition is given by $\{\lambda_i,\bm\phi_i\}_{i=1}^\infty$. Let $\bbL_N$ be the discrete graph Laplacian of graph weights defined as \eqref{eqn:gauss_kernel} (or \eqref{eqn:compact_kernel}), with spectrum $\{\lambda_{i,N},\bm\phi_{i,N}\}_{i=1}^N$.
Fix $K\in \mathbb{N}^+$ and assume that  $\epsilon= \epsilon(N) \geq \left({\log(C/\delta)}/{N}\right)^{2/(d+4)}$ (or $\epsilon =\epsilon(N)\geq\left({\log(CN/\delta)}/{N}\right)^{2/(d+4)}$).  
Then, with probability at least $1-\delta$, we have 
\begin{equation}
    |\lambda_i-\lambda_{i,N} |\leq C_{\ccalM,1} \lambda_i\sqrt{\epsilon},  \quad 
    \|a_i\bm\phi_{i,N} -\bm\phi_i\|\leq C_{\ccalM,2} \frac{\lambda_i}{\theta_i} \sqrt{\epsilon} ,
\end{equation}
with $a_i\in\{-1,1\}$ for all $i<K$ and $\theta$ the eigengap of $\ccalL$, i.e., $\theta_i=\min\{\lambda_i-\lambda_{i-1},\lambda_{i+1}-\lambda_{i}\}$. The constants $ C_{\ccalM,1}$, $ C_{\ccalM,2}$ depend on $d$ and the volume, the injectivity radius and sectional curvature of $\ccalM$.
\end{proposition}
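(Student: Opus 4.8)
The plan is to establish this spectral convergence in two stages: first prove that the discrete graph Laplacian $\bbL_N$ converges to $\ccalL_\rho$ at the level of Dirichlet forms / pointwise operator action, and then transfer this to eigenvalue and eigenfunction estimates through variational and perturbation arguments. The operator-level statement itself splits into a deterministic bias and a stochastic variance term. For the bias, I would fix a smooth test function $f$ (in particular a bandlimited eigenfunction $\bm\phi_i$) and write the kernelized operator as the integral $\int_\ccalM K_\epsilon(\|x-y\|^2/\epsilon)(f(x)-f(y))\,\mathrm{d}\mu(y)$, then Taylor-expand $f$ and the density $\rho$ in geodesic normal coordinates. The normalization constants in Definitions \ref{def:gauss} and \ref{def:epsilon} — namely $\tfrac{1}{\epsilon^{d/2+1}(4\pi)^{d/2}}$ in \eqref{eqn:gauss_kernel} and $\tfrac{d+2}{\epsilon^{d/2+1}\alpha_d}$ in \eqref{eqn:compact_kernel} — are chosen precisely so that the zeroth and second moments of $K_\epsilon$ reproduce the weighted Laplace operator, making the expected operator equal $\ccalL_\rho f + O(\epsilon)$, with the $O(\epsilon)$ error controlled by curvature bounds and derivative bounds on $\rho$ and $f$.

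Next I would control the variance. Replacing the integral by the empirical average over $X_N$ writes the operator as a sum of $N$ bounded, independent terms, so Bernstein's inequality gives a concentration error of order $\sqrt{\log(1/\delta)/(N\epsilon^{d+2})}$ at each fixed point, which a union bound over the finitely many eigenpairs $i<K$ upgrades to a uniform estimate. Balancing the bias $O(\epsilon)$ against this fluctuation, i.e. solving $\epsilon^2 \sim \log(1/\delta)/(N\epsilon^{d+2})$, is exactly what forces the scaling $\epsilon\sim(\log(C/\delta)/N)^{1/(d+4)}$; for the compactly supported $\epsilon$-graph kernel the logarithm becomes $\log(CN/\delta)$ because the indicator kernel requires uniform control over all $N$ sample points rather than merely in expectation. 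With this choice, the operator error $\|(\bbL_N-\ccalL_\rho)\bm\phi_i\|$ is $O(\lambda_i\epsilon)$ with probability at least $1-\delta$.

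For the eigenvalues I would use the Courant--Fischer min-max characterization
\[
\lambda_i=\min_{\dim S=i}\ \max_{0\neq f\in S}\frac{\langle \ccalL_\rho f,f\rangle_{\ccalM}}{\|f\|^2_{\ccalM}},
\]
together with its discrete analogue for $\bbL_N$, comparing the two Rayleigh quotients through the sampling and inducing operators $\bbP_N,\bbI_N$ of Appendix \ref{sec:induced}. Feeding the span of the first $i$ continuous eigenfunctions as a trial space (and conversely the discrete eigenvectors back through $\bbI_N$), the form error from the previous step transfers to $|\lambda_i-\lambda_{i,N}|\le C_{\ccalM,1}\lambda_i\epsilon$; the multiplicative factor $\lambda_i$ appears because the Dirichlet energy of the $i$-th eigenfunction equals $\lambda_i$, so a relative $O(\epsilon)$ form error becomes an absolute error of size $\lambda_i\epsilon$. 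Finally, for the eigenfunctions I would invoke a Davis--Kahan $\sin\Theta$ perturbation bound: once the eigenvalues are separated by the gap $\theta_i$ and $\|(\bbL_N-\ccalL_\rho)\bm\phi_i\|=O(\lambda_i\epsilon)$, the theorem yields $\|a_i\bm\phi_{i,N}-\bm\phi_i\|\le C_{\ccalM,2}\tfrac{\lambda_i}{\theta_i}\epsilon$, where the sign $a_i\in\{-1,1\}$ absorbs the inherent sign ambiguity of eigenvectors and the denominator $\theta_i$ measures how well the eigenspace is isolated.

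I expect the main obstacle to be precisely this last step: making the Davis--Kahan argument rigorous across the two different Hilbert spaces $L^2(X_N)$ and $L^2(\ccalM)$. One must show that $\bbP_N$ and $\bbI_N$ act as almost-isometries up to the discretization scale $O((\log N/N)^{1/d})$, control the non-commutativity of these maps with the two Laplacians, and verify that the gap $\theta_i$ is not itself eroded by the perturbation (otherwise the $\tfrac{1}{\theta_i}$ factor is illusory). Obtaining the sharp exponent $1/(d+4)$, rather than the cruder balance a naive Bernstein bound would give, is the quantitatively delicate piece and is where the refined estimates of \citep{calder2022improved} are required.
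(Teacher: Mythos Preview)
The paper does not actually prove this proposition: it is quoted verbatim from \cite[Propositions~2 and~4]{wang2023geometric} and used as a black box. The \texttt{proof} environment that appears immediately after it in the appendix is in fact the proof of Proposition~\ref{prop:prob-diff} (the GNN--MNN output convergence), which \emph{invokes} Proposition~\ref{thm:converge-spectrum-sparse} at the steps \eqref{eqn:p2-1} and in bounding $\|\bm\phi_{i,N}-\bbP_N\bm\phi_i\|$, but does not derive it. So there is nothing in the present paper to compare your proposal against.

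That said, your outline --- Taylor expansion in normal coordinates for the bias, Bernstein-type concentration for the variance, balancing to obtain $\epsilon\sim(\log/N)^{1/(d+4)}$, then min--max for eigenvalues and Davis--Kahan for eigenvectors --- is exactly the template followed in the sources the paper cites (\citep{calder2022improved,dunson2021spectral,belkin2008towards} and \citep{wang2023geometric}). Your identification of the cross-Hilbert-space Davis--Kahan step as the delicate point is accurate; in those references it is handled either by working entirely on the discrete side after an interpolation argument, or by the transport-map machinery of \citep{garcia2020error} that the present paper also imports in Appendix~\ref{sec:induced}. If you wanted to make your sketch self-contained you would need to spell out that piece, but for the purposes of this paper the result is simply imported.
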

\begin{proof}

Because $\{x_1, x_2,\cdots,x_N\}$ is a set of randomly sampled points from $\ccalM$, based on Theorem 19 in \cite{von2008consistency} we can claim that
\begin{equation}
   \left|\langle \bbP_N f,\bbP_N \bm\phi_{i } \rangle   -\langle f,\bm\phi_i\rangle_\ccalM\right| = O\left(\sqrt{\frac{\log (1/\delta)}{N}}\right).
\end{equation}
This also indicates that 
\begin{equation}
   \left|\|\bbP_N f \|^2-\|f\|^2_\ccalM\right| = O\left(\sqrt{\frac{\log (1/\delta)}{N}}\right),
\end{equation}
which indicates $\|\bbP_N f\|=\|f\|_\ccalM + O((\log (1/\delta)/N)^{1/4})$.
We suppose that the input manifold signal is $\lambda_M$-bandlimited with $M$ spectral components. We first write out the filter representation as
 \begin{align}
    & \|\bbh(\bbL_N)\bbP_N f - \bbP_N \bbh(\ccalL_\rho) f\| = \Bigg\| \sum_{i=1}^N \hat{h}(\lambda_{i,N}) \langle \bbP_N f,\bm\phi_{i,N} \rangle \bm\phi_{i,N}  - \sum_{i=1}^M \hat{h}(\lambda_i)\langle f,\bm\phi_i\rangle_{\ccalM} \bbP_N \bm\phi_i  \Bigg\|
     \\ 
     &  \leq  \Bigg\| \sum_{i=1}^M \hat{h}(\lambda_{i,N}) \langle \bbP_N f,\bm\phi_{i,N} \rangle \bm\phi_{i,N} - \sum_{i=1}^M \hat{h}(\lambda_i) \langle f,\bm\phi_{i}  \rangle_{\ccalM} \bbP_N\bm\phi_{i} + \sum_{i=M+1}^N \hat{h}(\lambda_{i,N}) \langle \bbP_N f,\bm\phi_{i,N} \rangle \bm\phi_{i,N} \Bigg\| \\
     & \leq \Bigg\| \sum_{i=1}^M \hat{h}(\lambda_{i,N}) \langle \bbP_N f,\bm\phi_{i,N} \rangle \bm\phi_{i,N} - \sum_{i=1}^M \hat{h}(\lambda_i) \langle f,\bm\phi_{i}  \rangle_{\ccalM}\bbP_N\bm\phi_{i}\Bigg\|   + \left\|\sum_{i=M+1}^N \hat{h}(\lambda_{i,N}) \langle \bbP_N f,\bm\phi_{i,N} \rangle \bm\phi_{i,N}\right\|\label{eqn:1}
    %  & \qquad \qquad \qquad \qquad\qquad \qquad+\left\| \sum_{i=1}^M \hat{h}(\lambda_i) \langle \bbP_n f,\bm\phi_{i,n}^\epsilon \rangle_{\bbG_n} \bm\phi_{i,n}^\epsilon - \sum_{i=1}^M \hat{h}(\lambda_i) \langle f,\bm\phi_i \rangle_{\ccalM} \bbP_n \bm\phi_i \right\|.\label{eqn:conv-1}
 \end{align}
 The first part of \eqref{eqn:1} can be decomposed with the triangle inequality as 
 \begin{align}
     & \nonumber \left\| \sum_{i=1}^M \hat{h}(\lambda_{i,N}) \langle \bbP_N f,\bm\phi_{i,N}  \rangle \bm\phi_{i,N}  - \sum_{i=1}^M \hat{h}(\lambda_i)\langle f,\bm\phi_i\rangle_{\ccalM} \bbP_N \bm\phi_i  \right\|
     \\ 
     &  \leq  \left\| \sum_{i=1}^M \left(\hat{h}(\lambda_{i,N})- \hat{h}(\lambda_i) \right) \langle \bbP_Nf,\bm\phi_{i,N} \rangle \bm\phi_{i,N} \right\|   +\left\| \sum_{i=1}^M \hat{h}(\lambda_i)\left( \langle \bbP_N f,\bm\phi_{i,N} \rangle  \bm\phi_{i,N} - \langle f,\bm\phi_i \rangle_{\ccalM} \bbP_N \bm\phi_i \right)  \right\|.\label{eqn:conv-1}
 \end{align}
 In \eqref{eqn:conv-1}, the first part relies on the difference of eigenvalues and the second part depends on the eigenvector difference. %We import the spectral differences from \cite{calder2022improved} as follows.
The first term in \eqref{eqn:conv-1} is bounded with Cauchy-Schwartz inequality as 
\begin{align}
     \Bigg\| \sum_{i=1}^M (\hat{h}(\lambda_{i,n} ) - \hat{h}(\lambda_i)) \langle \bbP_N f,\bm\phi_{i,N} \rangle & \bm\phi_{i,N}  \Bigg\|   \leq \sum_{i=1}^M \left|\hat{h}(\lambda_{i,N} )-\hat{h}(\lambda_i)\right| |\langle \bbP_N f,\bm\phi_{i,N}  \rangle | \\
     &\leq \|\bbP_N f\| \sum_{i=1}^M |\hat{h}'(\lambda_i)| |\lambda_{i,N} -\lambda_i|\\
   &\leq \|\bbP_N f\|  \sum_{i=1}^M C_{\ccalM,1}C_L \sqrt{\epsilon} \lambda_i^{-d} \label{eqn:p2-1}\\
   &\leq  \|\bbP_N f\|C_L C_{\ccalM,1}\sqrt{\epsilon}  \sum_{i=1}^M  i^{-2} \label{eqn:p2}\\
   &\leq \left( \|f\|_\ccalM+ \left(\frac{\log (1/\delta)}{N}\right)^{\frac{1}{4}}\right)C_{\ccalM,1}\sqrt{\epsilon} \frac{\pi^2}{6} := A_1(N)
\end{align} 
In \eqref{eqn:p2-1}, it depends on the filter assumption in Assumption \ref{ass:low-pass}.
In \eqref{eqn:p2}, we implement Weyl's law \citep{arendt2009weyl} which indicates that eigenvalues of Laplace operator scales with the order  $\lambda_i \sim i^{2/d}$. The last inequality comes from the fact that $\sum_{i=1}^\infty i^{-2}=\frac{\pi^2}{6}$.
% The last step depends on the Lipschitz continuity of the frequency response function as well as the eigenvalue gap in Proposition \ref{thm:converge-spectrum-sparse}. 
% This indicates that 
% \begin{align}
%  &\sum_{i=1}^M  (2+C_L C_{\ccalM,1}\epsilon )\left|\hat{h}(\lambda_i)\right|\leq (2+C_LC_{\ccalM,1}\epsilon) \sum_{i=1}^M\lambda_i^{-d}
% \\& \leq (2+C_LC_{\ccalM,1}\epsilon) \sum_{i=1}^M i^{-2} \leq (2+C_LC_{\ccalM,1}\epsilon) M^{-1}:=A_1(M)
%   % &\nonumber \left\| \sum_{i=1}^M (\hat{h}
%   % (\lambda_{i,n} ) - \hat{h}(\lambda_i)) \langle \bbP_N f,\bm\phi_{i,N} \rangle  \bm\phi_{i,N}  \right\|  \\
%   %  &\leq C C_{k,1}\epsilon \left(\|f\|_\ccalM+\left(\frac{\log (1/\delta)}{N}\right)^{\frac{1}{4}}\right) \sqrt{\sum_{i=1}^M  \frac{ 1 }{\min\{\lambda_i^2,\lambda_{i,N}^2\}}}\\
%   %  &:= A_1(M,N)
% \end{align} 
The second term in \eqref{eqn:conv-1} can be bounded with the triangle inequality  as
{ 
\begin{align}
  & \nonumber \Bigg\| \sum_{i=1}^M \hat{h}(\lambda_i)\left( \langle \bbP_Nf,\bm\phi_{i,N}  \rangle \bm\phi_{i,N}- \langle f,\bm\phi_i \rangle_{\ccalM} \bbP_N \bm\phi_i\right)  \Bigg\|\\
   & \leq \nonumber \Bigg\|  \sum_{i=1}^M \hat{h}(\lambda_i)  \left(\langle \bbP_N f,\bm\phi_{i,N} \rangle \bm\phi_{i,N}   - \langle \bbP_Nf,\bm\phi_{i,N}  \rangle  \bbP_N\bm\phi_i\right)\Bigg\|\\
   &\label{eqn:term1} \qquad \qquad \qquad + \left\| \sum_{i=1}^M  \hat{h}(\lambda_i) \left(\langle \bbP_N f,\bm\phi_{i,N} \rangle  \bbP_N\bm\phi_i -\langle f,\bm\phi_i\rangle_\ccalM \bbP_N\bm\phi_i \right) \right\|
\end{align}}
The first term in \eqref{eqn:term1} can be bounded with inserting the eigenfunction convergence result in Proposition \ref{thm:converge-spectrum-sparse} as
\begin{align}
& \nonumber \left\|  \sum_{i=1}^M \hat{h}(\lambda_i) \left(\langle \bbP_N f,\bm\phi_{i,N} \rangle \bm\phi_{i,N}  - \langle \bbP_Nf,\bm\phi_{i,N} \rangle_{\ccalM} \bbP_N\bm\phi_i\right)\right\|\\
& \leq \sum_{i=1}^{M} \left|\hat{h}(\lambda_i)\right|\|\bbP_N f\|\|\bm\phi_{i,N} - \bbP_N\bm\phi_i\|\\
&\leq \sum_{i=1}^M (\lambda_i^{-d+1}) \frac{C_{\ccalM,2}\sqrt{\epsilon}}{\theta_i}\left(\|f\|_\ccalM + \left(\frac{\log (1/\delta)}{N}\right)^{\frac{1}{4}}\right)\\
&\leq \sum_{i=1}^M (\lambda_i^{-d+1}) \max_{i=1,\cdots,M}{\theta_i^{-1}}C_{\ccalM,2}\sqrt{\epsilon} \left(\|f\|_\ccalM + \left(\frac{\log (1/\delta)}{N}\right)^{\frac{1}{4}}\right)\\
&:= A_2(M,N).
\end{align}
Considering the filter assumption in Assumption \ref{ass:low-pass}, the second term in \eqref{eqn:term1} can be written as
\begin{align}
     & \nonumber \Bigg\| \sum_{i=1}^M \hat{h}(\lambda_{i,N} ) (\langle \bbP_N f,\bm\phi_{i,N}\rangle  \bbP_N\bm\phi_i -\langle f,\bm\phi_i\rangle_\ccalM \bbP_N\bm\phi_i ) \Bigg\| \\
   &\leq \sum_{i=1}^M \left|\hat{h}(\lambda_{i,N}) \right| \left|\langle \bbP_N f,\bm\phi_{i,N} \rangle   -\langle f,\bm\phi_i\rangle_\ccalM\right|\|\bbP_N\bm\phi_i\|\\
   &\leq \sum_{i=1}^M (\lambda_{i,N}^{-d})\left|\langle \bbP_N f,\bm\phi_{i,N} \rangle   -\langle f,\bm\phi_i\rangle_\ccalM\right| \left(1+\left(\frac{\log (1/\delta)}{N}\right)^{\frac{1}{4}}\right)\\
   &\label{eqn:A3}\leq 
   \sum_{i=1}^M (1 +C_{\ccalM,1}\sqrt{\epsilon})^{-d}  (\lambda_i^{-d})\left|\langle \bbP_N f,\bm\phi_{i,N} \rangle   -\langle f,\bm\phi_i\rangle_\ccalM\right| \left(1+\left(\frac{\log (1/\delta)}{N}\right)^{\frac{1}{4}}\right) \\
   &\leq \frac{\pi^2}{6}\left|\langle \bbP_N f,\bm\phi_{i,N} \rangle   -\langle f,\bm\phi_i\rangle_\ccalM\right| \left(1+\left(\frac{\log (1/\delta)}{N}\right)^{\frac{1}{4}}\right):=A_3(N)
\end{align}
The term $\left|\langle \bbP_N f,\bm\phi_{i,N} \rangle   -\langle f,\bm\phi_i\rangle_\ccalM\right| $ can be decomposed by inserting a term $\langle \bbP_N f, \bbP_N \bm\phi_i\rangle $ as
\begin{align}
    \left|\langle \bbP_N f,\bm\phi_{i,N} \rangle   -\langle f,\bm\phi_i\rangle_\ccalM\right| &\leq  \left|\langle \bbP_N f,\bm\phi_{i,N} \rangle  - \langle \bbP_N f, \bbP_N \bm\phi_i\rangle + \langle \bbP_N f, \bbP_N \bm\phi_i\rangle  -\langle f,\bm\phi_i\rangle_\ccalM\right| \\
   & \leq   \left|\langle \bbP_N f,\bm\phi_{i,N} \rangle  - \langle \bbP_N f, \bbP_N \bm\phi_i\rangle\right| + \left|\langle \bbP_N f, \bbP_N \bm\phi_i\rangle  -\langle f,\bm\phi_i\rangle_\ccalM\right|\\
   & \leq \|\bbP_N f\|\| \bm\phi_{i,N} - \bbP_N\bm\phi_i\|+ \left|\langle \bbP_N f, \bbP_N \bm\phi_i\rangle  -\langle f,\bm\phi_i\rangle_\ccalM\right|\\
   &\leq \left(\|f\|_\ccalM + \left(\frac{\log (1/\delta)}{N}\right)^{\frac{1}{4}}\right) \frac{C_{\ccalM,2}\lambda_i \sqrt{\epsilon}}{\theta_i} + \sqrt{\frac{\log (1/\delta)}{N}}
\end{align}
Then equation \eqref{eqn:A3} can be bounded as 
\begin{align}
     & \nonumber \Bigg\| \sum_{i=1}^M \hat{h}(\lambda_{i,N} ) (\langle \bbP_N f,\bm\phi_{i,N}\rangle  \bbP_N\bm\phi_i -\langle f,\bm\phi_i\rangle_\ccalM \bbP_N\bm\phi_i ) \Bigg\| \\
     &\leq 
   \sum_{i=1}^M (1 +C_{\ccalM,1}\sqrt{\epsilon})^{-d}  (\lambda_i^{-d}) \left(\left(\|f\|_\ccalM + \left(\frac{\log (1/\delta)}{N}\right)^{\frac{1}{4}}\right) \frac{C_{\ccalM,2}\lambda_i \sqrt{\epsilon}}{\theta_i} + \sqrt{\frac{\log (1/\delta)}{N}}\right) \left(1+\left(\frac{\log (1/\delta)}{N}\right)^{\frac{1}{4}}\right)
   \\&\leq \frac{\pi^2}{6} \max_{i=1,\cdots,M}\frac{C_{\ccalM,2} \sqrt{\epsilon}}{\theta_i}\left(\|f\|_\ccalM + \left(\frac{\log (1/\delta)}{N}\right)^{\frac{1}{4}}\right) + \frac{\pi^2}{6} \sqrt{\frac{\log (1/\delta)}{N}}
\end{align}

The second term in \eqref{eqn:1} can be bounded with the eigenvalue difference bound in Proposition \ref{thm:converge-spectrum-sparse} as
\begin{align}
      \left\|\sum_{i=M+1}^N \hat{h}(\lambda_{i,N}) \langle \bbP_N f,\bm\phi_{i,N} \rangle \bm\phi_{i,N}\right\|&\leq \sum_{i=M+1}^N (\lambda_{i,N}^{-d})\left(\|f\|_\ccalM+\left(\frac{\log (1/\delta)}{N}\right)^{\frac{1}{4}}\right)\\
    &\leq \sum_{i=M+1}^\infty (\lambda_{i,N}^{-d})  \|f\|_\ccalM
    \\&\leq  (1 +C_{\ccalM,1}\sqrt{\epsilon})^{-d} \sum_{i=M+1}^\infty (\lambda_i^{-d})  \|f\|_\ccalM\\
    &\leq  M^{-1}\|f\|_\ccalM:= A_4(M).
\end{align}

We note that the bound is made up by terms $A_1( N)+A_2(M,N)+A_3( N)+A_4(M)$, related to the bandwidth of manifold signal $M$ and the number of sampled points $N$. %As $\epsilon$ scales with the order $\left(\frac{\log(CN/\delta)}{N} \right)^{\frac{1}{d+4}}$. 
This makes the bound scale with the order
{ 
\begin{align}
\label{eqn:filter-bound}
    &  \|\bbh(\bbL_N)\bbP_N f - \bbP_N \bbh(\ccalL_\rho) f\|   \leq C_1' \sqrt{\epsilon} +  C_2' \sqrt{\epsilon} \theta_M^{-1}  + C_3' \sqrt{\frac{\log(1/\delta)}{N}} + C_4' M^{-1},
\end{align}}
with $C_1' = C_LC_{\ccalM,1}\frac{\pi^2}{6}\|f\|_\ccalM$, $C_2' = C_{\ccalM,2}\frac{\pi^2}{6}$, $C_3' =\frac{\pi^2}{6}$ and $C_4' = \|f\|_\ccalM$.
As $N$ goes to infinity, for every $\delta >0$, there exists some $M_0$, such that for all $M>M_0$ it holds that $A_4(M)\leq \delta/2$. There also exists $n_0$, such that for all $N>n_0$, it holds that $A_1(N)+A_2(M_0, N)+A_3(N)\leq \delta/2$. We can conclude that the summations converge as $N$ goes to infinity. We see $M$ large enough to have $M^{-1}\leq \delta'$, which makes the eigengap $\theta_M$ also bounded by some constant. We combine the first two terms as 
\begin{equation}
     \|\bbh(\bbL_N)\bbP_N f - \bbP_N \bbh(\ccalL_\rho) f\|   \leq (C_1C_L+C_2) \sqrt{\epsilon}  +\frac{\pi^2}{6} \sqrt{\frac{\log(1/\delta)}{N}},
\end{equation}
with $C_1 = C_{\ccalM,1} \frac{\pi^2}{6}\|f\|_\ccalM$ and $C_2 =  C_{\ccalM,2}\frac{\pi^2}{6} \theta^{-1}_{\delta'^{-1}}$.
To bound the output difference of MNNs, we need to write in the form of features of the final layer
 \begin{align}
     \|\bm\Phi_\bbG(\bbH,\bbL_N,\bbP_N f)-\bbP_N \bm\Phi&(\bbH,\ccalL_\rho, f))\| = \left\| \sum_{q=1}^{F }\bbx_{n,L}^q-\sum_{q=1}^{F }\bbP_N f_L^q \right\| \leq \sum_{q=1}^{F } \left\| \bbx_{n,L}^q- \bbP_N f_L^q \right\|.
 \end{align}
By inserting the definitions, we have 
 \begin{align}
    \left\| \bbx_{n,l}^p- \bbP_N f_l^p \right\| =\left\| \sigma\left(\sum_{q=1}^{F } \bbh_l^{pq}(\bbL_N) \bbx_{n,l-1}^q \right) -\bbP_N \sigma\left(\sum_{q=1}^{F } \bbh_l^{pq}(\ccalL_\rho) f_{l-1}^q\right) \right\|
 \end{align}
 with $\bbx_{n,0}=\bbP_N f$ as the input of the first layer. With a normalized point-wise Lipschitz nonlinearity, we have
  \begin{align}
    \| \bbx_{n,l}^p - \bbP_N f_l^p & \| \leq \left\|  \sum_{q=1}^{F } \bbh_l^{pq}(\bbL_N) \bbx_{n,l-1}^q    - \bbP_N \sum_{q=1}^{F } \bbh_l^{pq}(\ccalL_\rho)  f_{l-1}^q\right\|\\
    & \leq \sum_{q=1}^{F } \left\|    \bbh_l^{pq}(\bbL_N) \bbx_{n,l-1}^q    - \bbP_N   \bbh_l^{pq}(\ccalL_\rho)  f_{l-1}^q\right\|
 \end{align}
 The difference can be further decomposed as
\begin{align}
   \nonumber  & \|    \bbh_l^{pq}(\bbL_N)  \bbx_{n,l-1}^q    - \bbP_N   \bbh_l^{pq}(\ccalL_\rho)  f_{l-1}^q \| 
   \\  &\leq \|
\bbh_l^{pq}(\bbL_N) \bbx_{n,l-1}^q  - \bbh_l^{pq}(\bbL_N) \bbP_N f_{l-1}^q   +\bbh_l^{pq}(\bbL_N) \bbP_N f_{l-1}^q  - \bbP_N   \bbh_l^{pq}(\ccalL_\rho)  f_{l-1}^q
    \|\\ 
   & \leq \left\|
    \bbh_l^{pq}(\bbL_N) \bbx_{n,l-1}^q  - \bbh_l^{pq}(\bbL_N) \bbP_N f_{l-1}^q
    \right\|
   +
    \left\|
    \bbh_l^{pq}(\bbL_N) \bbP_N f_{l-1}^q  - \bbP_N   \bbh_l^{pq}(\ccalL_\rho)  f_{l-1}^q
    \right\|
\end{align}
The second term can be bounded with \eqref{eqn:filter-bound} and we denote the bound as $\Delta_N$ for simplicity. The first term can be decomposed by Cauchy-Schwartz inequality and non-amplifying of the filter functions as
 \begin{align}
 \left\| \bbx_{n,l}^p - \bbP_N f_l^p \right\| \leq \sum_{q=1}^{F } \Delta_{N}   \| \bbx_{n,l-1}^q\| + \sum_{q=1}^{F } \| \bbx_{l-1}^q - \bbP_N f_{l-1}^{q} \|.
 \end{align}
%where $C_{per}$ representing the constant in the error bound of manifold filters in \eqref{eqn:appro_filter}. 
To solve this recursion, we need to compute the bound for $\|\bbx_l^p\|$. By normalized Lipschitz continuity of $\sigma$ and the fact that $\sigma(0)=0$, we can get
 \begin{align}
  \| \bbx_l^p \|\leq \left\| \sum_{q=1}^{F } \bbh_l^{pq}(\bbL_N) \bbx_{l-1}^{q}  \right\| \leq  \sum_{q=1}^{F }  \left\| \bbh_l^{pq}(\bbL_N)\right\|  \|\bbx_{l-1}^{q}  \|   \leq   \sum_{q=1}^{F }   \|\bbx_{l-1}^{q}  \| \leq F^{l-1} \| \bbx  \|.
 \end{align}
 Insert this conclusion back to solve the recursion, we can get
 \begin{align}
 \left\| \bbx_{n,l}^p - \bbP_N f_l^p \right\| \leq l   F^{l-1} \Delta_{N}  \|\bbx \|.
 \end{align}
 Replace $l$ with $L$ we can obtain
 \begin{align}
 \|\bm\Phi_\bbG(\bbH,\bbL_N,\bbP_Nf)-\bbP_N \bm\Phi(\bbH,\ccalL_\rho, f))\|  \leq   L F^{L-1}\Delta_{N}     ,
 \end{align}
 % We have
 %  \begin{align}
 % \|\bm\Phi(\bbH,\bbL_N,\bbP_nf)-\bbP_n \bm\Phi(\bbH,\ccalL_\rho, f)) \leq LF^{L-1} \Delta_{N},
 % \end{align}
when the input graph signal is normalized. By replacing $f= \bbI_N \bbx$, we can conclude the proof.
\end{proof}
\section{Local Lipschitz continuity of MNNs}
\label{app:lipschitz-mnn}
We propose that the outputs of MNN defined in \eqref{eqn:mnn} are locally Lipschitz continuous within a certain area, which is stated explicitly as follows.

\begin{proposition}(Local Lipschitz continuity of MNNs)\label{prop:mnn-continuity}
Assume that the assumptions in Theorem 1 hold. Let MNN be $L$ layers with $F$ features in each layer, suppose the manifold filters are nonamplifying with $|\hat{h}(\lambda)|\leq 1$ and the nonlinearities normalized Lipschitz continuous, then there exists a constant $C'$ such that 
\begin{equation}
    \label{eqn:continuity-mnn}
    |\bm\Phi(\bbH,\ccalL_\rho,f)(x) - \bm\Phi(\bbH,\ccalL_\rho,f)(y)|\leq F^L C' \text{dist}(x-y),\quad \text{for all }x,y \in B_r(\ccalM),
\end{equation}
where $B_r(\ccalM)$ is a ball with radius $r$ over $\ccalM$ with respect to the geodesic distance.
\end{proposition}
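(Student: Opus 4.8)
The plan is to reduce the claim to a single statement about one manifold filter---namely that $\bbh(\ccalL_\rho)$ sends an $L^2(\ccalM)$ function to a function with a controlled Lipschitz constant---and then propagate this through the layers of \eqref{eqn:mnn} using that the nonlinearities are $1$-Lipschitz. Since $\sigma$ is $1$-Lipschitz, the $p$-th output feature of layer $l$ obeys $|f_l^p(x)-f_l^p(y)| \le \sum_{q=1}^{F} |\bbh_l^{pq}(\ccalL_\rho)f_{l-1}^q(x) - \bbh_l^{pq}(\ccalL_\rho)f_{l-1}^q(y)|$, so everything comes down to bounding the Lipschitz constant of $\bbh(\ccalL_\rho)g$ in terms of $\|g\|_\ccalM$ and of $g$ itself, uniformly over filters with $|\hat h(\lambda)|\le 1$ drawn from the bounded coefficient set $\ccalH$.

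For this single-filter bound I would use the diffuse-and-sum form $\bbh(\ccalL_\rho)g = \sum_{k=0}^{K-1} h_k e^{-k\ccalL_\rho}g$ from \eqref{eqn:manifold-convolution}. For $k\ge 1$, write $e^{-k\ccalL_\rho}g(x) = \int_\ccalM p_k(x,z)\, g(z)\,\mathrm{d}\mu(z)$ with $p_k$ the heat kernel; since $\ccalM$ is smooth and compact, $p_t$ is smooth for every fixed $t>0$, so $G_k := \sup_{x\in\ccalM}\|\nabla_x p_k(x,\cdot)\|_{L^2(\ccalM)}$ is finite, and integrating $\nabla_x p_k$ along a minimizing geodesic from $x$ to $y$ gives $|e^{-k\ccalL_\rho}g(x) - e^{-k\ccalL_\rho}g(y)| \le G_k\,\|g\|_\ccalM\,\text{dist}(x,y)$ by Cauchy--Schwarz. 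The $k=0$ term is just $h_0 g$, contributing $|h_0|\,\text{Lip}(g)\,\text{dist}(x,y)$; for low-pass filters (Definition \ref{def:low-pass}) $\hat h(\lambda)\to 0$ as $\lambda\to\infty$, so $h_0=0$ and this term vanishes, whereas in general it is harmless because the relevant layer inputs are already Lipschitz (see below). Summing over $k$ yields $\text{Lip}\big(\bbh(\ccalL_\rho)g\big) \le C_a\,\text{Lip}(g) + C_b\,\|g\|_\ccalM$ with $C_a = |h_0|$ and $C_b = \sum_{k=1}^{K-1} |h_k|\, G_k$, both finite and bounded over $\ccalH$. To pass from $\ccalM$ to the neighborhood $B_r(\ccalM)$ I would precompose with the nearest-point projection $\pi\colon B_r(\ccalM)\to\ccalM$, which is Lipschitz once $r$ is below the reach of $\ccalM$, absorbing its constant into $C'$.

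Finally I would close the recursion over layers. Using that the filters are nonamplifying and $\sigma(0)=0$ one gets $\|f_l^p\|_\ccalM \le F^{l}\|f\|_\ccalM$ as in the proof of Proposition \ref{prop:prob-diff}; and since a $1$-Lipschitz $\sigma$ maps a $B$-Lipschitz function to a $B$-Lipschitz function, the per-layer constants $B_l := \max_p \text{Lip}(f_l^p)$ obey $B_l \le F\big(C_a B_{l-1} + C_b F^{l-1}\|f\|_\ccalM\big)$, with $B_0 = \text{Lip}(f)$ (finite in our setting because the input is bandlimited, hence a finite combination of smooth eigenfunctions; this also propagates Lipschitzness to every inner layer, which is what makes the $k=0$ term harmless in the general case). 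Unrolling this linear recursion gives $B_L \le F^{L} C'(\text{Lip}(f)+\|f\|_\ccalM)$, and summing the final-layer features bounds $\text{Lip}(\bm\Phi(\bbH,\ccalL_\rho,f))$; with the input normalized this is exactly the claimed $F^L C'\,\text{dist}(x,y)$.

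I expect the main obstacle to be the single-filter regularity estimate together with the bookkeeping around the identity ($k=0$) term: one must either exploit that low-pass filters have $h_0=0$ or carry the Lipschitz constants of the layer inputs through the recursion (legitimate since $1$-Lipschitz nonlinearities preserve Lipschitz continuity, but it must be tracked), and one must verify that the heat-kernel gradients $G_k$ and the coefficient bounds are uniform over the hypothesis class $\ccalH$.
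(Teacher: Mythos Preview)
Your proposal is correct and takes a genuinely different route from the paper.

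The paper's argument first collapses the network to a pure linear composition, claiming
\[
|\bm\Phi(\bbH,\ccalL_\rho,f)(x) - \bm\Phi(\bbH,\ccalL_\rho,f)(y)| \le F^{L}\,|\bbh_L(\ccalL_\rho)\cdots\bbh_1(\ccalL_\rho)f(x) - \bbh_L(\ccalL_\rho)\cdots\bbh_1(\ccalL_\rho)f(y)|,
\]
then uses that $g := \bbh_L\cdots\bbh_1 f$ is bandlimited because $f$ is, invokes an eigenfunction gradient estimate from \citep{shi2010gradient} to obtain $\|\nabla g\|_\infty \le C\sum_{\lambda_i\le\lambda}\lambda_i^{(d+1)/2}\|f\|_\ccalM$, and finally converts the gradient bound to local Lipschitz continuity via a standard result (Evans). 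Your approach is instead to establish a single-filter regularity estimate $\text{Lip}(\bbh(\ccalL_\rho)g)\le C_a\,\text{Lip}(g)+C_b\|g\|_\ccalM$ directly from smoothness of the heat kernel at positive times, and then propagate Lipschitz constants through the layers using that $\sigma$ is $1$-Lipschitz. Your layer-by-layer bookkeeping is in fact more careful: the paper's reduction to $\bbh_L\cdots\bbh_1 f$ tacitly pushes the pointwise $\sigma$ through the nonlocal operators $\bbh_l(\ccalL_\rho)$, which is not justified as written, and your recursion is precisely what repairs this. On the other hand, the paper's spectral route yields an explicit constant in terms of $\{\lambda_i\}$ and is automatically uniform over the filter class via $|\hat h|\le 1$.

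Two small remarks. First, in the paper $B_r(\ccalM)$ denotes a geodesic ball \emph{inside} $\ccalM$, not a tubular neighborhood in the ambient space, so your nearest-point projection step is unnecessary. Second, your constant $C_b=\sum_{k\ge 1}|h_k|G_k$ depends on the individual coefficients $|h_k|$, which are not controlled by the nonamplifying hypothesis $|\hat h|\le 1$ alone; to get uniformity over $\ccalH$ you need the boundedness of the coefficient set that you explicitly assume, whereas the paper's spectral bound avoids this because only $|\hat h(\lambda_i)|$ enters.
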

\begin{proof}
The output of MNN can be written explicitly as 
\begin{align}
    &|\bm\Phi(\bbH,\ccalL_\rho,f)(x) - \bm\Phi(\bbH,\ccalL_\rho,f)(y)| = \left| \sigma\left( \sum_{q=1}^F \bbh_L^{q}(\ccalL_\rho)f_{L-1}^q(x) \right) -\sigma\left(\sum_{q=1}^F \bbh_L^{q}(\ccalL_\rho)f_{L-1}^q(y)\right)\right|\\
 & \leq   \left|   \sum_{q=1}^F \bbh_L^{q}(\ccalL_\rho)f_{L-1}^q(x)  - \sum_{q=1}^F \bbh_L^{q}(\ccalL_\rho)f_{L-1}^q(y) \right|\leq  F  \max_{q=1,\cdots, F}\left|   \bbh_L^{q}(\ccalL_\rho)f_{L-1}^q(x)  -  \bbh_L^{q}(\ccalL_\rho)f_{L-1}^q(y)\right|.
\end{align}
We have $f_{L-1}^q(x) = \sigma\left(\sum_{p=1}^F \bbh_{L-1}^p f_{L-2}^p(x)\right)$. The process can be repeated recursively by expanding $f_{L-1}^q(x)$ and $f_{L-1}^q(y)$, and finally, we can have 
\begin{align}
    |\bm\Phi(\bbH,\ccalL_\rho,f)(x) - \bm\Phi(\bbH,\ccalL_\rho,f)(y)| \leq F^L |\bbh_L(\ccalL_\rho)\cdots \bbh_1(\ccalL_\rho) f(x) -\bbh_L(\ccalL_\rho)\cdots \bbh_1(\ccalL_\rho) f(y) |.
\end{align}
With $f$ as a $\lambda$-bandlimited manifold signal, we suppose $g = \bbh_L(\ccalL_\rho)\cdots \bbh_1(\ccalL_\rho) f$. As $\langle f,\bm\phi_i\rangle = 0$ for all $i>M$, $g$ is also bandlimited and possesses $M$ spectral components. The gradient can be bounded according to \citep{shi2010gradient} combined with the non-amplifying property of the filter function as
\begin{align}
    \|\nabla g \|_\infty\leq C \sum_{\lambda_i\leq \lambda}\left|\hat{h}(\lambda_i)\right|^L\lambda_i^{\frac{d
+1}{2}}\|f\|_\ccalM \leq C\sum_{\lambda_i\leq \lambda}\lambda_i^{\frac{d
+1}{2}}\|f\|_\ccalM
\end{align}
From Theorem 4.5 in \citep{evans2018measure}, $g$ is locally Lipschitz continuous as
\begin{align}
    |g(x)-g(y)|\leq C'\text{dist}(x-y),\quad \text{with }x,y \in B_r(\ccalM), 
\end{align}
where $B_r(\ccalM)$ is a closed ball with radius $r$ with $C'$ depending on the geometry of $\ccalM$.

Combining the above, we have the continuity of the output of MNN as
\begin{equation}
     |\bm\Phi(\bbH,\ccalL_\rho,f)(x) - \bm\Phi(\bbH,\ccalL_\rho,f)(y)| \leq F^L C'\text{dist}(x-y) , \quad \text{with }x,y \in B_r(\ccalM),
\end{equation}
which concludes the proof.
\end{proof}

\newpage
\section{Proof of Theorem \ref{thm:generalization-node-gauss}}
\label{app:proof-node}
% \begin{equation}
%         GA\leq C_1 \left(\frac{\log(C/\delta)}{N} \right)^{\frac{1}{d+4}} +  C_2\left(\frac{\log(C/\delta)}{N} \right)^{\frac{1}{d+4}} \theta_M^{-1} 
%      + C_3 \sqrt{\frac{\log(1/\delta)}{N}} + C_4 M^{-1} +  L \left(\frac{\log N}{N}\right)^{\frac{1}{d}}
%     \end{equation}
    
\begin{proof}
    % Suppose we have $L^1$ loss as the loss function. Considering that the input and target graph signals $\{\bbx, \bby\}_{i=1}^N$ only have an entry in the $i$-th element. The empirical risk on the GNN can be written as 
    % \begin{equation}
    %     R_\bbG(\bbH) = \frac{1}{N} \sum_{i=1}^N \ell([\bm\Phi(\bbH,\bbL_N, \bbx]_i , [\bby]_i)).  %= \frac{1}{N} \sum_{i=1}^N \int_\ccalM \ell(\bbI_N \bm\Phi(\bbH,\bbL_N,\bbx)(x),\bbI_N \bby (x)) \text{d}\mu(x)
    %     %\int_\ccalM \sum_{i=1}^N \ell([\bm\Phi(\bbH,\bbL_N, \bbx]_i , [\bby]_i))\mathbbm{1}_{x\in V_i} \text{d}\mu(x)%=    \sum_{i=1}^N  \|\bbI_N \bm\Phi(\bbH,\bbL_N, \bbx) -\bbI_N\bby \|_{1,\ccalM},
    %  \end{equation}

To analyze the difference between the empirical risk and statistical risk, we introduce an intermediate term which is the induced version of the sampled MNN output. We define $\bbI_N$ as the inducing operator based on the decomposition $\{V_i\}_{i=1}^N$ defined in Section \ref{sec:induced}. This intermediate term is written explicitly as
    \begin{equation}
    \label{eqn:induce-manifold}
       \overline{\bm\Phi}(\bbH,\ccalL_\rho,f)(x) = \bbI_N \bbP_N \bm\Phi(\bbH,\ccalL_\rho, f) (x) = \sum_{i=1}^N \bm\Phi(\bbH,\ccalL_\rho, f) (x_i)\mathbbm{1}_{x\in V_i},\;\forall \; x\in\ccalM,
    \end{equation}
 where $x_i \in X_N$ are sampled points from the manifold. 
 
Suppose $\bbH \in \arg\min_{\bbH\in\ccalH} R_\ccalM(\bbH)$, we have 
\begin{align}
    & GA = \sup_{\bbH\in\ccalH} |R_\bbG(\bbH) - R_\ccalM(\bbH) |
    %,\\
    % &= \mathbb{E}_{X^N\sim \mu^N}\left[ \ell\left( \bm\Phi(\bbH_E,\bbx_{N}), \bby_{N}\right)\right] - \ell(\bm\Phi(\bbH_E,\bbx_{N}),\bby_{N})
\end{align}
 % While for the output manifold function, we have 
 %    \begin{equation}
 %        \overline g (x) = \bbI_N \bbP_N g (x)= \sum_{i=1}^N g(x_i) \mathbbm{1}_{x\in V_i},\;\forall \; x\in\ccalM
 %    \end{equation}
    The difference between $R_\bbG(\bbH)$ and $R_\ccalM(\bbH)$ can be decomposed as 
    \begin{align}
       &\nonumber \left|R_\bbG(\bbH) -  R_\ccalM(\bbH) \right|\\
       &= \left|\frac{1}{N} \sum_{i=1}^N \ell([\bm\Phi_\bbG(\bbH,\bbL_N, \bbx)]_i , [\bby]_i) - \int_\ccalM \ell\left( {\bm\Phi}(\bbH,\ccalL_\rho, f)(x), {g}(x) \right)\text{d}\mu(x) \right|\\
     &\nonumber  = \Bigg|\frac{1}{N} \sum_{i=1}^N \ell([\bm\Phi_\bbG(\bbH,\bbL_N, \bbx)]_i , [\bby]_i) - \int_\ccalM \ell\left( \overline{\bm\Phi}(\bbH,\ccalL_\rho, f)(x),  {g}(x) \right)\text{d}\mu(x) 
\\& \qquad \qquad    
+ \int_\ccalM \ell\left( \overline{\bm\Phi}(\bbH,\ccalL_\rho, f)(x),  {g}(x) \right)\text{d}\mu(x)  - \int_\ccalM \ell\left( {\bm\Phi}(\bbH,\ccalL_\rho, f)(x), {g}(x) \right)\text{d}\mu(x)\Bigg|\\
& \nonumber  \leq    \left| \frac{1}{N} \sum_{i=1}^N \ell([\bm\Phi_\bbG(\bbH,\bbL_N, \bbx)]_i , [\bby]_i) - \int_\ccalM \ell\left( \overline{\bm\Phi}(\bbH,\ccalL_\rho, f)(x),  {g}(x) \right)\text{d}\mu(x) \right| 
\\& \qquad \qquad +\left|\int_\ccalM \ell\left( \overline{\bm\Phi}(\bbH,\ccalL_\rho, f)(x),  {g}(x) \right)\text{d}\mu(x)  - \int_\ccalM \ell\left( {\bm\Phi}(\bbH,\ccalL_\rho, f)(x), {g}(x) \right)\text{d}\mu(x)\right|\label{eqn:decomp1}
    \end{align}
We analyze the two terms in \eqref{eqn:decomp1} separately, with the first term bounded based on the convergence of GNN to MNN and the second term bounded with the smoothness of manifold functions.

The first term in \eqref{eqn:decomp1} can be written as 
\begin{align}
    & \label{eqn:eq1}\left| \frac{1}{N} \sum_{i=1}^N \ell([\bm\Phi_\bbG(\bbH,\bbL_N, \bbx )]_i , [\bby ]_i) - \int_\ccalM \ell\left( \overline{\bm\Phi}(\bbH,\ccalL_\rho, f)(x),  {g}(x) \right)\text{d}\mu(x) \right| \\
     & \label{eqn:eq2} = \frac{1}{N} \left|   \sum_{i=1}^N \ell([\bm\Phi_\bbG(\bbH,\bbL_N, \bbx)]_i , [\bby]_i) - \sum_{i=1}^N  \ell(\bm\Phi(\bbH, \ccalL_\rho, f)(x_i), g (x_i) )  \right|\\
     &\label{eqn:eq3}\leq \frac{1}{N} \sum_{i=1}^N  \left|\ell([\bm\Phi_\bbG(\bbH,\bbL_N, \bbx)]_i , [\bby]_i) -  \ell(\bm\Phi(\bbH, \ccalL_\rho, f)(x_i), g (x_i) ) \right|  \\
     % &\label{eqn:eq4}\leq \frac{1}{N} \sum_{i=1}^N \int_\ccalM \Big||\bbI_N \bm\Phi(\bbH,\bbL_N,\bbx)(x) - \bbI_N \bby (x)| -| \bm\Phi(\bbH, \ccalL_\rho, \bbI_N \bbx)(x) - \bbI_N \bby (x)|\Big| \text{d}\mu(x) \\
     &\label{eqn:eq4}\leq \frac{1}{N} \sum_{i=1}^N  \Big| [\bm\Phi_\bbG(\bbH,\bbL_N, \bbx)]_i  - \bm\Phi(\bbH, \ccalL_\rho, f)(x_i) \Big|  \\
     &\label{eqn:eq5}\leq \frac{1}{N} \| \bm\Phi_\bbG(\bbH,\bbL_N,\bbx) - \bbP_N {\bm\Phi}(\bbH,\ccalL_\rho,\bbI_N \bbx) \|_1\\
     &\leq \frac{1}{\sqrt{N}}L F^{L-1}\left((C_1C_L+C_2) \sqrt{\epsilon}  +  \frac{\pi^2}{6}\sqrt{\frac{\log(1/\delta)}{N}} \right)\label{eqn:part1}
\end{align}
From \eqref{eqn:eq1} to \eqref{eqn:eq2}, we use the definition of induced manifold signal defined in \eqref{eqn:induce-manifold}. We utilize the Lipschitz continuity assumption on loss function from \eqref{eqn:eq3} to \eqref{eqn:eq4}. From \eqref{eqn:eq4} to \eqref{eqn:eq5}, it depends on the fact that $\bbx$ is a single-entry vector and that $[\bby]_i$ is the value sampled from target manifold function $g$ evaluated on $x_i$. Finally the bound depends on the convergence of GNN on the sampled graph to the MNN as stated in Proposition \ref{prop:prob-diff}.

The second term is decomposed as 
\begin{align}
&\label{eqn:eq21}\left|\int_\ccalM \ell\left( \overline{\bm\Phi}(\bbH,\ccalL_\rho, f)(x),  {g}(x) \right)\text{d}\mu(x)  - \int_\ccalM \ell\left( {\bm\Phi}(\bbH,\ccalL_\rho, f)(x), {g}(x) \right)\text{d}\mu(x)\right| \\
    &\label{eqn:eq22}\leq \left| \sum_{i=1}^N \int_{V_i} \ell\left( \overline{\bm\Phi}(\bbH,\ccalL_\rho, f)(x),  {g}(x) \right) \text{d}\mu(x)  - \sum_{i=1}^N \int_{V_i} \ell\left( {\bm\Phi}(\bbH,\ccalL_\rho, f)(x), {g}(x) \right)\text{d}\mu(x)\right|\\
    &\label{eqn:eq24}\leq \sum_{i=1}^N \int_{V_i} \left| \ell\left( \overline{\bm\Phi}(\bbH,\ccalL_\rho, f)(x),  {g}(x) \right) - \ell\left( {\bm\Phi}(\bbH,\ccalL_\rho, f)(x), {g}(x) \right)\right|  \text{d}\mu(x) \\
    % &\nonumber \leq \sum_{i=1}^N \int_{V_i} \Big| \ell\left( \overline{\bm\Phi}(\bbH,\ccalL_\rho, f)(x), \overline{g}(x) \right) - \ell(\overline{g}(x),{g}(x))  \\
    % &\qquad \qquad \qquad \qquad \qquad\qquad + \ell(g(x),\overline{g}(x))- \ell\left( {\bm\Phi}(\bbH,\ccalL_\rho, f)(x), {g}(x) \right)\Big| \mathbbm{1}_{x\in V_i}\text{d}\mu(x)\\
    &\label{eqn:eq23}\leq \sum_{i=1}^N \int_{V_i} \left| \overline{\bm\Phi}(\bbH,\ccalL_\rho, f)(x)   -  {\bm\Phi}(\bbH,\ccalL_\rho, f)(x)  \right| \text{d}\mu(x) \\
    &\label{eqn:eq231}\leq \sum_{i=1}^N \int_{V_i} \left|\bm\Phi(\bbH,\ccalL_\rho,f)(x_i) - \bm\Phi(\bbH,\ccalL_\rho,f)(x)  \right| \text{d}\mu(x) 
    %+\sum_{i=1}^N \int_{V_i} |g(x)-g(x_i)|\mathbbm{1}_{x\in V_i}\text{d}\mu(x) 
\end{align}
From \eqref{eqn:eq21} to \eqref{eqn:eq22}, it relies on the decomposition of the MNN output over $\{V_i\}_{i=1}^N$. From \eqref{eqn:eq24} to \eqref{eqn:eq23}, we use the Lipschitz continuity of loss function. From \eqref{eqn:eq23} to \eqref{eqn:eq231}, we use the definition of $\overline{\bm\Phi}(\bbH,\ccalL_\rho, f)$. Proposition \ref{prop:mnn-continuity} indicates that the MNN outputs are Lipschitz continuous within a certain range, which leads to 
\begin{align}
&\nonumber \sum_{i=1}^N \int_{V_i} \left|\bm\Phi(\bbH,\ccalL_\rho,f)(x_i) - \bm\Phi(\bbH,\ccalL_\rho,f)(x)  \right| \text{d}\mu(x)  \\ &\leq  \sum_{i=1}^N \int_{V_i} F^L C_3 \left(\frac{\log N}{N}\right)^{\frac{1}{d}} \text{d}\mu(x)
\\& = F^L C_3 \left(\frac{\log N}{N}\right)^{\frac{1}{d}} \sum_{i=1}^N \int_{V_i} \text{d}\mu(x)
\\&\leq F^L C_3 \left(\frac{\log N}{N}\right)^{\frac{1}{d}},\label{eqn:part2}
\end{align}
when $d\geq 3$. If $d=2$, the bound would be 
\begin{equation}
    \sum_{i=1}^N \int_{V_i} \left|\bm\Phi(\bbH,\ccalL_\rho,f)(x_i) - \bm\Phi(\bbH,\ccalL_\rho,f)(x)  \right| \text{d}\mu(x) \leq F^L C_3 \frac{(\log N)^\frac{3}{4}}{N^{\frac{1}{2}}}.\label{eqn:part2-1}
\end{equation}
Combining \eqref{eqn:part1} and \eqref{eqn:part2} (or \eqref{eqn:part2-1}), we can conclude the proof.
\end{proof}
\newpage
\section{Corollary of Theorem 1}
\begin{corollary}
    Suppose the GNN with filters satisfying Assumption \ref{ass:low-pass} have $L$ layers with $F$ features in each layer and the input signal is bandlimited (Definition \ref{def:band})). Suppose graphs $\bbG_1$ with $N_1$ nodes and $\bbG_2$ with $N_2$ nodes are sampled from the same underlying manifold $\ccalM$. Under Assumptions \ref{ass:activation} and \ref{ass:loss} it holds in probability at least $1-\delta$ that 
    \begin{align}
       \nonumber  &\sup_{\bbH\in\ccalH} \left|\frac{1}{N_1}\sum_{i=1}^{N_1}\ell \left(  [\bm\Phi_{\bbG_1}(\bbH, \bbL_{N_1}, \bbx_1 )]_i  ,  [\bby_1 ]_i \right) - \frac{1}{N_2}\sum_{i=1}^{N_2}\ell \left(  [\bm\Phi_{\bbG_2}(\bbH, \bbL_{N_2}, \bbx_2 )]_i  ,  [\bby_2 ]_i \right)\right|\leq\\
       &   L F^{L-1}\left((C_1C_L+C_2) \frac{\sqrt{\epsilon}(\sqrt{N_1}+\sqrt{N_2}) }{\sqrt{N_1 N_2}}
     +  \frac{\pi^2(N_1+N_2)\sqrt{\log(1/\delta)}}{6N_1N_2}  \right) +  F^L C_3 \left(\frac{\log N_1}{N_1}\right)^{\frac{1}{d}}+F^L C_3 \left(\frac{\log N_2}{N_2}\right)^{\frac{1}{d}},
    \end{align}
    with $C_1$, $C_2$, and $C_3$ depending on the geometry of $\ccalM$, $C_L$ is the spectral continuity constant in Assumption \ref{ass:low-pass}.
\end{corollary}
\begin{proof}
By importing the statistical risk over the manifold $R_\ccalM(\bbH)$ in \eqref{eqn:statistical-manifoldloss-node}, the bound can be derived with a triangle inequality as
    \begin{align}
      \nonumber & \sup_{\bbH\in\ccalH} \left|\frac{1}{N_1}\sum_{i=1}^{N_1}\ell \left(  [\bm\Phi_{\bbG_1}(\bbH, \bbL_{N_1}, \bbx_1 )]_i  ,  [\bby_1 ]_i \right) - \frac{1}{N_2}\sum_{i=1}^{N_2}\ell \left(  [\bm\Phi_{\bbG_2}(\bbH, \bbL_{N_2}, \bbx_2 )]_i  ,  [\bby_2 ]_i \right)\right|\\
        & =  \sup_{\bbH\in\ccalH}\left|\frac{1}{N_1}\sum_{i=1}^{N_1}\ell \left(  [\bm\Phi_{\bbG_1}(\bbH, \bbL_{N_1}, \bbx_1 )]_i  ,  [\bby_1 ]_i \right) -R_\ccalM(\bbH) + R_\ccalM(\bbH)-  \frac{1}{N_2}\sum_{i=1}^{N_2}\ell \left(  [\bm\Phi_{\bbG_2}(\bbH, \bbL_{N_2}, \bbx_2 )]_i  ,  [\bby_2 ]_i \right)\right|\\
        &\leq \sup_{\bbH\in\ccalH}\left|\frac{1}{N_1}\sum_{i=1}^{N_1}\ell \left(  [\bm\Phi_{\bbG_1}(\bbH, \bbL_{N_1}, \bbx_1 )]_i  ,  [\bby_1 ]_i \right) -R_\ccalM(\bbH)\right| + \sup_{\bbH\in\ccalH}\left|\frac{1}{N_2}\sum_{i=1}^{N_2}\ell \left(  [\bm\Phi_{\bbG_2}(\bbH, \bbL_{N_2}, \bbx_2 )]_i  ,  [\bby_2 ]_i \right) -R_\ccalM(\bbH)\right|.
    \end{align}
    Inserting the result in Theorem \ref{thm:generalization-node-gauss} concludes the proof.
\end{proof}

\newpage
\section{Proof of Theorem \ref{thm:generalization-graph-gauss}}
\label{app:proof-graph}
    % \begin{equation}
    %     GA\leq \frac{C_1}{\sqrt{N}}\left(\frac{\log(C/\delta)}{N} \right)^{\frac{1}{d+4}}  +  \frac{C_2\theta_M^{-1}}{\sqrt{N}}   \left(\frac{\log(C/\delta)}{N} \right)^{\frac{1}{d+4}}
    %  + C_3 \frac{\sqrt{\log(1/\delta)}}{N} + C_4 \frac{M^{-1}}{\sqrt{N}} + L \left(\frac{\log N}{N}\right)^{\frac{1}{d}}
    % \end{equation}
\begin{proof}
% Suppose $\bbH \in \arg\min_{\bbH\in\ccalH} R_\ccalM(\bbH)$, we have 
% \begin{align}
%     & GA = \min_{\bbH\in\ccalH} R_\bbG(\bbH) - \min_{\bbH\in\ccalH} R_\ccalM(\bbH) \leq R_\bbG(\bbH) - R_\ccalM(\bbH)
% \end{align}
 We can write the difference as 
\begin{align}
    \nonumber & \left| R_\bbG(\bbH) - R_\ccalM(\bbH)\right| \\
    &\leq \sum_{k=1}^K \left|  \ell\left(\frac{1}{N} \sum_{i=1}^N [\bm\Phi_\bbG(\bbH, \bbL_{N,k},\bbx_k)]_i, y_k \right) -\ell\left(\int_{\ccalM_k}\bm\Phi(\bbH,\ccalL_{\rho,k}, f_k) \text{d}\mu_k(x), y_k\right) \right|
\end{align}
Based on the property of absolute value inequality and the Lipschitz continuity assumption of loss function (Assumption \ref{ass:loss}), we have 
\begin{align}
   &\nonumber\left| \ell\left(\frac{1}{N} \sum_{i=1}^N [\bm\Phi_\bbG(\bbH, \bbL_{N,k},\bbx_k)]_i, y_k \right) -\ell\left(\int_{\ccalM_k}\bm\Phi(\bbH,\ccalL_{\rho,k}, f_k) \text{d}\mu_k(x), y_k\right)\right|\\
   % &\leq \left|\left| \frac{1}{N} \sum_{i=1}^N [\bm\Phi(\bbH, \bbL_{N,k},\bbx_k)]_i - y_k\right| - \left|\int_{\ccalM_k}\bm\Phi(\bbH,\ccalL_{\rho,k}, f_k) \text{d}\mu_k(x) - y_k\right|\right|\\
   &\leq \left|\frac{1}{N} \sum_{i=1}^N [\bm\Phi_\bbG(\bbH, \bbL_{N,k},\bbx_k)]_i - \int_{\ccalM_k}\bm\Phi(\bbH,\ccalL_{\rho,k}, f_k) \text{d}\mu_k(x) \right|
\end{align}
We insert an intermediate term $\bm\Phi(\bbH, \ccalL_{\rho,k},f_k)(x_i)$ as the value evaluated on the sampled point $x_i$, which leads to 
\begin{align}
    &\left|\frac{1}{N} \sum_{i=1}^N [\bm\Phi_\bbG(\bbH, \bbL_{N,k},\bbx_k)]_i - \int_{\ccalM_k}\bm\Phi(\bbH,\ccalL_{\rho,k}, f_k) \text{d}\mu_k(x) \right| \\
   &\label{eqn:eq31}\nonumber \leq  \left|\frac{1}{N} \sum_{i=1}^N [\bm\Phi_\bbG(\bbH, \bbL_{N,k},\bbx_k)]_i - \frac{1}{N}\sum_{i=1}^N \bm\Phi(\bbH,\ccalL_{\rho,k},f_k)(x_i)\right| + \\
   &\qquad \qquad \qquad \qquad \qquad \qquad \left|\frac{1}{N}\sum_{i=1}^N \bm\Phi(\bbH,\ccalL_{\rho,k},f_k)(x_i) - \int_{\ccalM_k}\bm\Phi(\bbH,\ccalL_{\rho,k}, f_k) \text{d}\mu_k(x)\right|
\end{align}
The first term in \eqref{eqn:eq31} can be bounded similarly as \eqref{eqn:eq5}, which is explicitly written as 
\begin{align}
   & \left|\frac{1}{N} \sum_{i=1}^N [\bm\Phi_\bbG(\bbH, \bbL_{N,k},\bbx_k)]_i - \frac{1}{N}\sum_{i=1}^N \bm\Phi(\bbH,\ccalL_{\rho,k},f_k)(x_i)\right| \\
    &\leq \frac{1}{N} \| \bm\Phi_\bbG(\bbH,\bbL_N,\bbx_k) - \bbP_N {\bm\Phi}(\bbH,\ccalL_\rho,f_k) \|_1\\ &\leq \frac{1}{\sqrt{N}} \| \bm\Phi_\bbG(\bbH,\bbL_N,\bbx_k) - \bbP_N {\bm\Phi}(\bbH,\ccalL_\rho,f_k) \|_2\\
    &\leq \frac{1}{\sqrt{N}}  \left((C_1 C_L+C_2)\sqrt{\epsilon} +\frac{\pi^2}{6}\sqrt{\frac{\log(1/\delta)}{N}}\right)
\end{align}
The second term is 
\begin{align}
   & \left|\frac{1}{N}\sum_{i=1}^N \bm\Phi(\bbH,\ccalL_{\rho,k},f_k)(x_i) - \int_{\ccalM_k}\bm\Phi(\bbH,\ccalL_{\rho,k}, f_k) \text{d}\mu_k(x)\right|\\
    & =\left|\sum_{i=1}^N \int_{V_i} \bm\Phi(\bbH,\ccalL_{\rho,k},f_k)(x_i) \text{d}\mu_k(x) - \sum_{i=1}^N \int_{V_i} \bm\Phi(\bbH,\ccalL_{\rho,k}, f_k)(x)\text{d}\mu_k(x)\right|\\
    &\leq \sum_{i=1}^N \int_{V_i} \left|\bm\Phi(\bbH,\ccalL_{\rho,k},f_k)(x_i) - \bm\Phi(\bbH,\ccalL_{\rho,k}, f_k)(x)\right| \text{d}\mu_k(x)\\
    &\leq F^L C_3 \left(\frac{\log N}{N}\right)^{\frac{1}{d}}
\end{align}
This depends on the Lipschitz continuity of the output manifold function in Proposition \ref{prop:mnn-continuity}.
\end{proof}
\newpage

\section{Further references}
\paragraph{Graphon theory}
Different from the manifold model we are using, some research constructs graphs derived from graphons, which can be viewed as a random limit graph model. This research has focused on their convergence, stability, as well as transferability \citep{ruiz2020graphon, maskey2023transferability, keriven2020convergence}. In \citep{parada2023graphon}, a graphon is used as a pooling tool in GNNs. Despite its utility, the graphon presents several limitations compared to the manifold model we use. Firstly, the graphon model assumes an infinite degree at every node \citep{lovasz2012large}, which is not the case in the manifold model. 
Additionally, graphons offer limited insight into the underlying model; visualizing a graphon is challenging, except in the stochastic block model case. Manifolds, however, are more interpretable, especially when based on familiar shapes like spheres and 3D models (see Figures \ref{fig:gauss-kernel-graph} and \ref{fig:epsilon-graph}). Finally, the manifold model supports a wider range of characterizable models, making it a more realistic choice.

% \paragraph{Manifold Theory} 
% Some studies have used graph samples to infer properties of the underlying manifold itself. These properties include the validity of the manifold assumption \cite{fefferman2016testing}, the manifold dimension \cite{farahmand2007manifold} and the complexity of these inferences \cite{narayanan2009sample,aamari2021statistical}. Other research has focused on prediction and classification using manifolds and manifold data, proposing various algorithms and methods. Impressive examples include the Isomap algorithm \cite{choi2004kernel,wu2004extended,yang2016multi} and other manifold learning techniques \cite{talwalkar2008large}. These techniques aim to infer manifold properties without analyzing the generalization capabilities of GNNs operated on the sampled manifold.

\paragraph{Transferability of GNNs} 
The transferability of GNNs has been extensively studied by examining the differences in GNN outputs across graphs of varying sizes as they converge to a limit model. This analysis, however, often lacks statistical generalization. Several studies have explored GNN transferability with graphon models, proving bounds on the differences in GNN outputs \citep{ruiz2023transferability,ruiz2020graphon, maskey2023transferability}. Other research has demonstrated how increasing graph size during GNN training can improve generalization to large-scale graphs \citep{cervino2023learning}. The transferability of GNNs has also been investigated in the context of graphs generated from general topological spaces \citep{levie2021transferability} and manifolds \citep{wang2023geometric}. Furthermore, a novel graphop operator has been proposed as a limit model for both dense and sparse graphs, with proven transferability results \citep{le2024limits}. Further research has focused on transfer learning for GNNs by measuring distances between graphs without assuming a limit model \citep{lee2017transfer, zhu2021transfer}.  Finally, a transferable graph transformer has been proposed and empirically validated \citep{he2023network}.

\section{Filter Assumption}
In the main results, we assume that the filters in GNN and MNN satisfy Assumption \ref{ass:low-pass}. This may lead to limited discriminability in high-frequency spectrum. While this is a reasonable assumption, high-frequency signals on graphs or manifolds can fluctuate significantly between adjacent entries, leading to instability and learning challenges. We expect a degree of local homogeneity, which translates to low-frequency signals. This assumption is supported by empirical evidence in various domains, including opinion dynamics, econometrics, and graph signal processing \citep{LPF_consensus, billio2012econometric,ramakrishna2020user}. 
Moreover, several other effective learning techniques, such as Principal Component Analysis (PCA) and Isomap, implicitly employ low-pass filtering. Therefore, we believe that the filter assumption is not restrictive and is well-supported by both practical applications and theoretical considerations.

\section{Manifold Assumption}
In this paper, we considered the case in which graphs are sampled from manifolds. This is an assumption that has been widely used in practice. From dynamical systems \citep{talmon2015manifold} to images \citep{peyre2009manifold,osher2017low}, assuming an underlying low dimensional manifold is a common practice. 
Real-world graphs, like the ones considered in the node prediction experiments, can be assumed to be sampled from $d$-dimensional manifolds. To support this argument, in Figure \ref{fig:manifold_assumption}, we plot the $100$ largest eigenvalues of the Laplacian matrix associated with each graph. By doing this, we show a fast decay in the values of the eigenvalues progress. This decay shows that the information is mostly supported on a subset of the eigenvalues thus reinforcing the idea that it comes from a low dimensional manifold.

\begin{figure*}[ht]
\centering
\begin{subfigure}{0.24\textwidth}
  \centering
  \includegraphics[width=\linewidth]{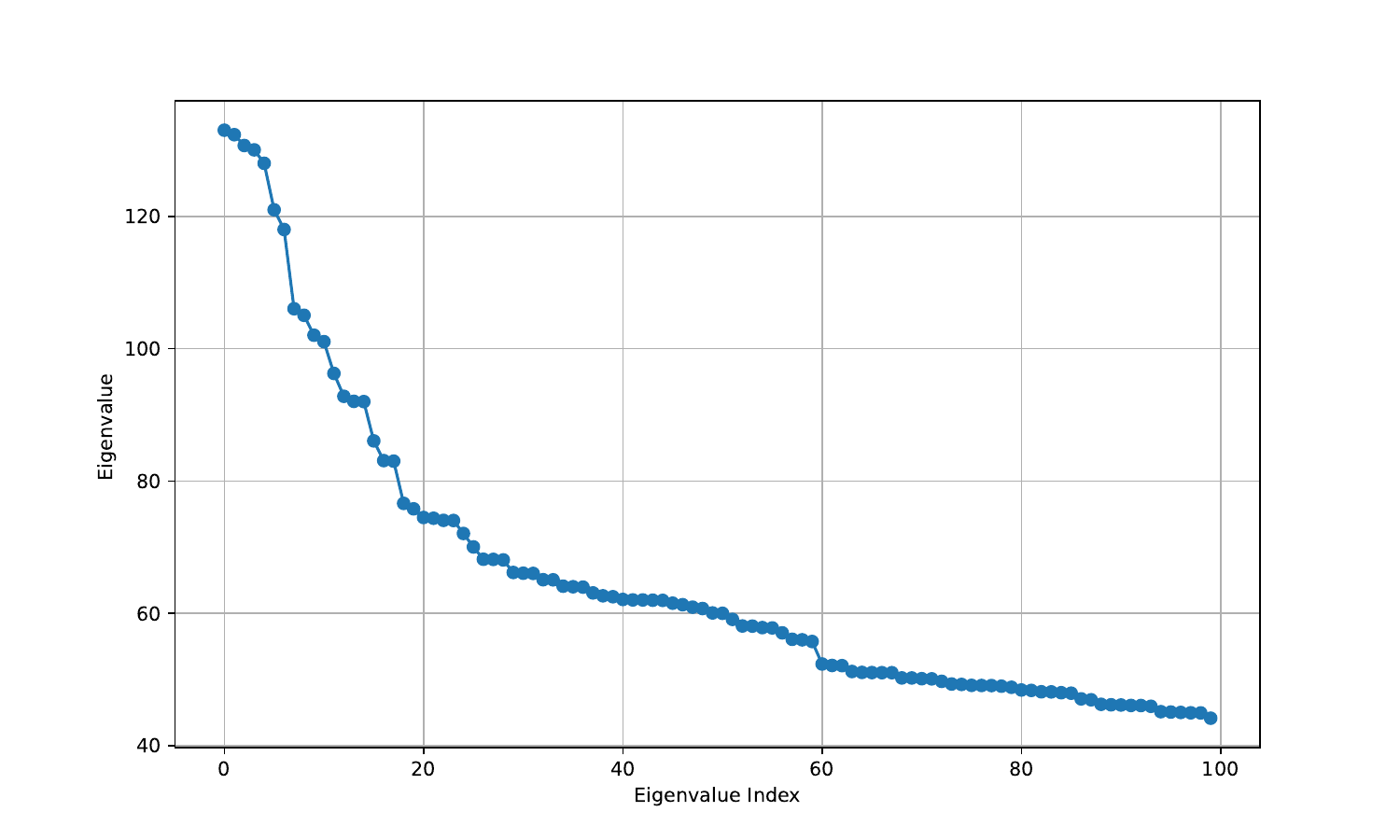}
  \caption{Amazon}
\end{subfigure}
\begin{subfigure}{0.24\textwidth}
  \centering
  \includegraphics[width=\linewidth]{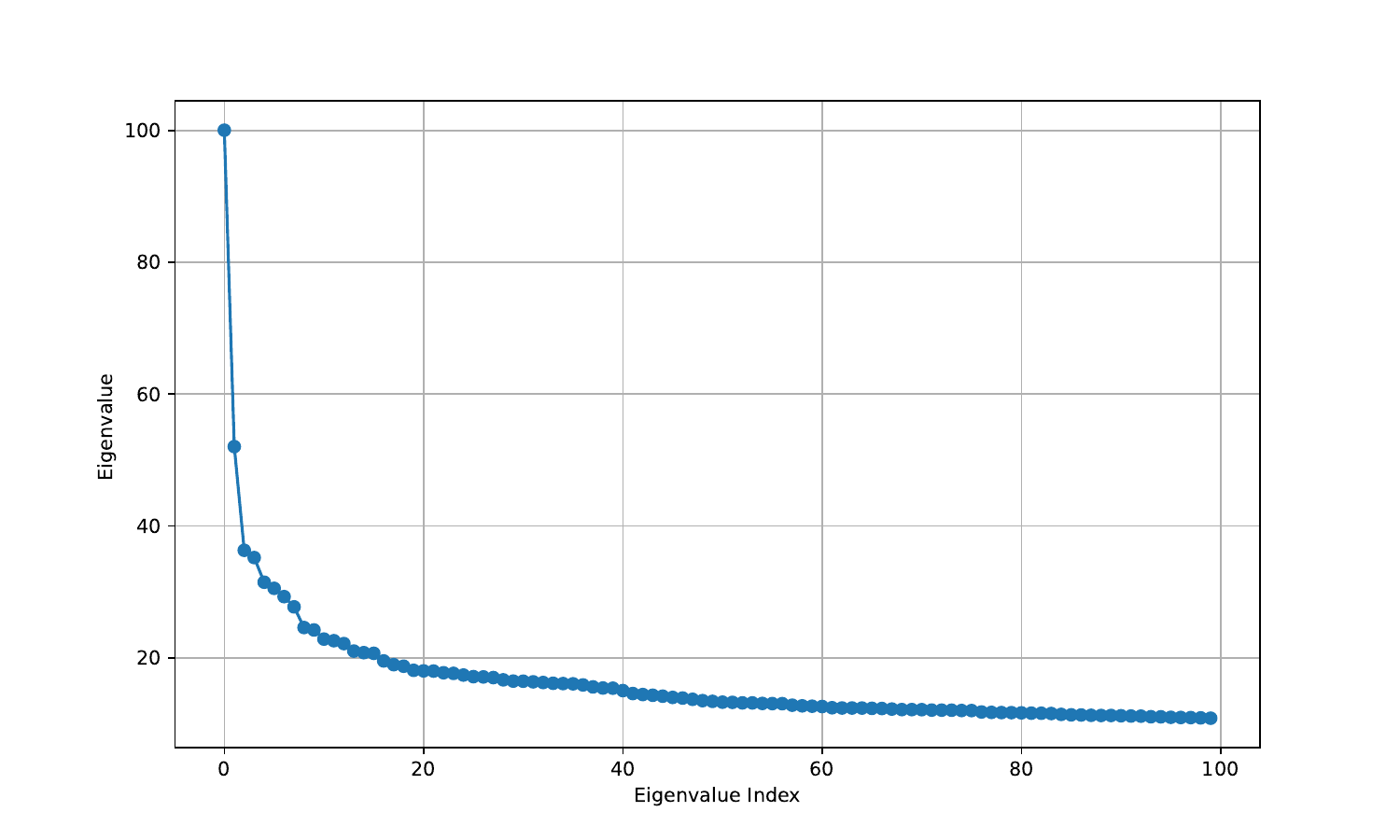}
  \caption{CiteSeer}
\end{subfigure}
\begin{subfigure}{0.24\textwidth}
  \centering
  \includegraphics[width=\linewidth]{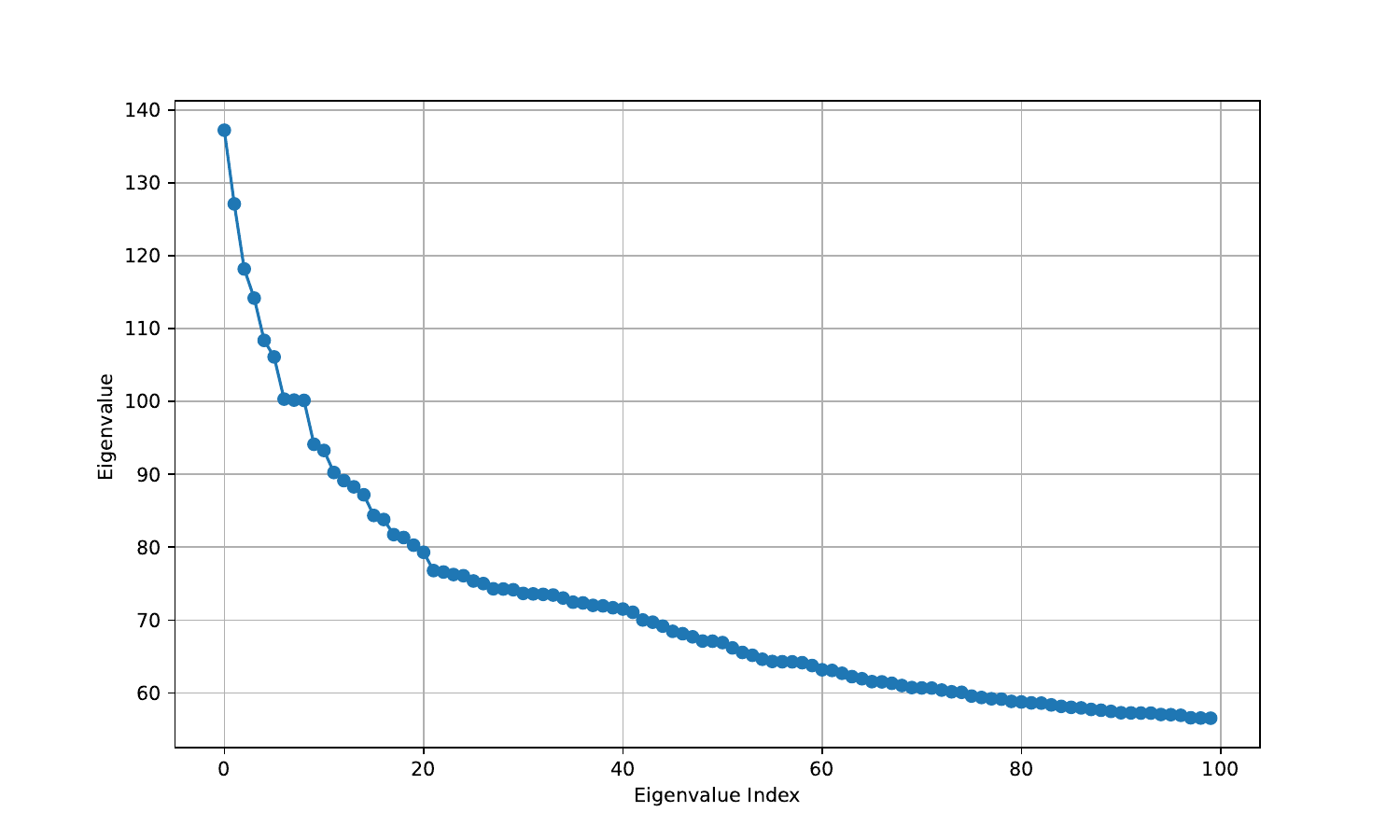}
  \caption{CS}
\end{subfigure}
\begin{subfigure}{0.24\textwidth}
  \centering
  \includegraphics[width=\linewidth]{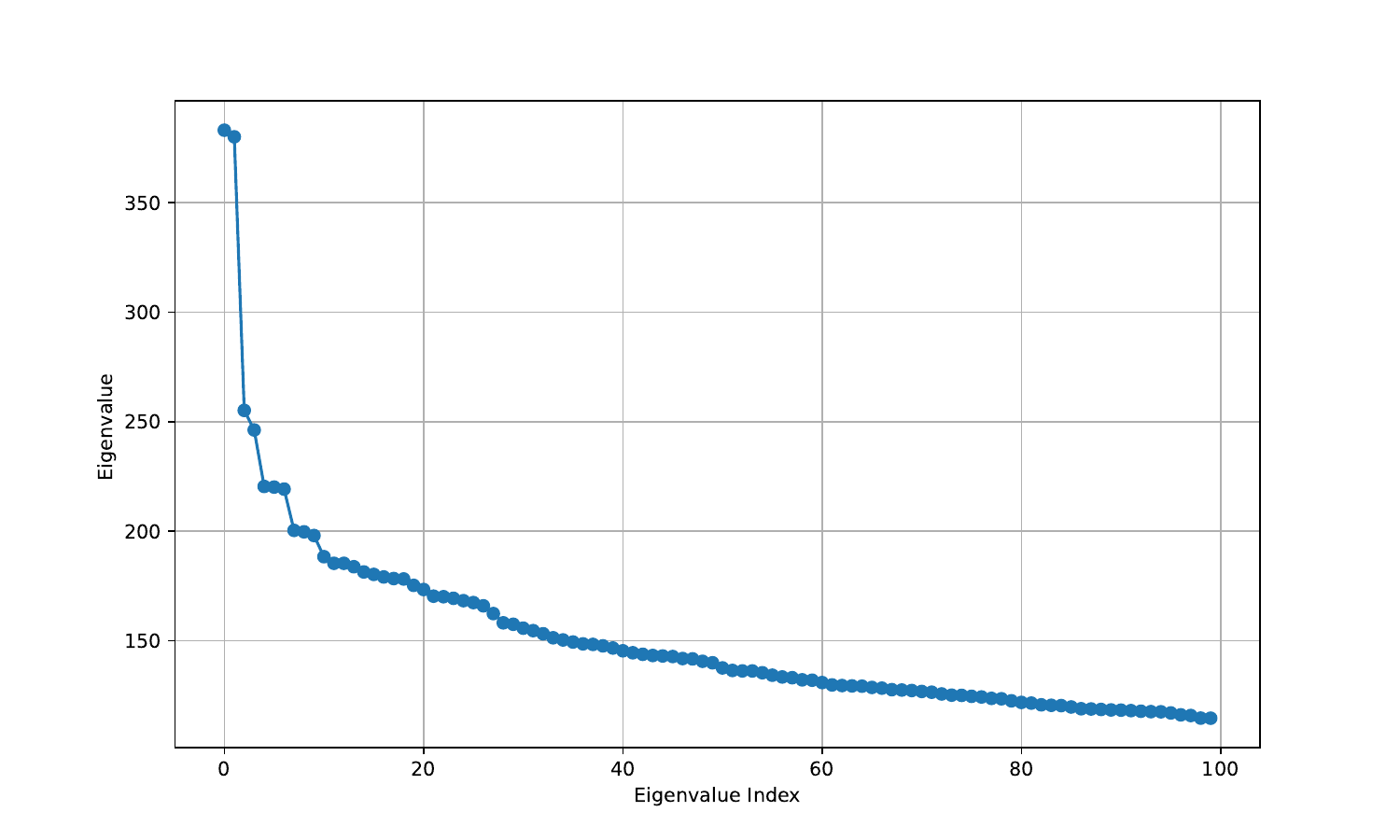}
  \caption{Physics}
\end{subfigure}
\begin{subfigure}{0.24\textwidth}
  \centering
  \includegraphics[width=\linewidth]{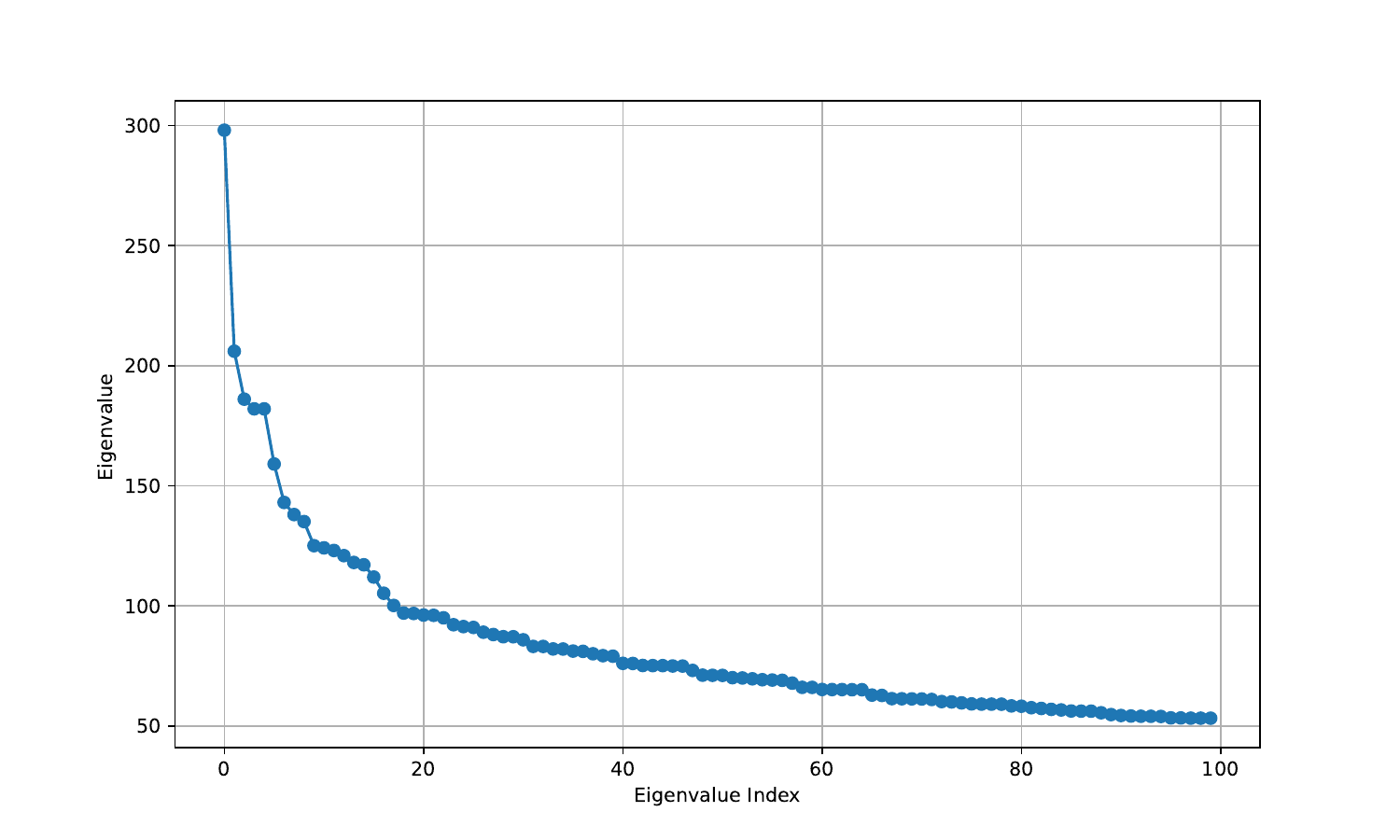}
  \caption{Cora}
\end{subfigure}
\begin{subfigure}{0.24\textwidth}
  \centering
  \includegraphics[width=\linewidth]{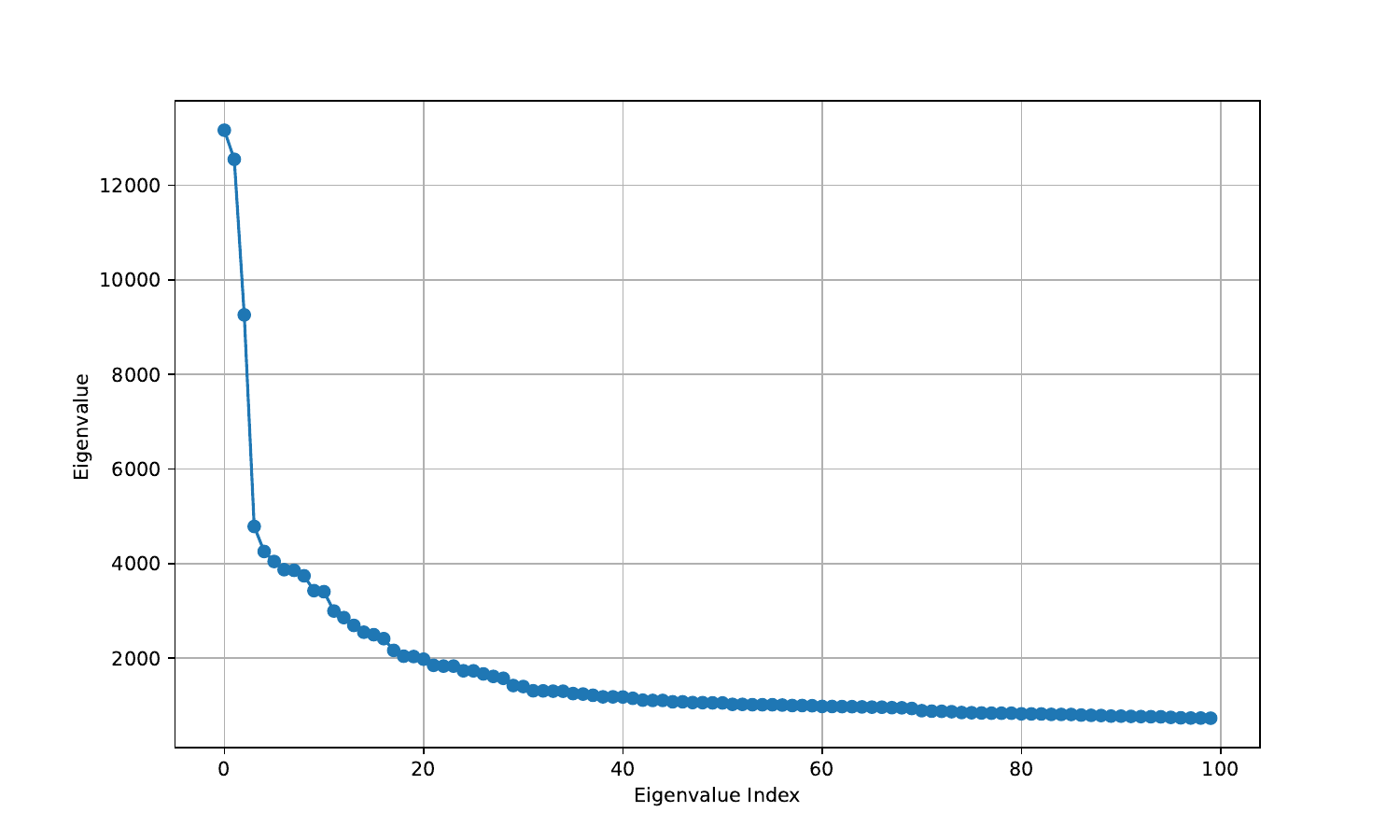}
  \caption{OBGN-Arxiv}
\end{subfigure}
\begin{subfigure}{0.24\textwidth}
  \centering
  \includegraphics[width=\linewidth]{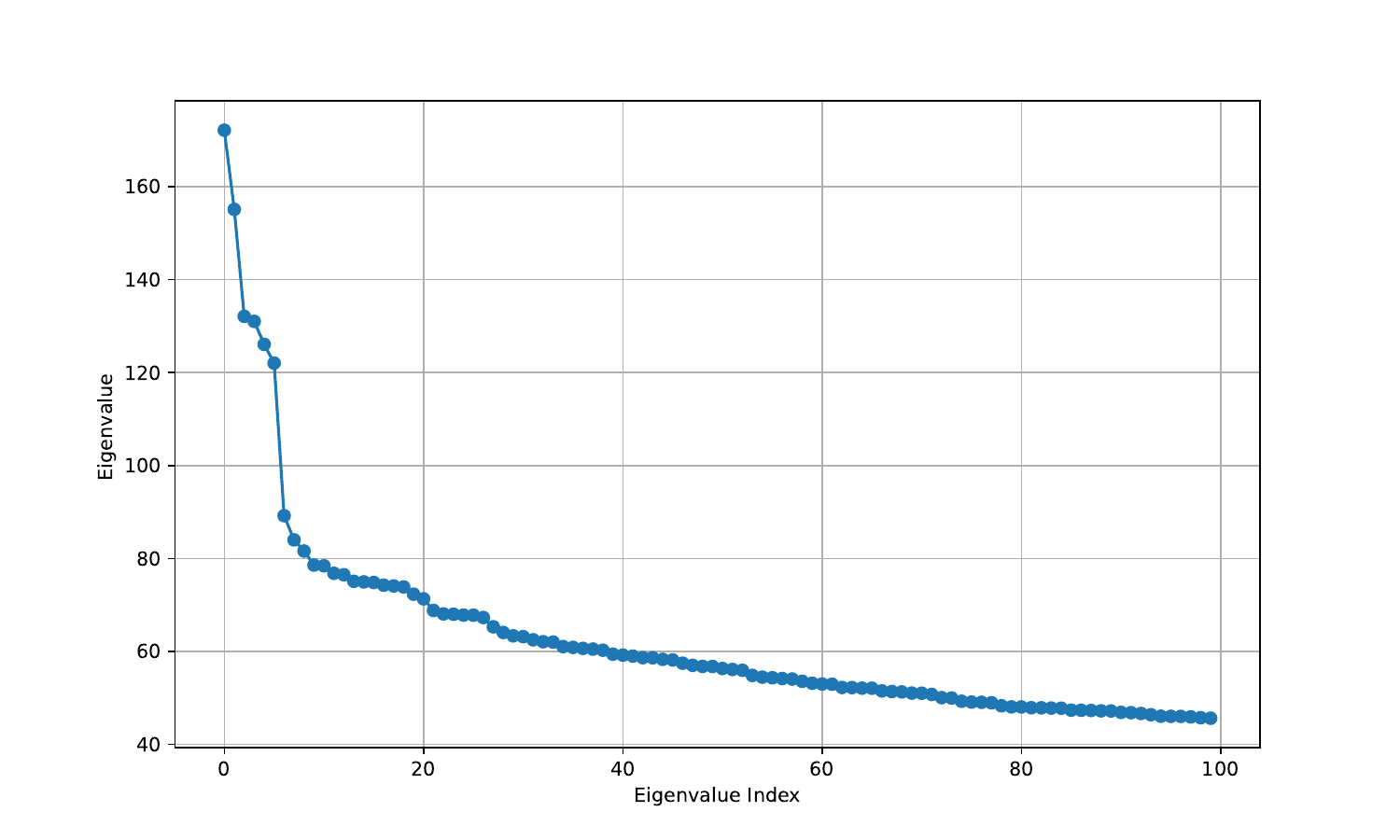}
  \caption{PudMed}
\end{subfigure}
\begin{subfigure}{0.24\textwidth}
  \centering
  \includegraphics[width=\linewidth]{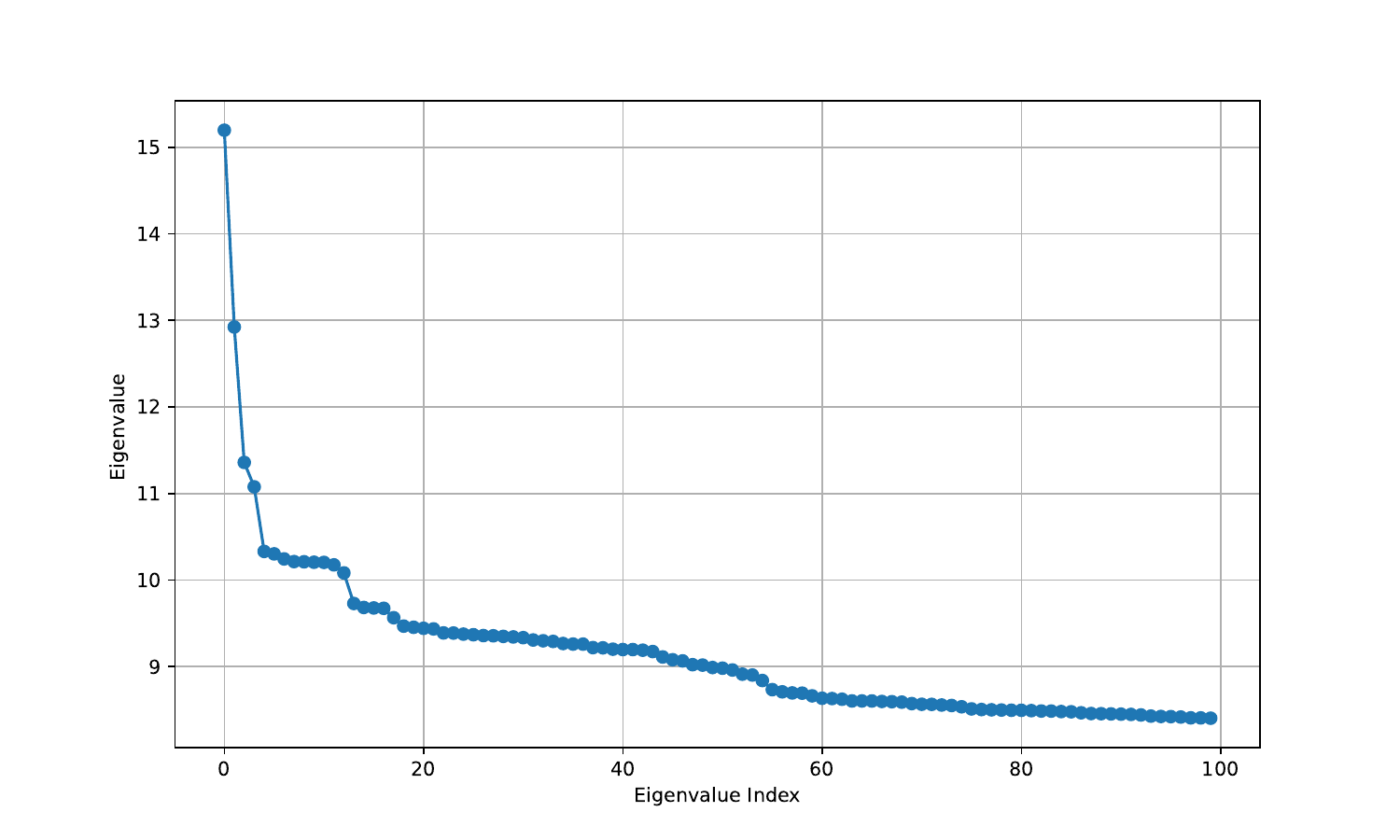}
  \caption{Roman}
\end{subfigure}
\caption{ Top $100$ eigenvalues of the graph for each dataset considered in the node classification problem. }
\label{fig:manifold_assumption}
\end{figure*}

\section{Experiment details and further experiments}
\label{sec:Appendix_Experiments}
We consider the following datasets: \textit{OGBN-Arxiv} \citep{wang2020microsoft,mikolov2013distributed}, \textit{Cora} \citep{yang2016revisiting}, \textit{CiteSeer} \citep{yang2016revisiting}, \textit{PubMed} \citep{yang2016revisiting}, \textit{Coauthors CS} \citep{shchur2018pitfalls}, \textit{Coauthors Physics} \citep{shchur2018pitfalls}, \textit{Amazon-rating} \citep{platonov2023critical}, and \textit{Roman-Empire} \citep{platonov2023critical}, details of the datasets can be found in Table \ref{tab:datasets_information}. 

All experiments were done using a \textit{NVIDIA GeForce RTX 3090}, and each set of experiments took at most $10$ hours to complete. In total, we run $10$ datasets, which amounts for around $100$ hours of GPU use. All datasets used in this paper are public, and free to use. They can be downloaded using the $pytorch$ package (\url{https://pytorch-geometric.readthedocs.io/en/latest/modules/datasets.html}), the \textit{ogb} package (\url{https://ogb.stanford.edu/docs/nodeprop/}) and the Princeton ModelNet project (\url{https://modelnet.cs.princeton.edu/}). In total, the datasets occupy around $5$ gb. However, they do not need to be all stored at the same time, as the experiments that we run can be done in series. 

\subsection{ModelNet10 and ModelNet40 graph classification tasks}\label{subsec:model}
ModelNet10 dataset \citep{wu20153d} includes 3,991 meshed CAD models from 10 categories for training and 908 models for testing as Figure \ref{fig:points-10} shows.  ModelNet40 dataset includes 38,400 training and 9,600 testing models as Figure \ref{fig:points} shows. In each model, $N$ points are uniformly randomly selected to construct graphs to approximate the underlying model, such as chairs, tables.

\begin{figure*}[ht!]
\centering
\includegraphics[trim= 80 50 80  0,clip,width=0.2  \textwidth]{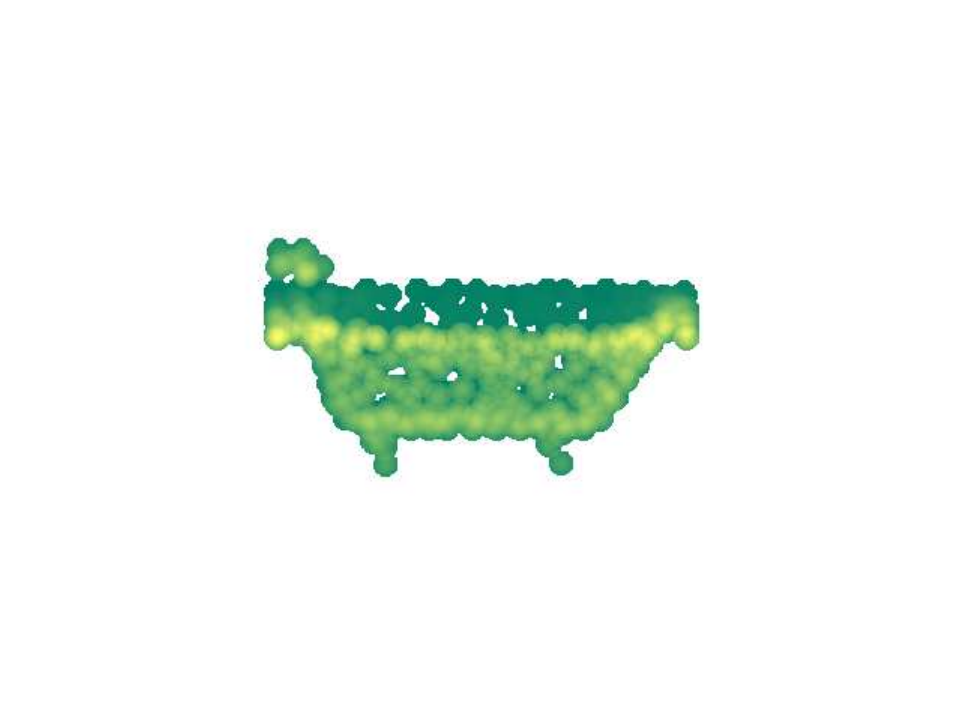}
\includegraphics[trim=90 0 100 0,width=0.15\textwidth]{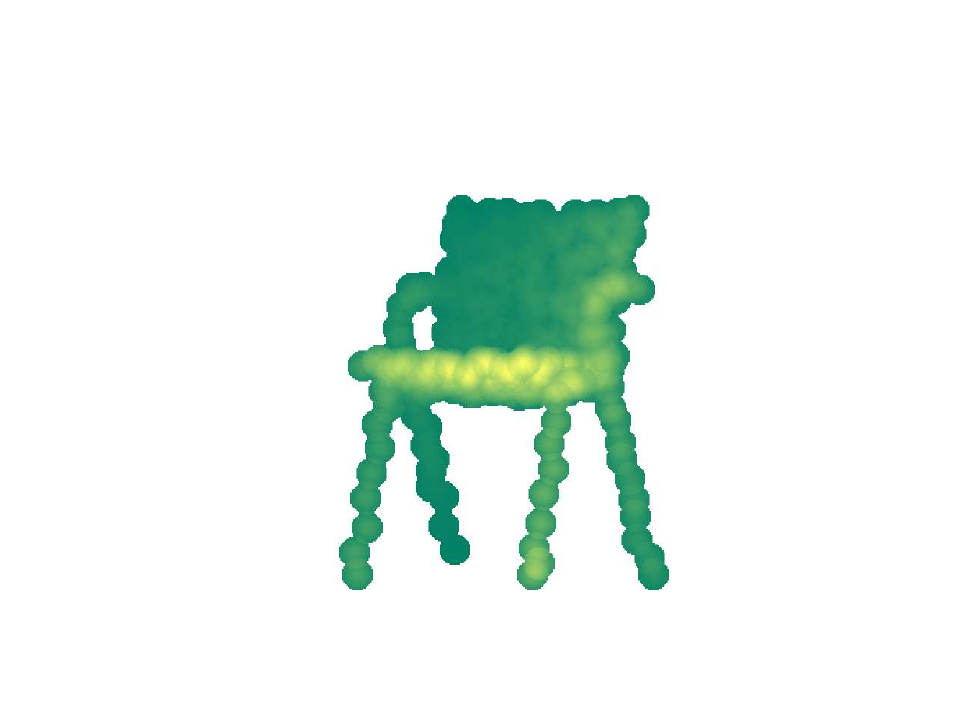}  
\includegraphics[trim=60 0 100 0,width=0.15\textwidth]{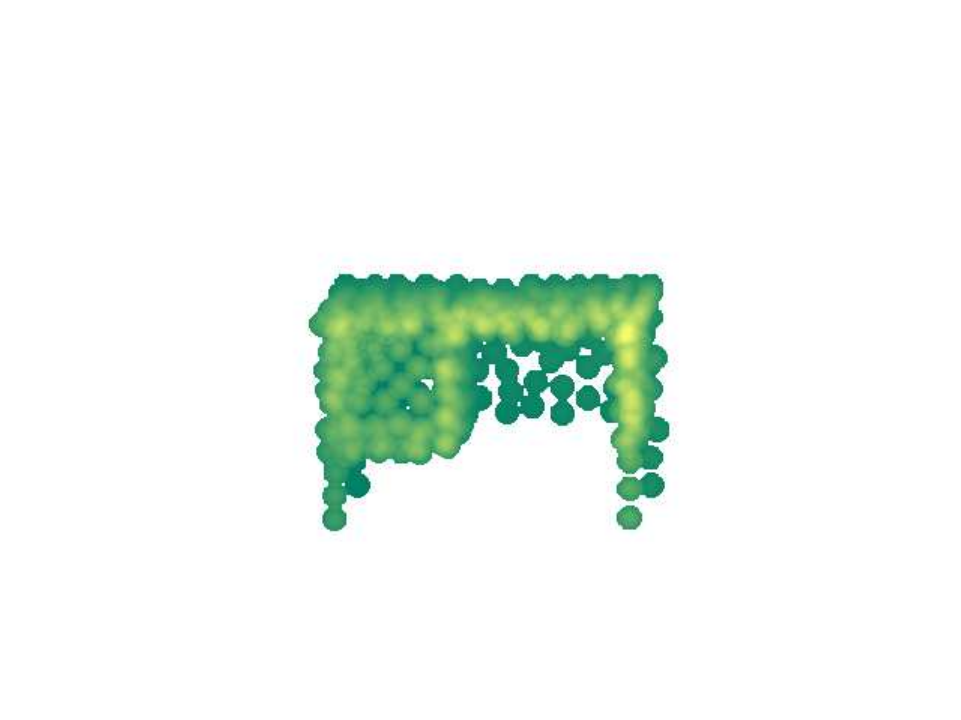} 
\includegraphics[trim=60 0 50 0,width=0.15\textwidth]{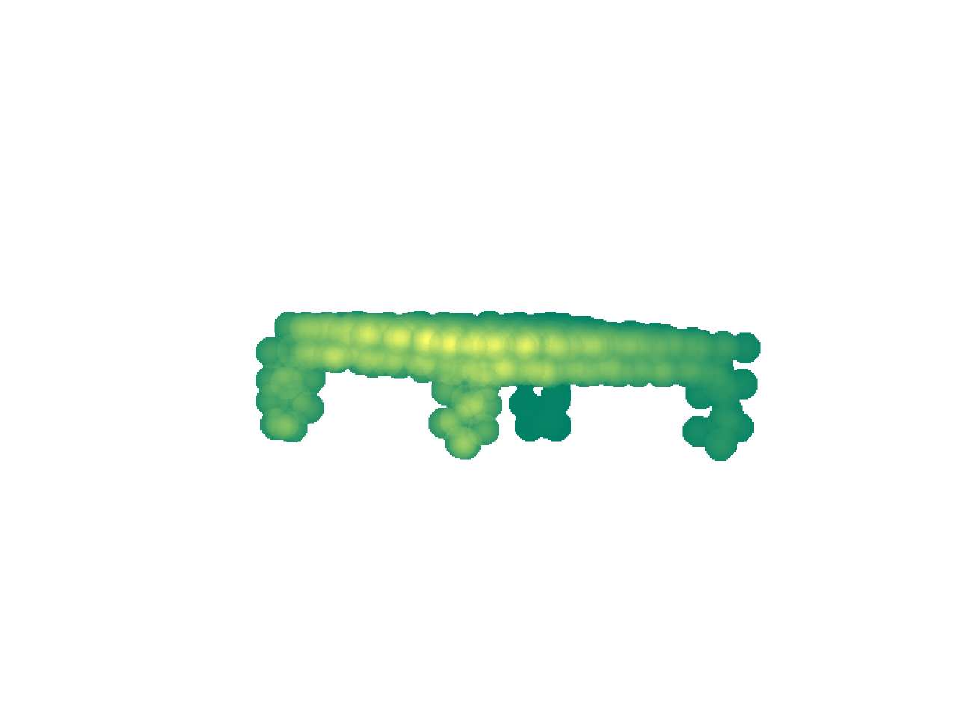}  
\includegraphics[trim=60 0 100 0,width=0.15\textwidth]{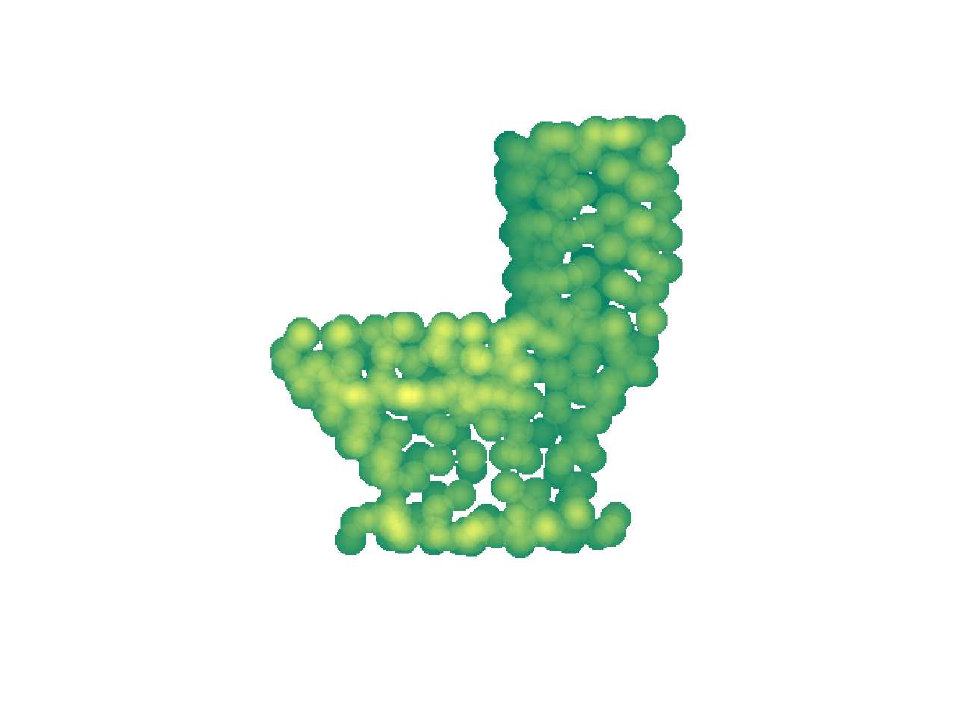}  
\includegraphics[trim=60 0 100 0,width=0.15\textwidth]{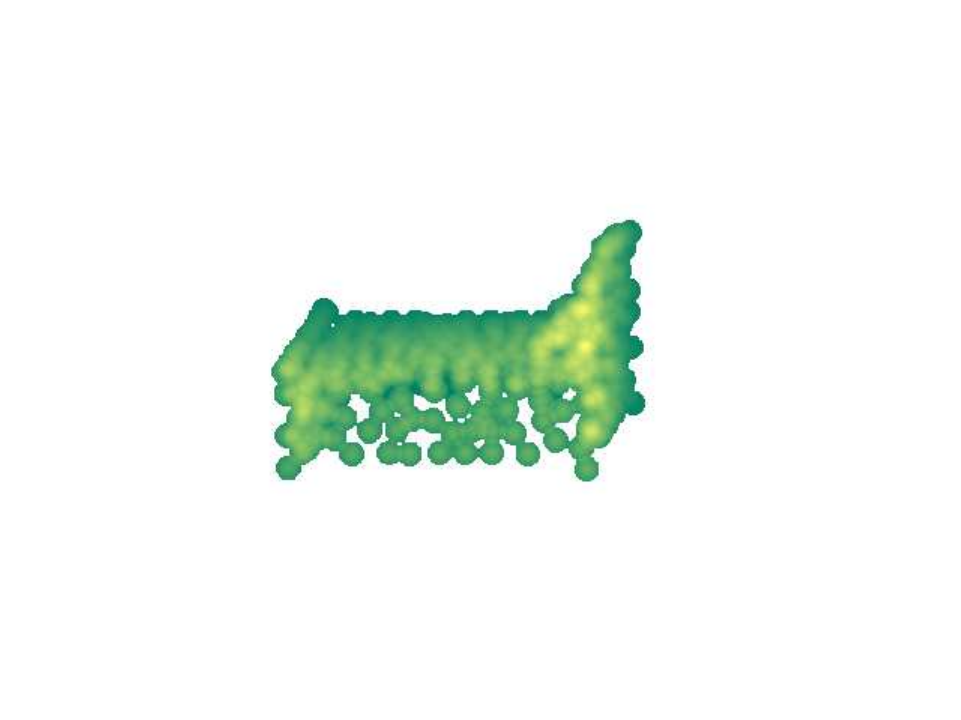} 

\caption{Point cloud models in ModelNet10 with $N=300$ sampled points in each model, corresponding to bathtub, chair, desk, table, toiler, and bed.}
\label{fig:points-10}
\end{figure*}

\begin{figure*}[ht!]
\centering
\includegraphics[ width=0.2  \textwidth]{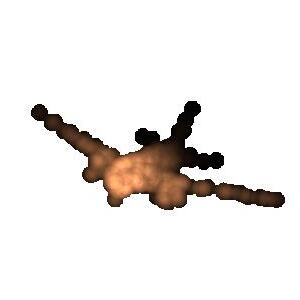}
\includegraphics[ width=0.16\textwidth]{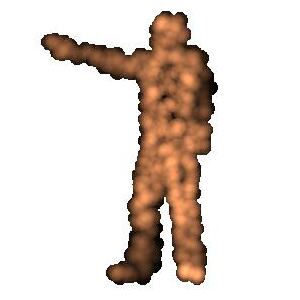}  
\includegraphics[ width=0.16\textwidth]{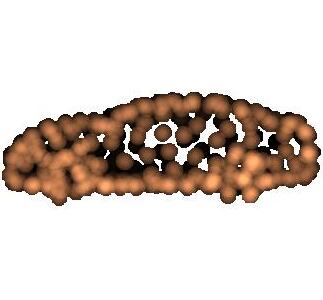} 
\includegraphics[ width=0.15\textwidth]{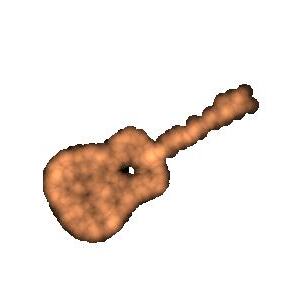}  
\includegraphics[ width=0.15\textwidth]{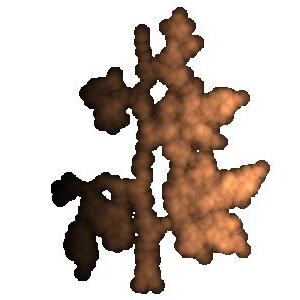}  
\includegraphics[ width=0.15\textwidth]{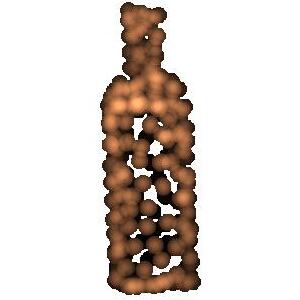} 

\caption{Point cloud models from ModelNet40 with $N=300$ sampled points in each model, corresponding to airplane, person, car, guitar, plant, and bottle. }
\label{fig:points}
\end{figure*}

\begin{figure*}[ht!]
\begin{subfigure}[b]{0.5\textwidth} 
\centering
    \includegraphics[ width= \textwidth]{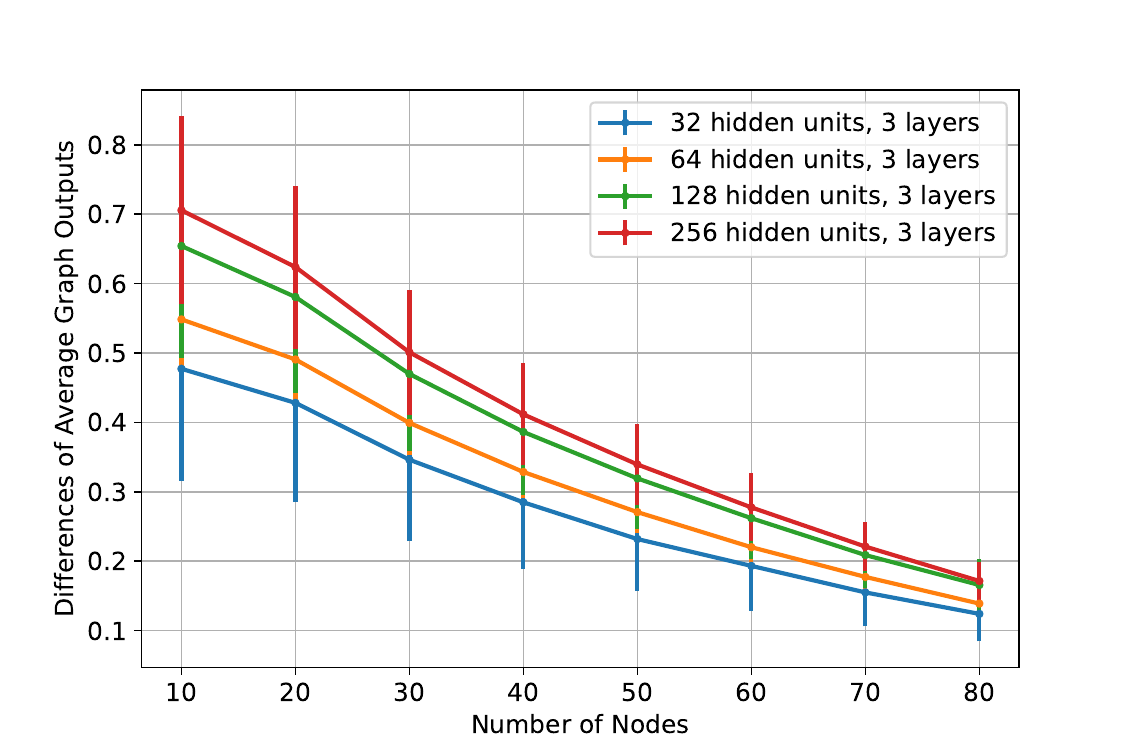}
    \caption{Differences of the outputs of 3-layer GNNs. }
    % \label{fig:diff_3layer}
\end{subfigure}
\hfill
\begin{subfigure}[b]{0.5\textwidth} 
\centering

\includegraphics[ width= \textwidth]{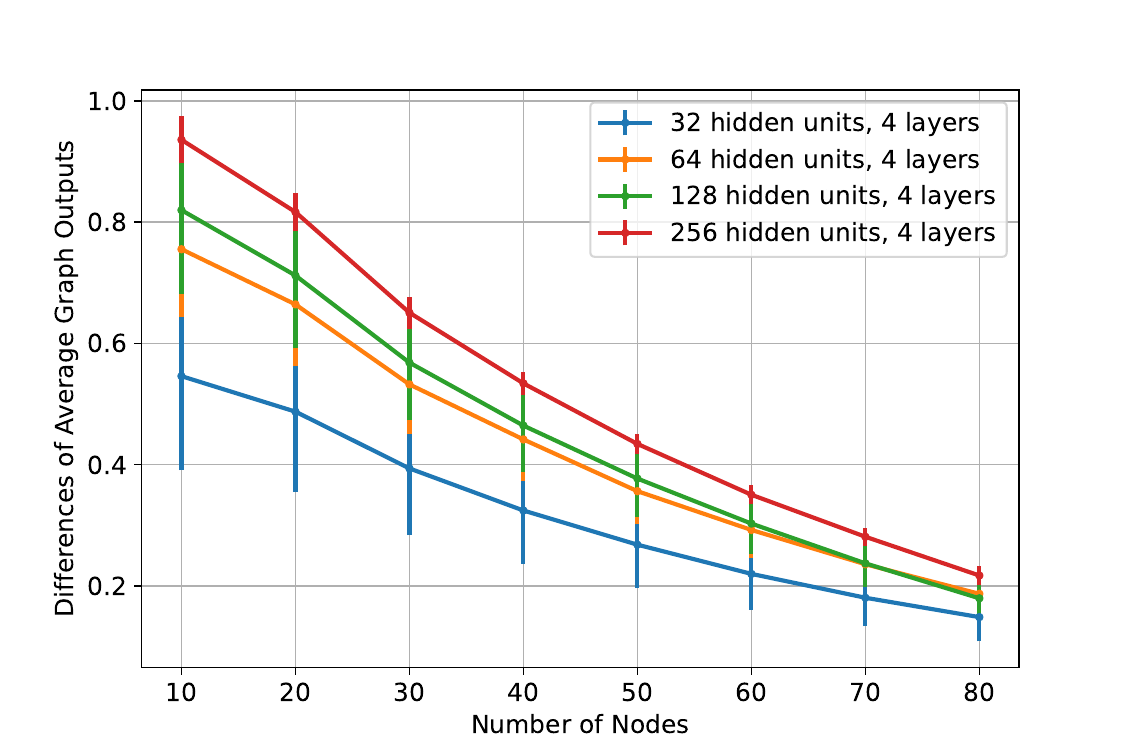}  
\caption{Differences of the outputs of 4-layer GNNs. }
    % \label{fig:diff_4layer}
\end{subfigure}
\caption{Graph outputs differences of GNNs with different architectures on ModelNet40 dataset.}
\end{figure*}
The weight function of the constructed graph is determined as \eqref{eqn:gauss_kernel} with $\epsilon = 0.1$. 
%Similarly, the weight function of an $\epsilon$-graph is calculated as \eqref{eqn:compact_kernel} with $\epsilon = 0.001$ as the threshold. 
We calculate the Laplacian matrix for each graph as the input graph shift operator. In this experiment, we implement GNNs with different numbers of layers and hidden units with $K=5$ filters in each layer. All the GNN architectures are trained by minimizing the cross-entropy loss. We implement an ADAM optimizer with the learning rate set as 0.005 along with the forgetting factors 0.9 and 0.999. We carry out the training for 40 epochs with the size of batches set as 10. We run 5 random dataset partitions and show the average performances and the standard deviation across these partitions. 

\subsection{Node classification training details and datasets}

In this section, we present the results for node classification. In this paragraph we present the common details for all datasets, we will next delve into each specific detail inside the dataset subsection that follows. 
\begin{table}[!ht]
\centering
\begin{tabular}{|c|c|c|c|c|c|}
\hline
Name             & Nodes     & Edges       & Features & \begin{tabular}[c]{@{}c@{}}Number\\ of Classes\end{tabular} & Reference                                         \\ \hline\hline
Arxiv            & $169,343$ & $1,166,243$ & $128$    & $40$                                                        & \cite{wang2020microsoft,mikolov2013distributed} \\ \hline
Cora             & $2,708$   & $10,556$    & $1,433$  & $7$                                                         & \cite{yang2016revisiting}                       \\ \hline
CiteSeer         & $3,327$   & $9,104$     & $3,703$  & $6$                                                         & \cite{yang2016revisiting}                       \\ \hline
PubMed           & $19,717$  & $88,648$    & $500$    & $3$                                                         & \cite{yang2016revisiting}                       \\ \hline
Coauthor Physics & $18,333$  & $163,788$   & $6,805$  & $15$                                                        & \cite{shchur2018pitfalls}                       \\ \hline
Coauthor CS      & $34,493$  & $495,924$   & $8,415$  & $5$                                                         & \cite{shchur2018pitfalls}                       \\ \hline
Amazon-ratings   & $24,492$  & $93,050$    & $300$    & $5$                                                         & \cite{platonov2023critical}                     \\ \hline
Roman-empire     & $22,662$  & $32,927$    & $300$    & $18$                                                        & \cite{platonov2023critical}                     \\ \hline
\end{tabular}
\caption{Details of the datasets considered in the experiments.}
\label{tab:datasets_information}
\end{table}

In all datasets, we used the graph convolutional layer \texttt{GCN}, and trained for $1000$ epochs. For the optimizer, we used $\texttt{AdamW}$, with using a learning rate of $0.01$, and $0$ weight decay. We trained using the graph convolutional layer, with a varying number of layers and hidden units. For dropout, we used $0.5$. We trained using the cross-entropy loss. In all cases, we trained $2$ and $3$ layered GNNs. 

To compute the linear approximation in the plots, we used the mean squared error estimator of the form
\begin{align}
    \mathbf{y} = s * \log(\mathbf{n}) +p.
\end{align}
Where $s$ is the slope, $p$ is the point, and $\mathbf{n}$ is the vector with the nodes in the training set for each experiment. Note that we repeated each experiment for $10$ independent runs. In all experiments, we compute the value of $s$ and $p$ that minimize the mean square error over the mean of the experiment runs, and we compute the Pearson correlation index over those values. 

Our experiment shows that our bound shows the same rate dependency as the experiments. That is to say, in the logarithmic scale, the generalization gap of GNNs is linear with respect to the logarithm of the number of nodes. In most cases, the Pearson correlation index is above $0.9$ in absolute value, which indicates a strong linear relationship. We noticed that the linear relationship changes the slope in the overfitting regime, and in the non-overfitting regime. That is to say, when the GNN is overfitting the training set, the generalization gap decreases at a much slower rate than it does with the GNN does not have the capacity to do so. Therefore, in the case in which the GNN overfits the training set for all nodes when computed $s$ using all the samples in the experiment. On the other hand, when the number of nodes is large enough that the GNN cannot overfit the training set, then we computed the $s$ and $p$ with the nodes in the non overfitting regime. 

\subsection{Spectral Continuity Constant Regularizer} \label{appendix_subsection:SCCR}
We add a regularization term to the loss to better control the value of the spectral continuity constant (defined in Assumption \ref{ass:low-pass}) while training. To do so, given a convolutional filter $\bbh\in \reals^{K}$, its associated spectral continuity constant is
\begin{align}\label{eqn:regulizer}
    R(\bbh) =\sum_{k=0}^{K-1}  k |h_{k}|\lambda_{max}^{k-1},
\end{align}
Where $\lambda_{max}$ is the largest eigenvalue of the graph $\bbG$.

\subsubsection{Arxiv dataset}
For this datasets, we trained $2,3,4$ layered GNN. We also used a learning rate scheduler $\texttt{ReduceLROnPlateau}$ with mode min, factor $0.5$, patience $100$ and a minimum learning rate of $0.001$. 
\begin{figure*}[!ht]
    \centering
    \begin{subfigure}{0.33\textwidth}
        \centering
        \includegraphics[width=\linewidth]{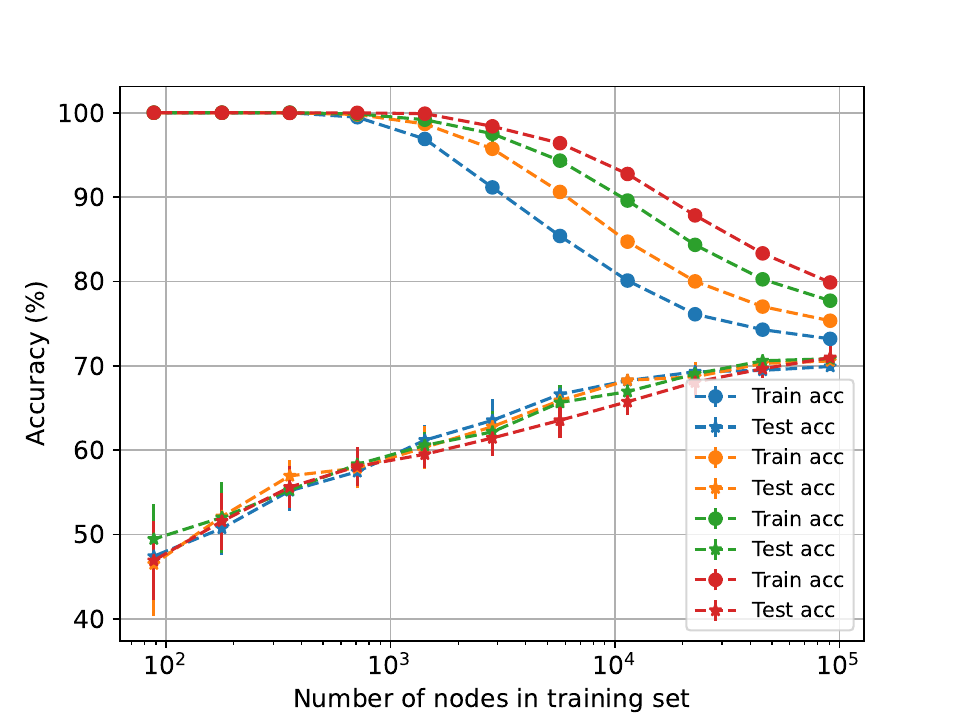}
        % \caption{Two Layers}
        % \label{fig:arxiv_acc_1}
    \end{subfigure}%
    \begin{subfigure}{0.33\textwidth}
        \centering
        \includegraphics[width=\linewidth]{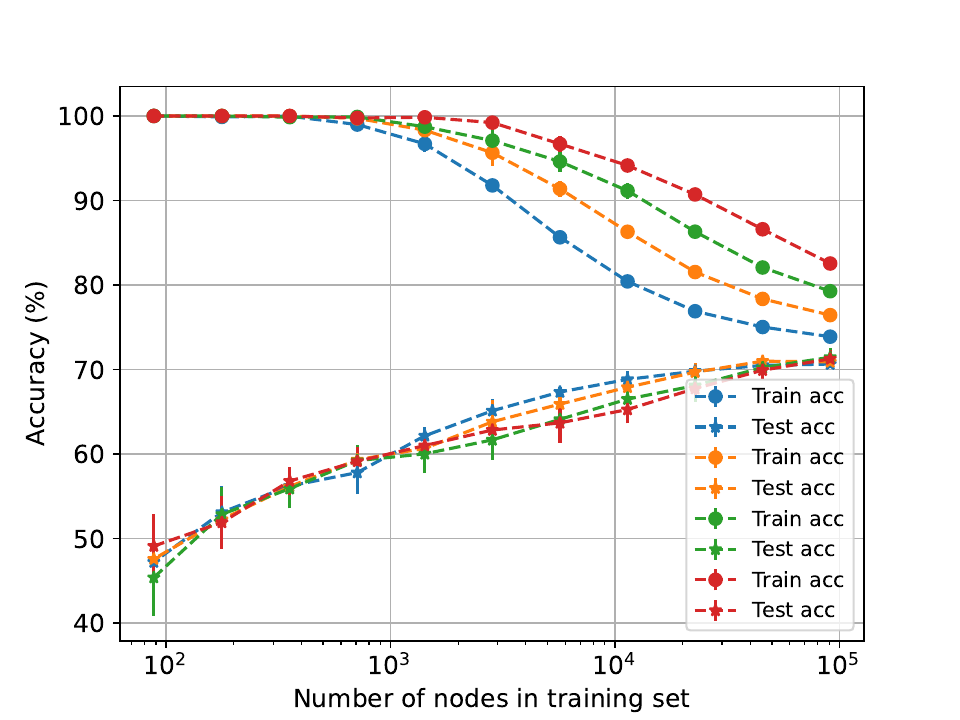}
        % \caption{Three Layers}
        % \label{fig:arxiv_acc_3}
    \end{subfigure}%
    \begin{subfigure}{0.33\textwidth}
        \centering
        \includegraphics[width=\linewidth]{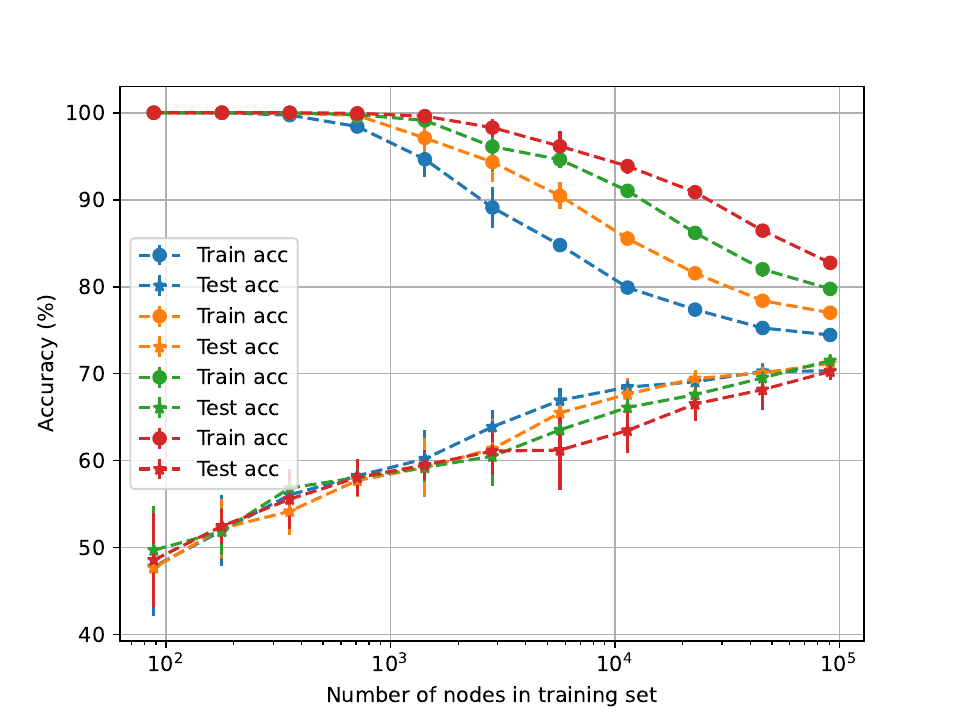}
        % \caption{Four Layers}
        % \label{fig:arxiv_gen_gap}
    \end{subfigure}
    \begin{subfigure}{0.33\textwidth}
        \centering
        \includegraphics[width=\linewidth]{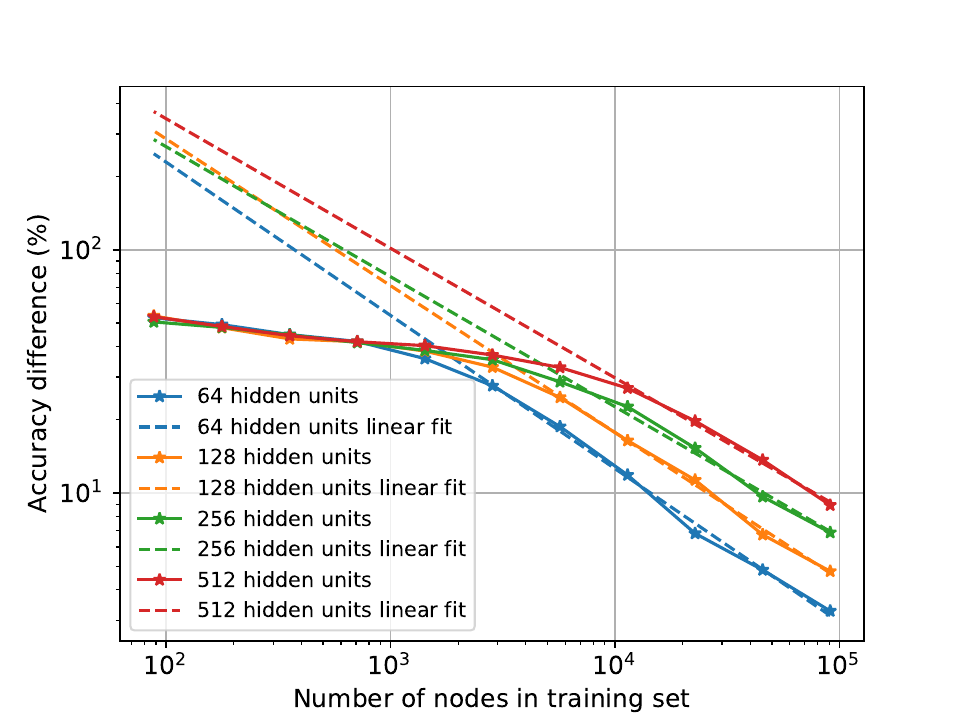}
        \caption{Two Layers}
        % \label{fig:arxiv_acc_1}
    \end{subfigure}%
    \begin{subfigure}{0.33\textwidth}
        \centering
        \includegraphics[width=\linewidth]{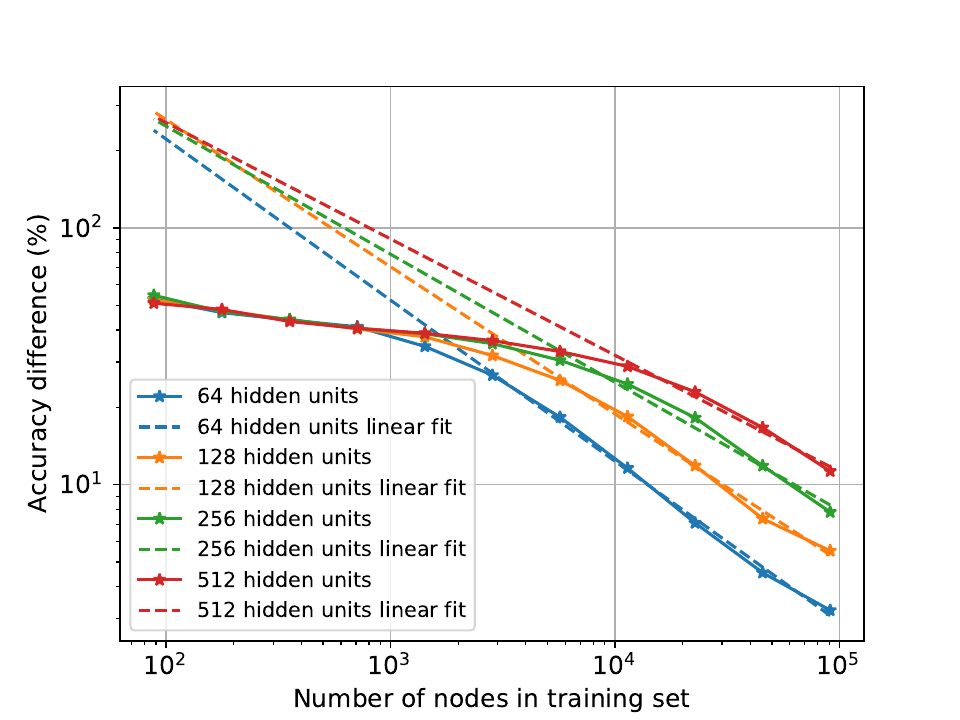}
        \caption{Three Layers}
        % \label{fig:arxiv_acc_3}
    \end{subfigure}%
    \begin{subfigure}{0.33\textwidth}
        \centering
        \includegraphics[width=\linewidth]{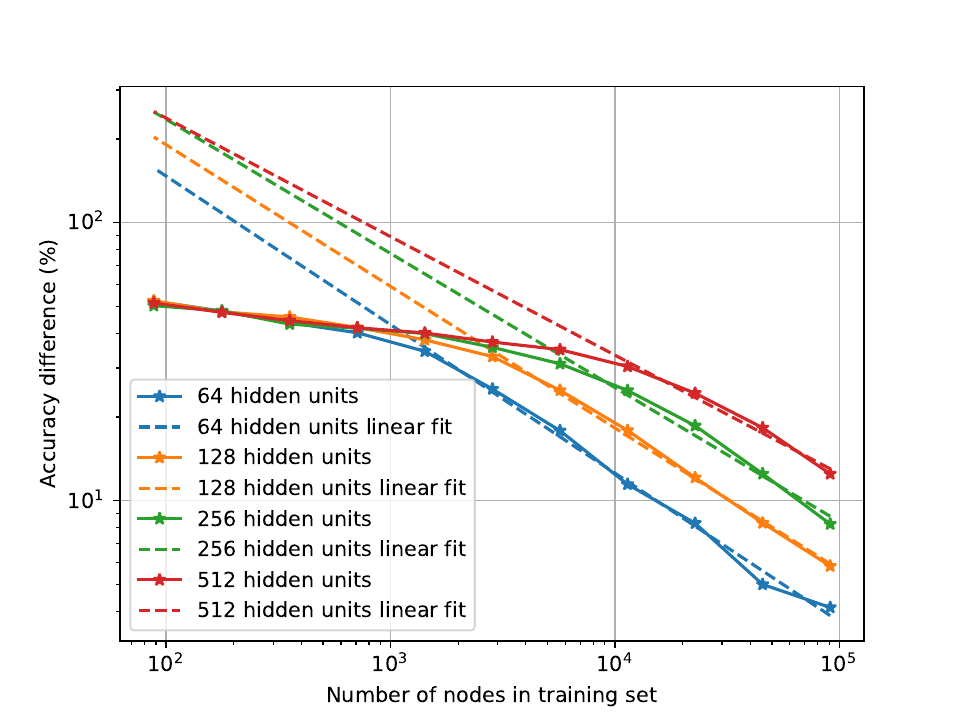}
        \caption{Four Layers}
        % \label{fig:arxiv_gen_gap}
    \end{subfigure}
    \caption{Generalization gap for the OGBN-Arxiv dataset on the accuracy as a function of the number of nodes in the training set.  }
    \label{fig:arxiv_loss}
\end{figure*}

\begin{figure*}
    \centering
    \begin{subfigure}{0.33\textwidth}
        \centering
        \includegraphics[width=\linewidth]{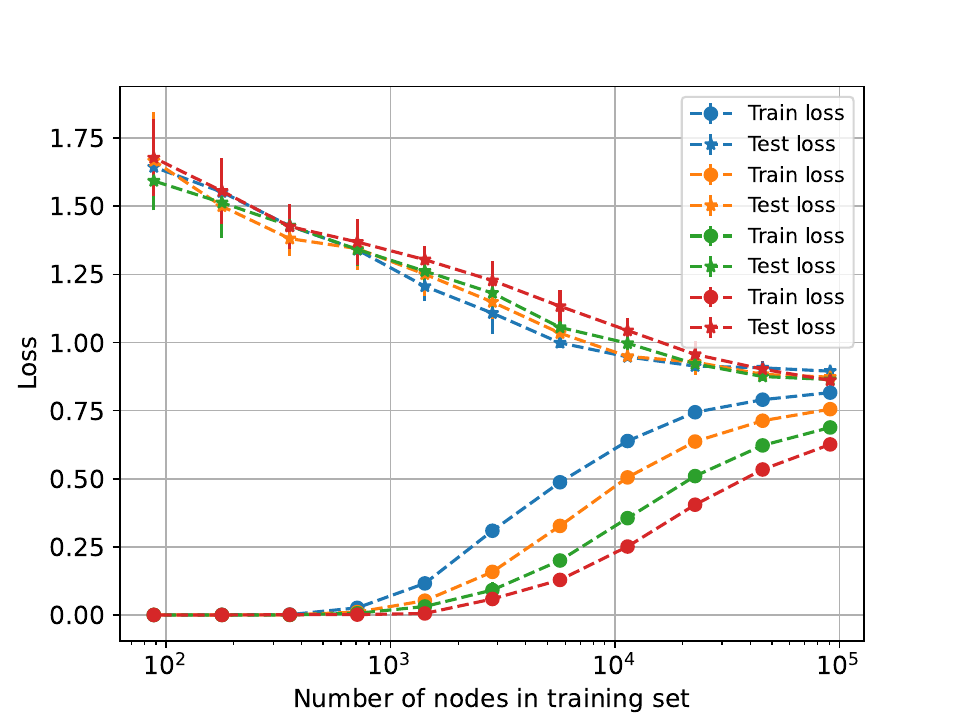}
        % \caption{Two Layers}
        % \label{fig:arxiv_acc_1}
    \end{subfigure}%
    \begin{subfigure}{0.33\textwidth}
        \centering
        \includegraphics[width=\linewidth]{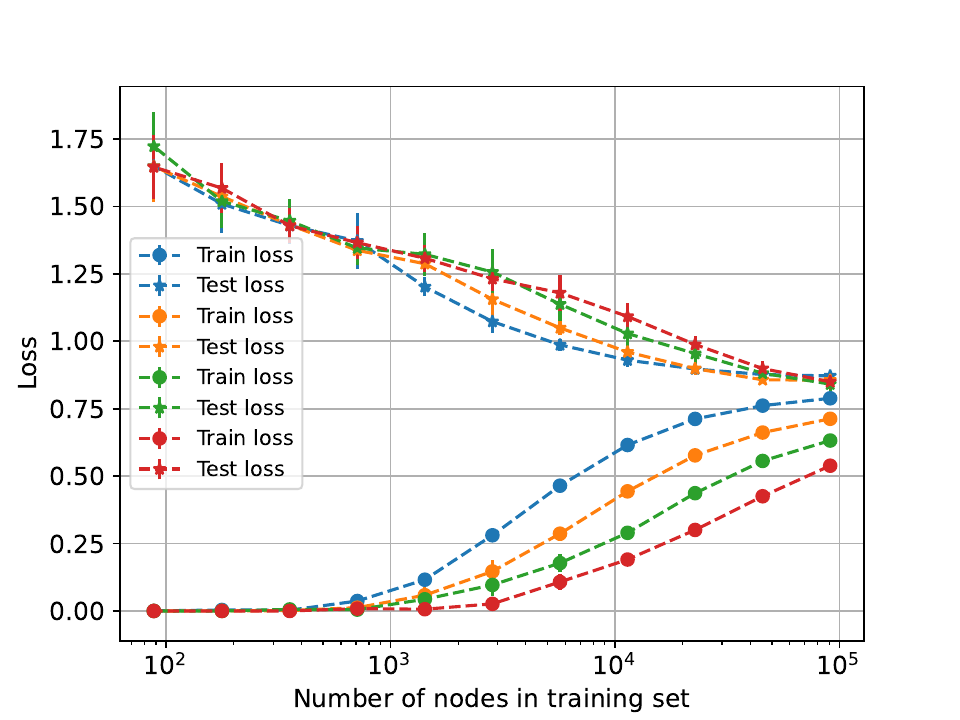}
        % \caption{Three Layers}
        % \label{fig:arxiv_acc_3}
    \end{subfigure}%
    \begin{subfigure}{0.33\textwidth}
        \centering
        \includegraphics[width=\linewidth]{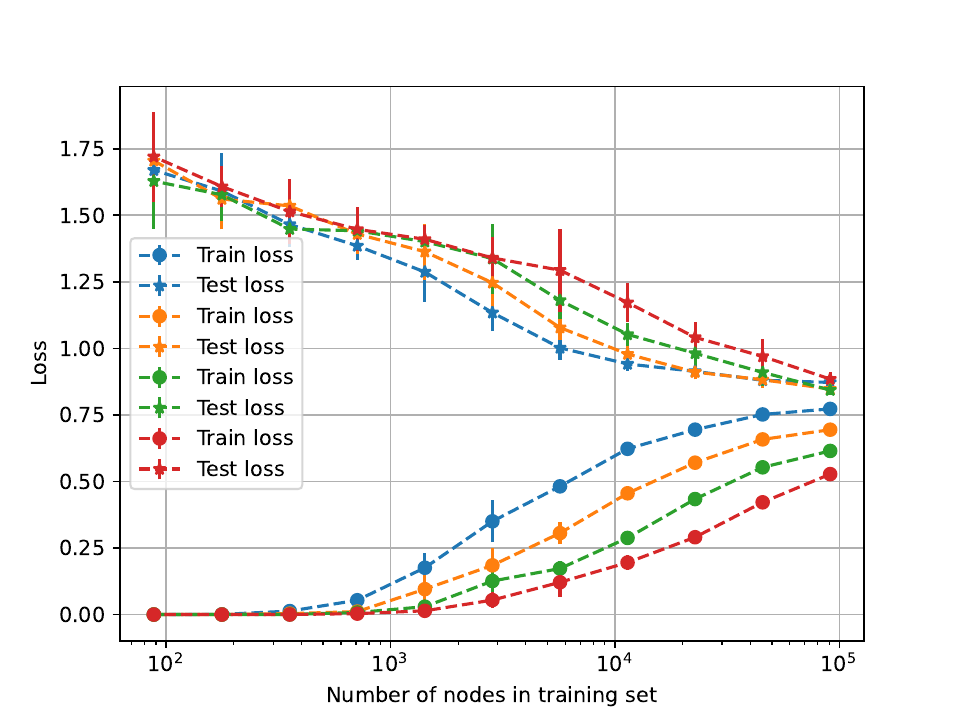}
        % \caption{Four Layers}
        % \label{fig:arxiv_gen_gap}
    \end{subfigure}
    \begin{subfigure}{0.33\textwidth}
        \centering
        \includegraphics[width=\linewidth]{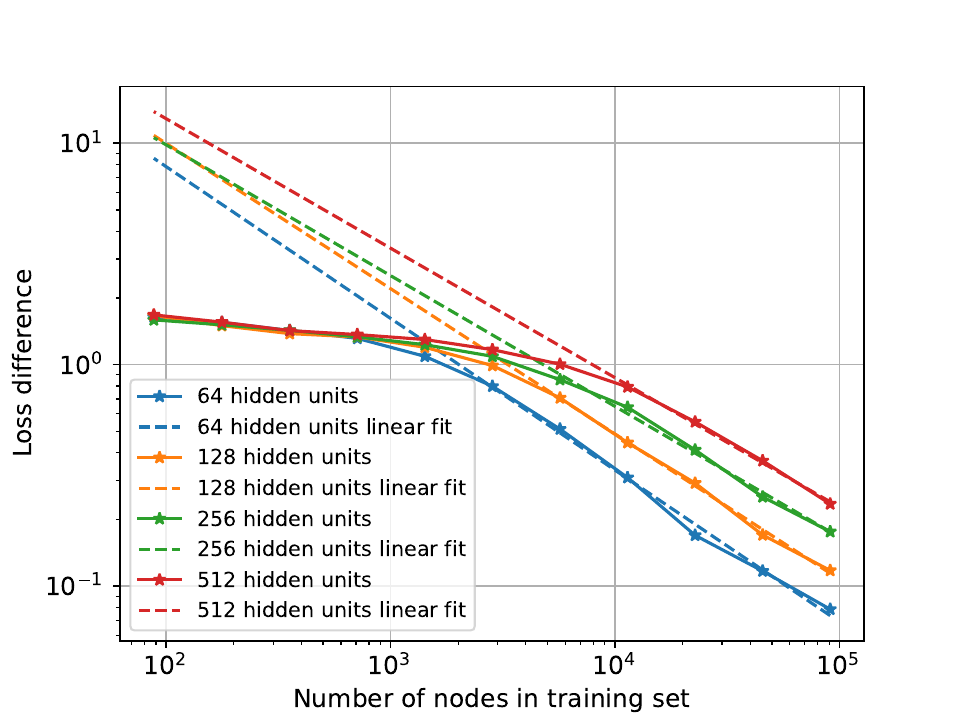}
        \caption{Two Layers}
        % \label{fig:arxiv_acc_1}
    \end{subfigure}%
    \begin{subfigure}{0.33\textwidth}
        \centering
        \includegraphics[width=\linewidth]{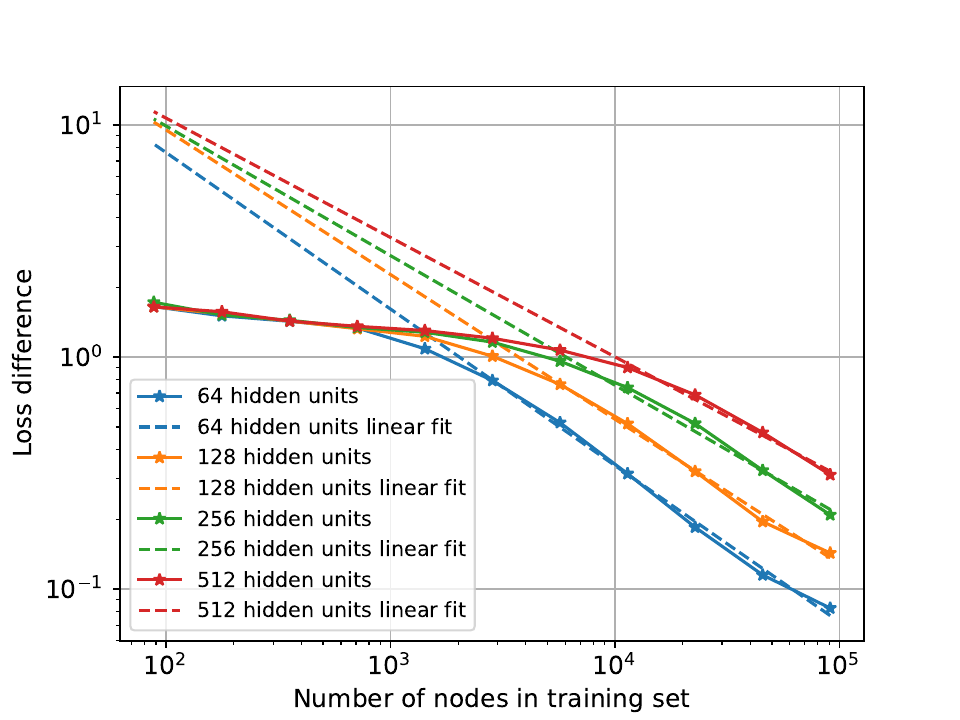}
        \caption{Three Layers}
        % \label{fig:arxiv_acc_3}
    \end{subfigure}%
    \begin{subfigure}{0.33\textwidth}
        \centering
        \includegraphics[width=\linewidth]{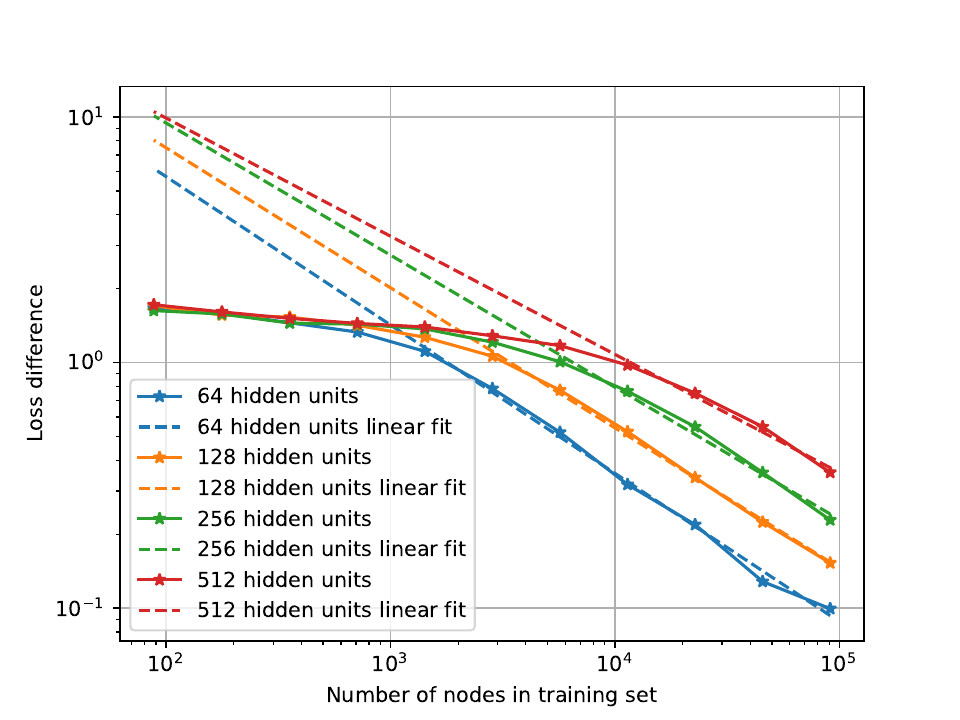}
        \caption{Four Layers}
        % \label{fig:arxiv_gen_gap}
    \end{subfigure}
    \caption{Generalization gap for the OGBN-arxiv dataset on the loss (cross-entropy) as a function of the number of nodes in the training set.  }
    \label{fig:arxiv_acc}
\end{figure*}

\begin{table}[!ht]
\centering
\begin{tabular}{|c|c|c|c|c|c|}
\hline
Type             & Lay.     & Feat.       & Slope & Point & \begin{tabular}[c]{@{}c@{}}Pearson\\ Correlation \\ Coefficient\end{tabular}                                        \\ \hline\hline
Accuracy & $2$ & $64$  & $-6.301e-01$ & $3.621e+00$  & $-9.980e-01$  \\ \hline
Accuracy & $2$ & $128$  & $-6.034e-01$ & $3.663e+00$  & $-9.985e-01$\\ \hline
Accuracy & $2$ & $256$  & $-5.347e-01$ & $3.493e+00$  & $-9.952e-01$ \\ \hline
Accuracy & $2$ & $512$  & $-5.328e-01$ & $3.605e+00$  & $-9.975e-01$ \\ \hline
Accuracy & $3$ & $64$  & $-6.271e-01$ & $3.600e+00$  & $-9.987e-01$  \\ \hline
Accuracy & $3$ & $128$  & $-5.730e-01$ & $3.567e+00$  & $-9.970e-01$  \\ \hline
Accuracy & $3$ & $256$  & $-4.986e-01$ & $3.393e+00$  & $-9.910e-01$  \\ \hline
Accuracy & $3$ & $512$  & $-4.529e-01$ & $3.315e+00$  & $-9.934e-01$ \\ \hline
Accuracy & $4$ & $64$  & $-5.343e-01$ & $3.236e+00$  & $-9.971e-01$ \\ \hline
Accuracy & $4$ & $128$  & $-5.096e-01$ & $3.299e+00$  & $-9.987e-01$ \\ \hline
Accuracy & $4$ & $256$  & $-4.827e-01$ & $3.337e+00$  & $-9.920e-01$ \\ \hline
Accuracy & $4$ & $512$  & $-4.264e-01$ & $3.229e+00$  & $-9.927e-01$ \\ \hline
Loss & $2$ & $64$  & $-6.853e-01$ & $2.265e+00$  & $-9.975e-01$ \\ \hline
Loss & $2$ & $128$  & $-6.562e-01$ & $2.311e+00$  & $-9.988e-01$\\ \hline
Loss & $2$ & $256$  & $-5.907e-01$ & $2.174e+00$  & $-9.968e-01$ \\ \hline
Loss & $2$ & $512$  & $-5.848e-01$ & $2.280e+00$  & $-9.989e-01$ \\ \hline
Loss & $3$ & $64$  & $-6.739e-01$ & $2.228e+00$  & $-9.980e-01$ \\ \hline
Loss & $3$ & $128$  & $-6.229e-01$ & $2.224e+00$  & $-9.976e-01$ \\ \hline
Loss & $3$ & $256$  & $-5.581e-01$ & $2.111e+00$  & $-9.942e-01$ \\ \hline
Loss & $3$ & $512$  & $-5.141e-01$ & $2.057e+00$  & $-9.955e-01$ \\ \hline
Loss & $4$ & $64$  & $-6.039e-01$ & $1.964e+00$  & $-9.980e-01$ \\ \hline
Loss & $4$ & $128$  & $-5.701e-01$ & $2.014e+00$  & $-9.991e-01$ \\ \hline
Loss & $4$ & $256$  & $-5.379e-01$ & $2.051e+00$  & $-9.951e-01$ \\ \hline
Loss & $4$ & $512$  & $-4.810e-01$ & $1.957e+00$  & $-9.937e-01$ \\ \hline
\end{tabular}
\caption{Details of the linear approximation of the Arxiv Dataset. Note that in this case, we used only the values of the generalization gap whose training error is below $95\%$.}
\label{tab:arxiv}
\end{table}

\begin{figure*}
    \centering
    \includegraphics[width=\textwidth]{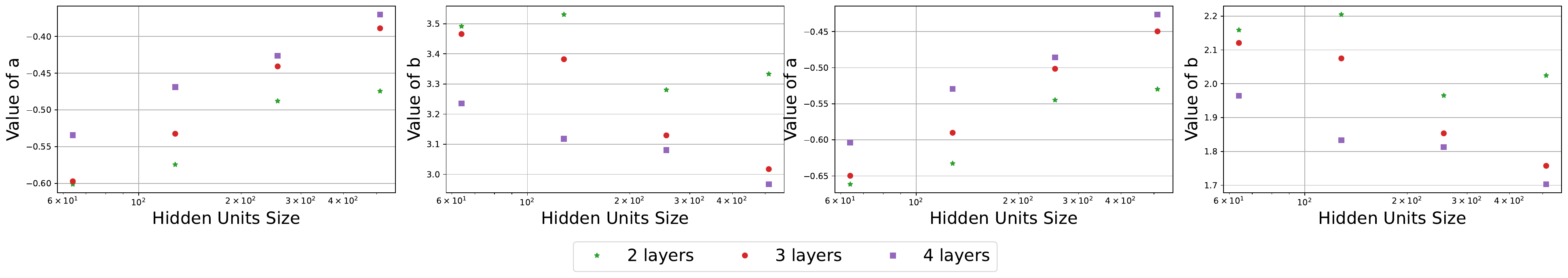}
    \begin{subfigure}{0.49\textwidth}
        \caption{Accuracy}
    \end{subfigure}
    \begin{subfigure}{0.49\textwidth}
        \caption{Loss}
    \end{subfigure}
    \caption{Values of slope (a) and point (b) corresponding to the linear fit ($a*\texttt{log}(N)+b$) of Figures \ref{fig:arxiv_acc} and \ref{fig:arxiv_loss}.  } 
    \label{fig:a_b_arxiv}
\end{figure*}

\subsubsection{Cora dataset}
For the Cora dataset, we used the standard one, which can be obtained running $\texttt{torch\_geometric.datasets.Planetoid(root="./data",name='Cora')}$. 

\begin{figure*}[ht!]
    \centering
    \begin{subfigure}{0.32\textwidth}
        \centering
        \includegraphics[width=\linewidth]{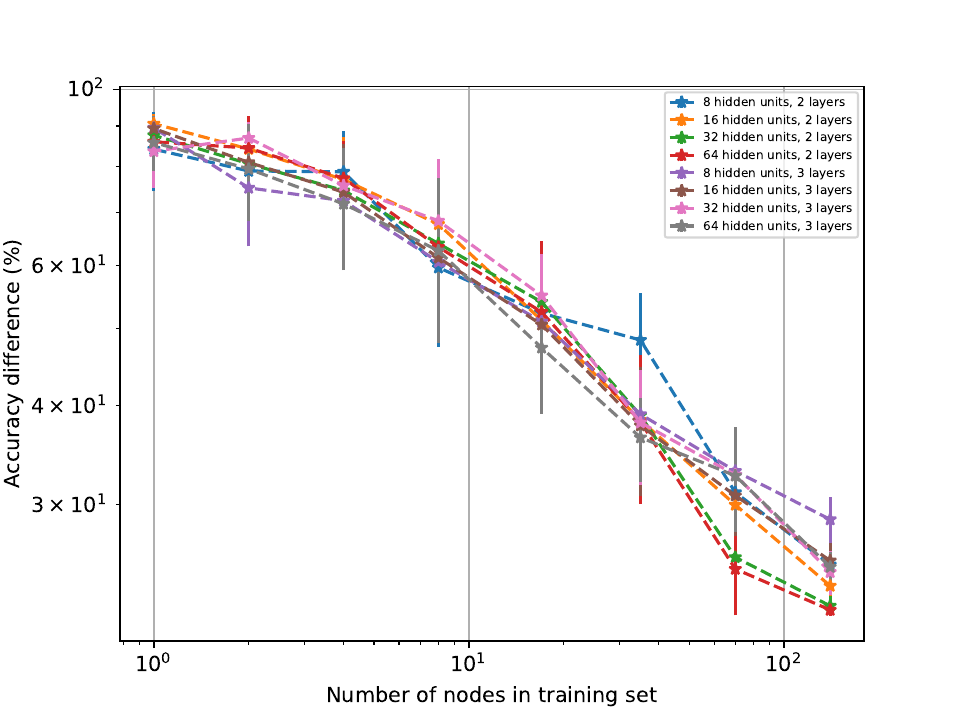}
        % \caption{Accuracy Difference in Cora Dataset.}
        % \label{fig:cora_acc_1}
    \end{subfigure}%
    \begin{subfigure}{0.32\textwidth}
        \centering
        \includegraphics[width=\linewidth]{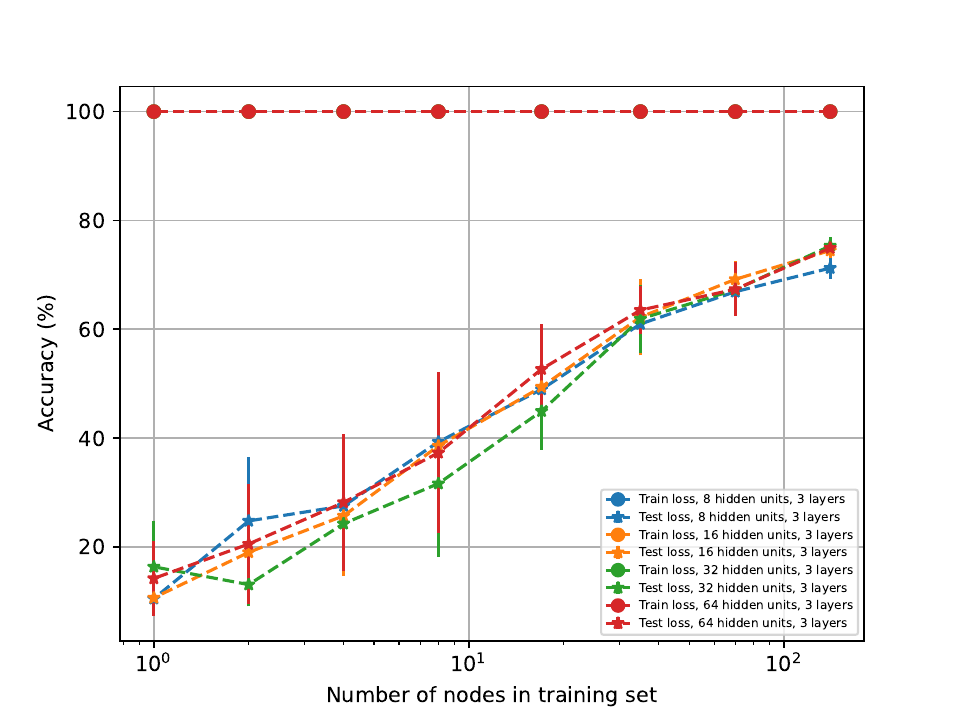}
        % \caption{Accuracy in Train/Test in Cora Dataset for a $3$ Layers GNN.}
        % \label{fig:cora_acc_layer_3}
    \end{subfigure}%
    \begin{subfigure}{0.32\textwidth}
        \centering
        \includegraphics[width=\linewidth]{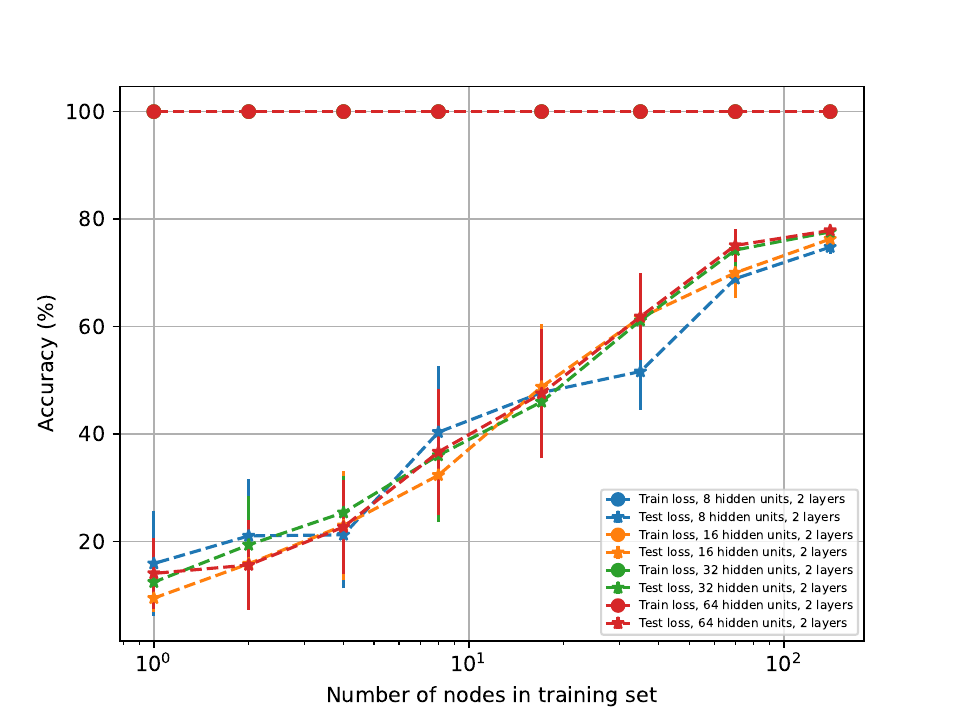}
        % \caption{Accuracy in Train/Test in Cora Dataset for a $2$ Layers GNN.}
        % \label{fig:cora_acc_layer_2}
    \end{subfigure}
        \begin{subfigure}{0.32\textwidth}
        \centering
        \includegraphics[width=\linewidth]{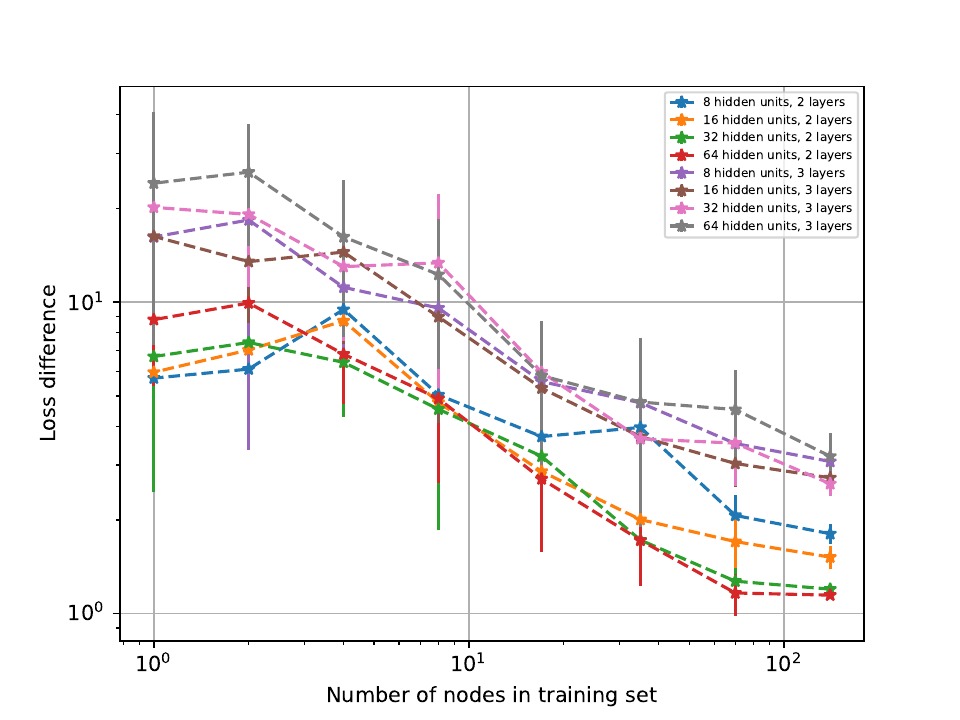}
        \caption{Generalization Gap.}
        \label{fig:cora_acc_3}
    \end{subfigure}
    \begin{subfigure}{0.32\textwidth}
        \centering
        \includegraphics[width=\linewidth]{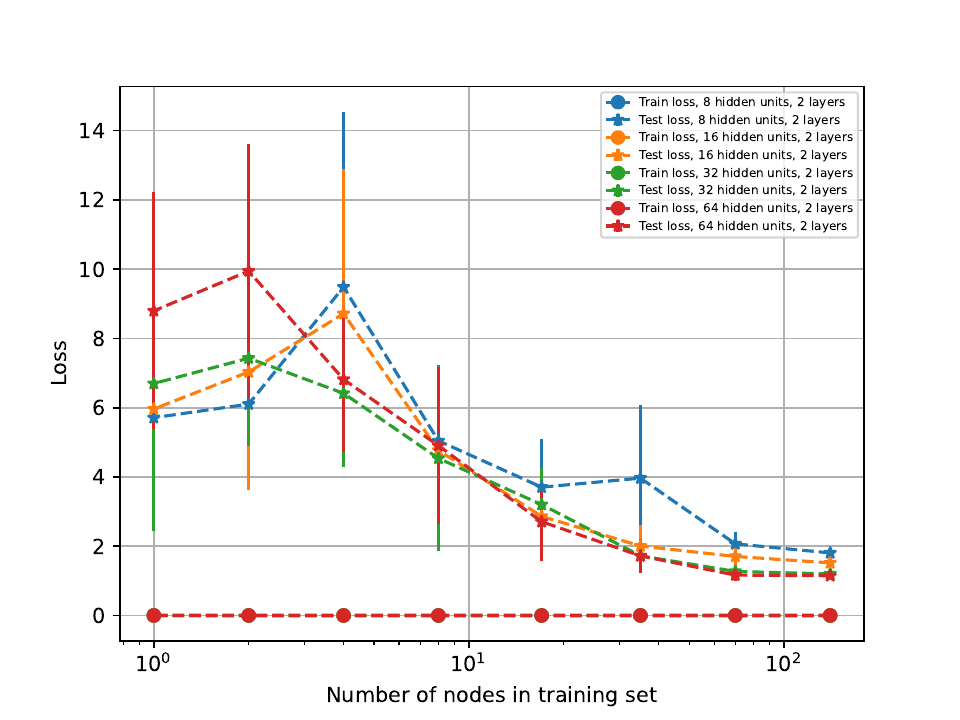}
        \caption{$2$ Layers}
        \label{fig:cora_loss_layer_2}
    \end{subfigure}
    \begin{subfigure}{0.32\textwidth}
        \centering
        \includegraphics[width=\linewidth]{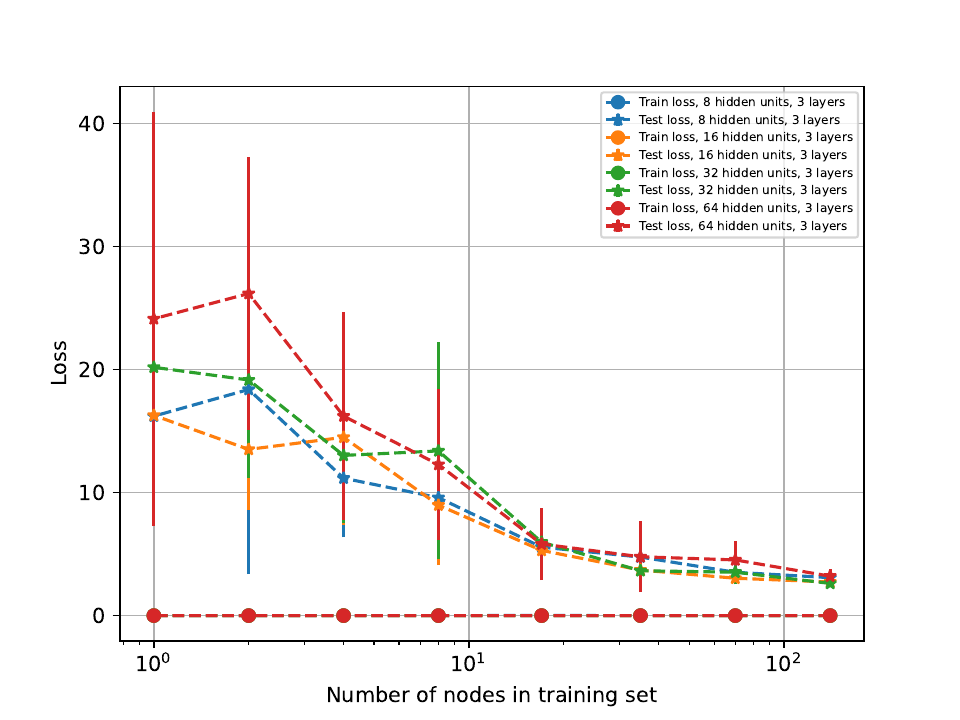}
        \caption{$3$ Layers}
        % \label{fig:cora_loss_layer_3}
    \end{subfigure}
    \caption{Generalization gap, testing, and training losses with respect to the number of nodes in the Cora dataset. The top row is in accuracy, and the bottom row is the cross-entropy loss. }
\end{figure*}

\begin{table}[!ht]
\centering
\begin{tabular}{|c|c|c|c|c|c|}
\hline
Type             & Lay.     & Feat.       & Slope & Point & \begin{tabular}[c]{@{}c@{}}Pearson\\ Correlation \\ Coefficient\end{tabular}                                      \\ \hline\hline
Accuracy & $2$ & $16$  & $-2.839e-01$ & $2.022e+00$  & $-9.803e-01$  \\ \hline
Accuracy & $2$ & $32$  & $-2.917e-01$ & $2.014e+00$  & $-9.690e-01$  \\ \hline
Accuracy & $2$ & $64$  & $-3.006e-01$ & $2.021e+00$  & $-9.686e-01$  \\ \hline
Accuracy & $3$ & $16$  & $-2.656e-01$ & $1.996e+00$  & $-9.891e-01$  \\ \hline
Accuracy & $3$ & $32$  & $-2.637e-01$ & $2.008e+00$  & $-9.679e-01$  \\ \hline
Accuracy & $3$ & $64$  & $-2.581e-01$ & $1.981e+00$  & $-9.870e-01$  \\ \hline
Loss & $2$ & $16$  & $-3.631e-01$ & $9.406e-01$  & $-9.250e-01$ \\ \hline
Loss & $2$ & $32$  & $-4.228e-01$ & $9.638e-01$  & $-9.657e-01$ \\ \hline
Loss & $2$ & $64$  & $-4.991e-01$ & $1.067e+00$  & $-9.776e-01$ \\ \hline
Loss & $3$ & $16$  & $-4.131e-01$ & $1.276e+00$  & $-9.753e-01$ \\ \hline
Loss & $3$ & $32$  & $-4.605e-01$ & $1.385e+00$  & $-9.730e-01$ \\ \hline
Loss & $3$ & $64$  & $-4.589e-01$ & $1.455e+00$  & $-9.756e-01$ \\ \hline
\end{tabular}
\caption{Details of the linear approximation of the Cora Dataset. Note that in this case we used all the values given that the training accuracy is $100\%$ for all nodes.}
\label{tab:Cora}
\end{table}

\subsubsection{CiteSeer dataset}
For the CiteSeer dataset, we used the standard one, which can be obtained running $\texttt{torch\_geometric.datasets.Planetoid(root="./data",name='CiteSeer')}$.

\begin{figure*}[ht!]
    \centering
    \begin{subfigure}{0.32\textwidth}
        \centering
        \includegraphics[width=\linewidth]{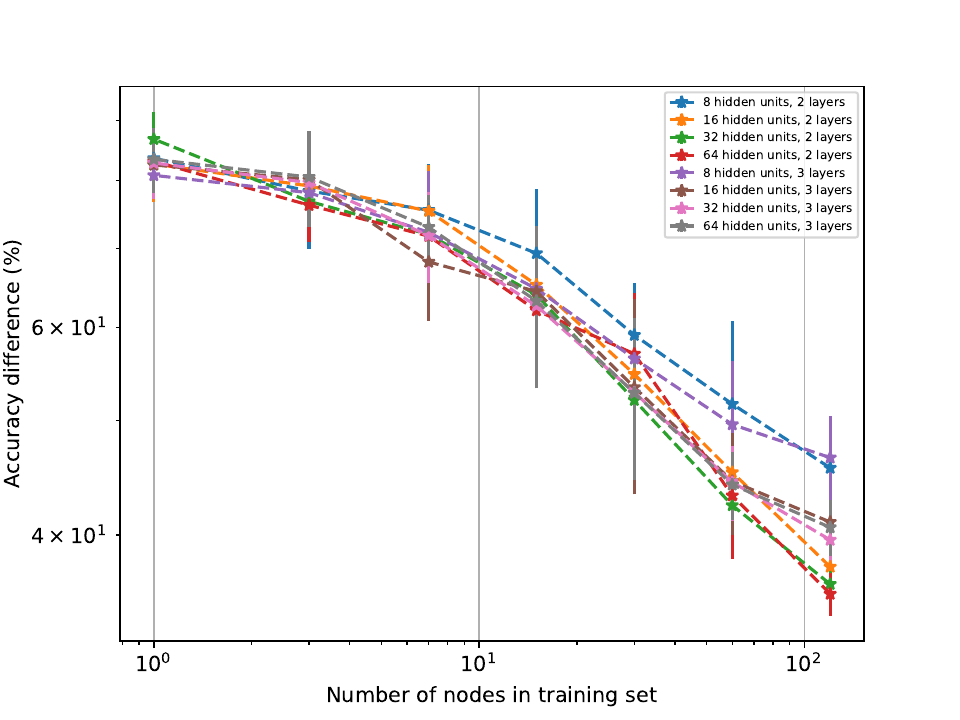}
        % \caption{Accuracy Difference in CiteSeer Dataset.}
        % \label{fig:CiteSeer_acc_1}
    \end{subfigure}%
    \begin{subfigure}{0.32\textwidth}
        \centering
        \includegraphics[width=\linewidth]{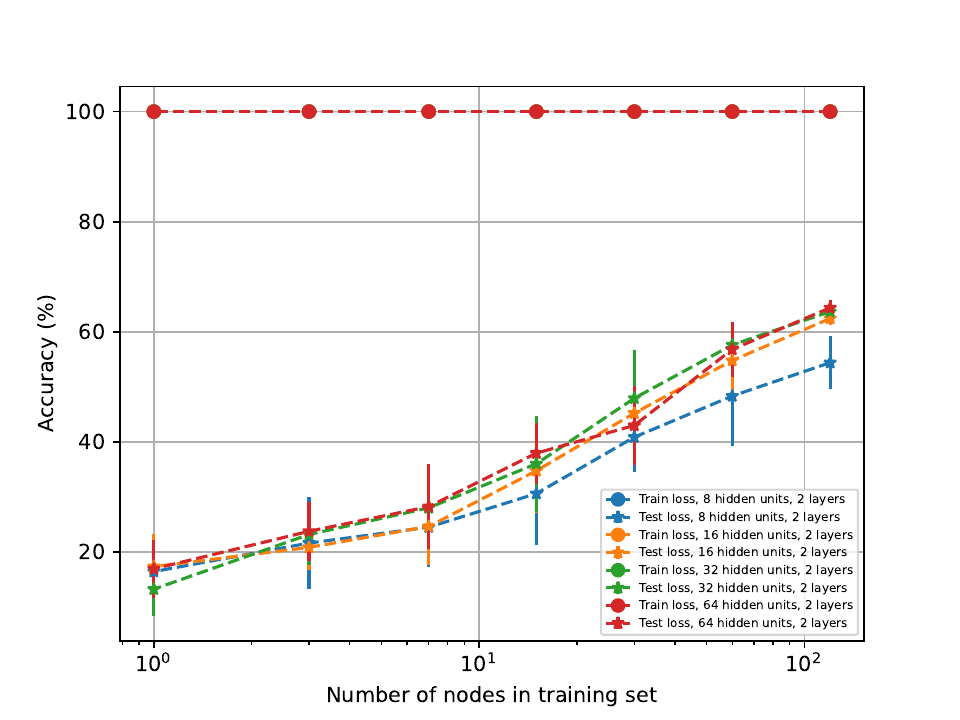}
        % \caption{Accuracy in Train/Test in CiteSeer Dataset for a $2$ Layers GNN.}
        % \label{fig:CiteSeer_acc_layer_2}
    \end{subfigure}%
        \begin{subfigure}{0.32\textwidth}
        \centering
        \includegraphics[width=\linewidth]{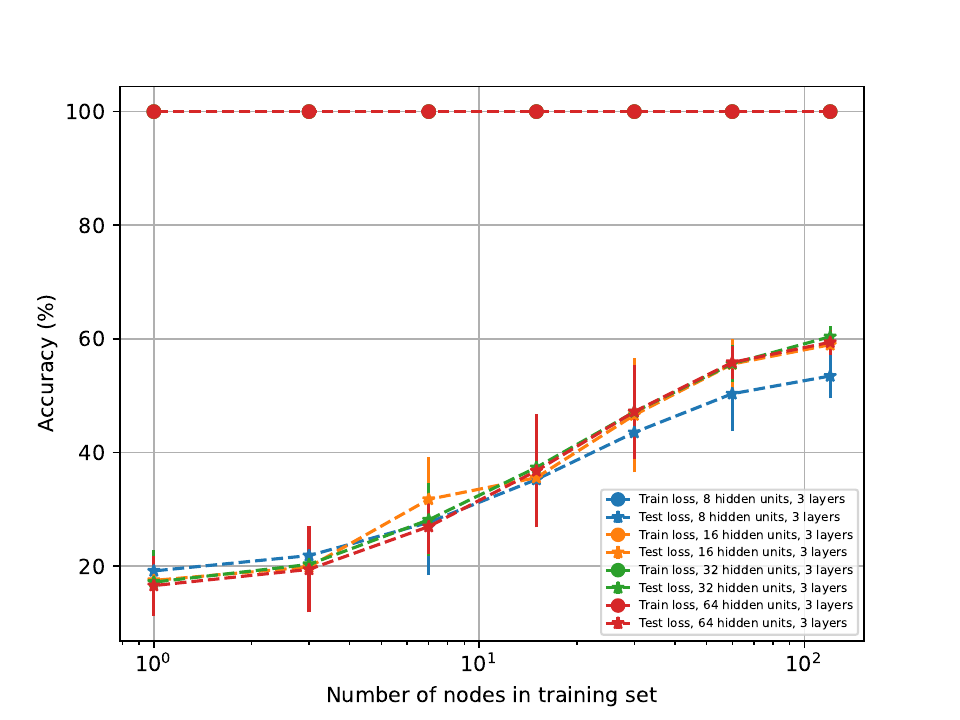}
        % \caption{Accuracy in Train/Test in CiteSeer Dataset for a $3$ Layers GNN.}
        % \label{fig:CiteSeer_acc_layer_3}
    \end{subfigure}\\
    \begin{subfigure}{0.32\textwidth}
        \centering
        \includegraphics[width=\linewidth]{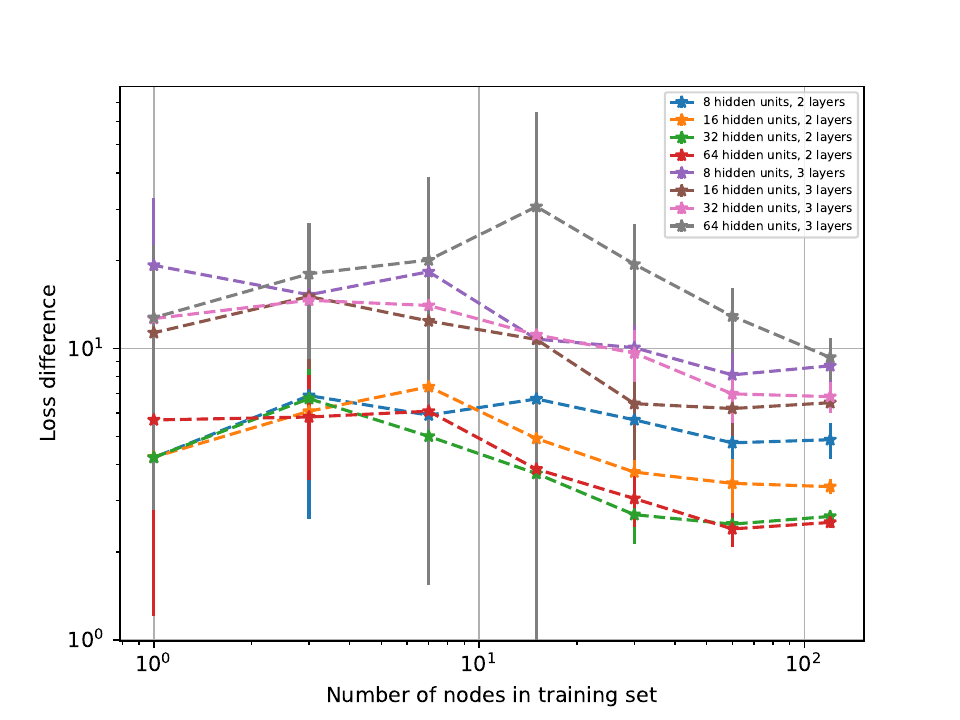}
        % \caption{Loss Difference in CiteSeer Dataset.}
        \caption{Generalization Gap}
    \end{subfigure}
    \begin{subfigure}{0.32\textwidth}
        \centering
        \includegraphics[width=\linewidth]{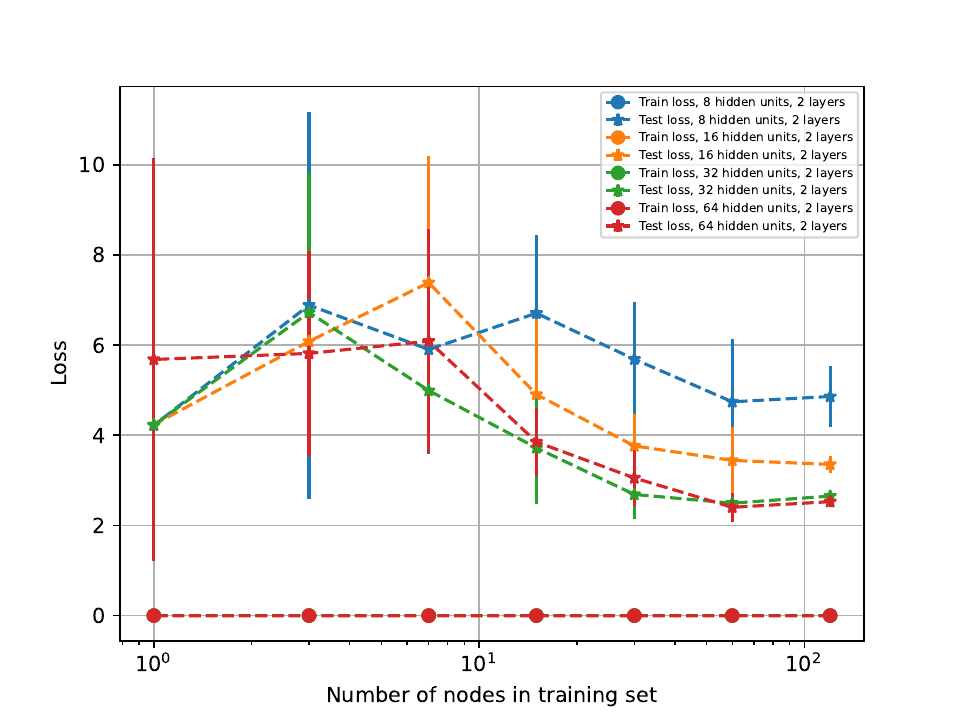}
        % \caption{Loss in Train/Test in CiteSeer Dataset for a $2$ Layers GNN.}
        \caption{$2$ Layers}
    \end{subfigure}
    \begin{subfigure}{0.32\textwidth}
        \centering
        \includegraphics[width=\linewidth]{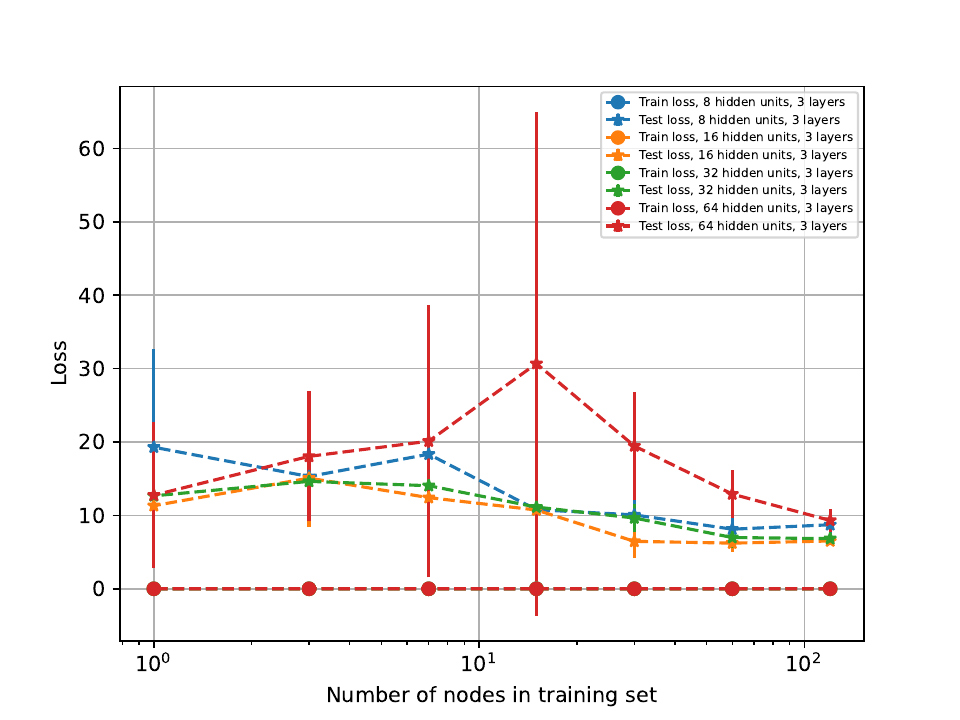}
        % \caption{Loss in Train/Test in CiteSeer Dataset for a $3$ Layers GNN.}
        \caption{$3$ Layers}
    \end{subfigure}
    \caption{Generalization gap, testing, and training losses with respect to the number of nodes in the CiteSeer dataset. The top row is in accuracy, and the bottom row is the cross-entropy loss. }
\end{figure*}

\begin{table}[!ht]
\centering
\begin{tabular}{|c|c|c|c|c|c|}
\hline
Type             & Lay.     & Feat.       & Slope & Point & \begin{tabular}[c]{@{}c@{}}Pearson\\ Correlation \\ Coefficient\end{tabular}                                     \\ \hline\hline
Accuracy & $2$ & $16$  & $-1.699e-01$ & $1.972e+00$  & $-9.518e-01$  \\ \hline
Accuracy & $2$ & $32$  & $-1.856e-01$ & $1.978e+00$  & $-9.714e-01$  \\ \hline
Accuracy & $2$ & $64$  & $-1.749e-01$ & $1.966e+00$  & $-9.534e-01$  \\ \hline
Accuracy & $3$ & $16$  & $-1.585e-01$ & $1.956e+00$  & $-9.721e-01$  \\ \hline
Accuracy & $3$ & $32$  & $-1.659e-01$ & $1.963e+00$  & $-9.721e-01$  \\ \hline
Accuracy & $3$ & $64$  & $-1.658e-01$ & $1.967e+00$  & $-9.702e-01$  \\ \hline
Loss & $2$ & $16$  & $-1.049e-01$ & $7.757e-01$  & $-5.924e-01$ \\ \hline
Loss & $2$ & $32$  & $-1.762e-01$ & $7.646e-01$  & $-7.981e-01$ \\ \hline
Loss & $2$ & $64$  & $-2.186e-01$ & $8.384e-01$  & $-9.120e-01$ \\ \hline
Loss & $3$ & $16$  & $-1.802e-01$ & $1.169e+00$  & $-8.345e-01$ \\ \hline
Loss & $3$ & $32$  & $-1.629e-01$ & $1.200e+00$  & $-8.767e-01$ \\ \hline
Loss & $3$ & $64$  & $-5.917e-02$ & $1.283e+00$  & $-2.562e-01$ \\ \hline
\end{tabular}
\caption{Details of the linear approximation of the CiteSeer Dataset. Note that in this case we used all the values given that the training accuracy is $100\%$ for all nodes.}
\label{tab:CiteSeer}
\end{table}

\subsubsection{PubMed dataset}
For the PubMed dataset, we used the standard one, which can be obtained running $\texttt{torch\_geometric.datasets.Planetoid(root="./data",name='PubMed')}$.

\begin{figure*}[ht!]
    \centering
    \begin{subfigure}{0.32\textwidth}
        \centering
        \includegraphics[width=\linewidth]{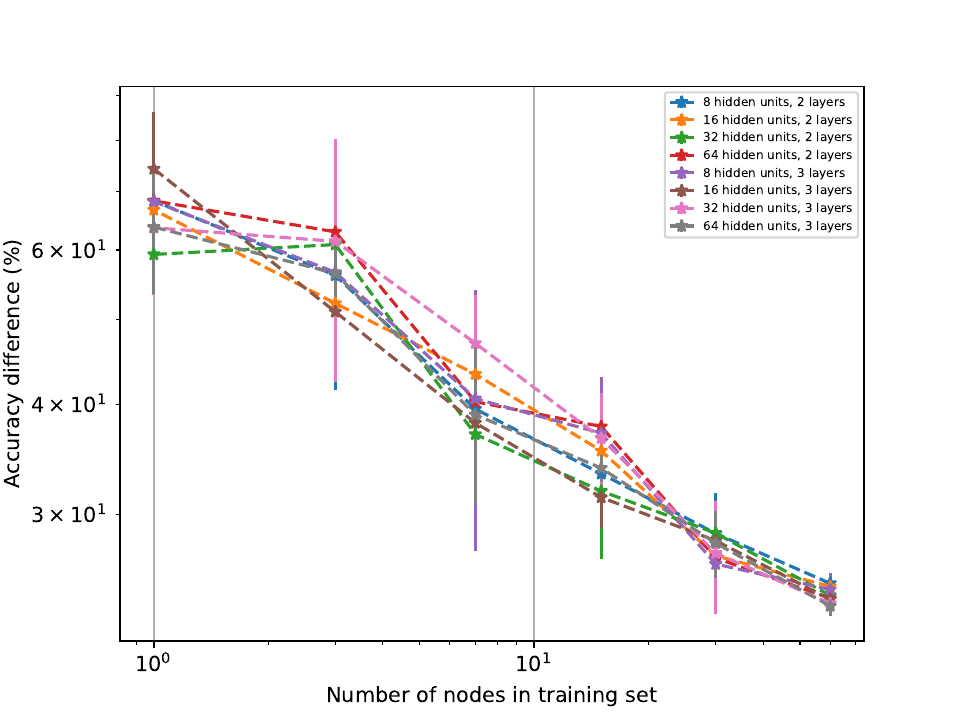}
        % \caption{Accuracy Difference in PubMed Dataset.}
        % \label{fig:PubMed_acc_1}
    \end{subfigure}%
    \begin{subfigure}{0.32\textwidth}
        \centering
        \includegraphics[width=\linewidth]{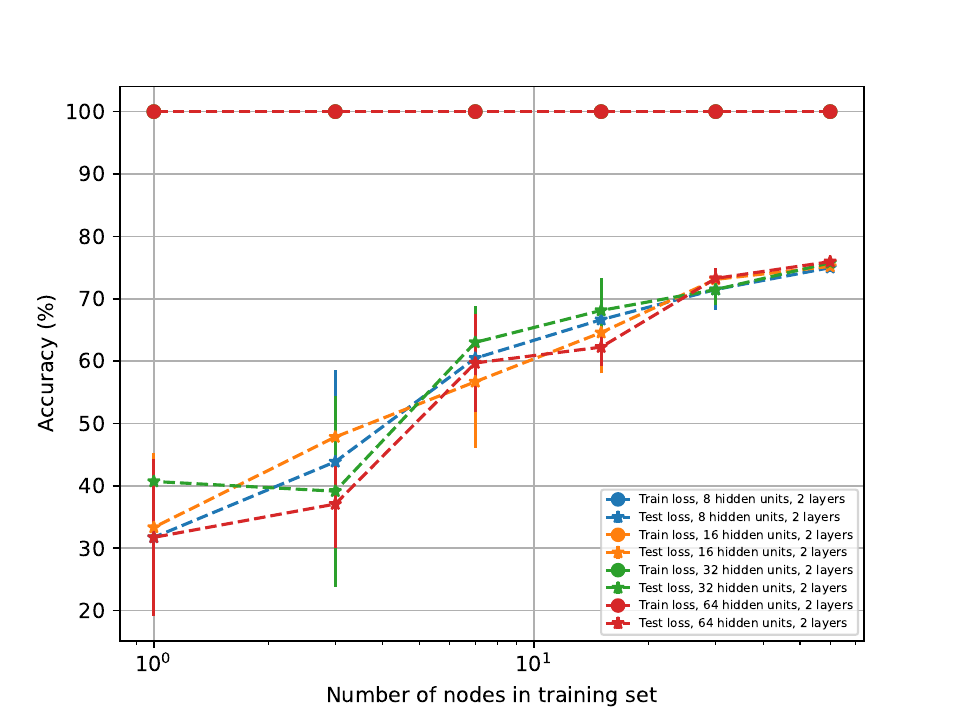}
        % \caption{Accuracy in Train/Test in PubMed Dataset for a $2$ Layers GNN.}
        % \label{fig:PubMed_acc_layer_2}
    \end{subfigure}%
    \begin{subfigure}{0.32\textwidth}
        \centering
        \includegraphics[width=\linewidth]{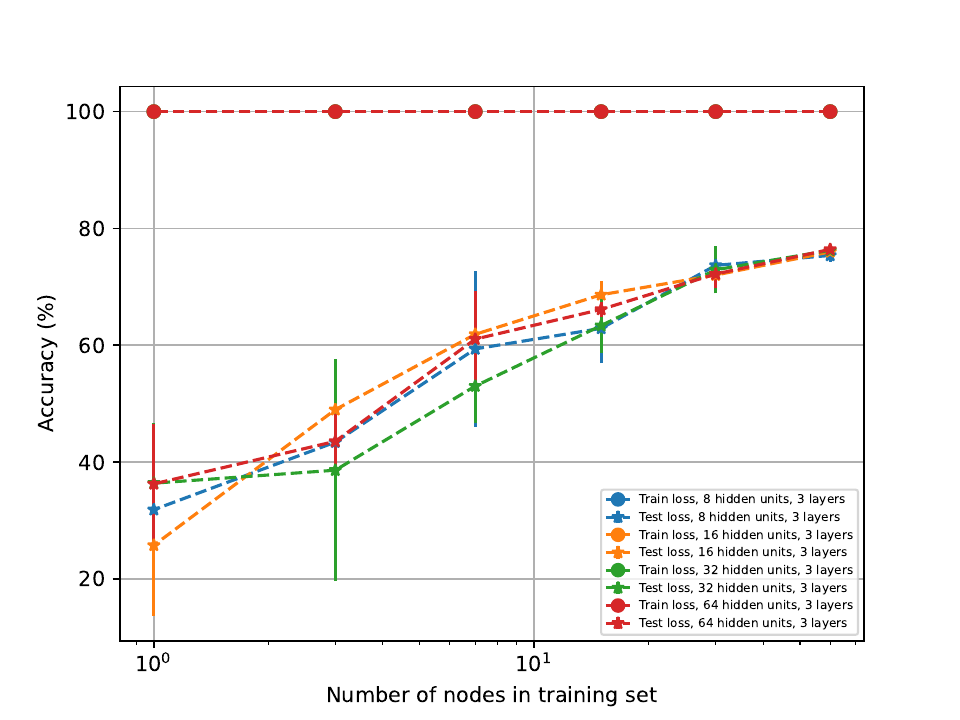}
        % \caption{Accuracy in Train/Test in PubMed Dataset for a $3$ Layers GNN.}
        % \label{fig:PubMed_acc_layer_3}
    \end{subfigure}\\
    \begin{subfigure}{0.32\textwidth}
        \centering
        \includegraphics[width=\linewidth]{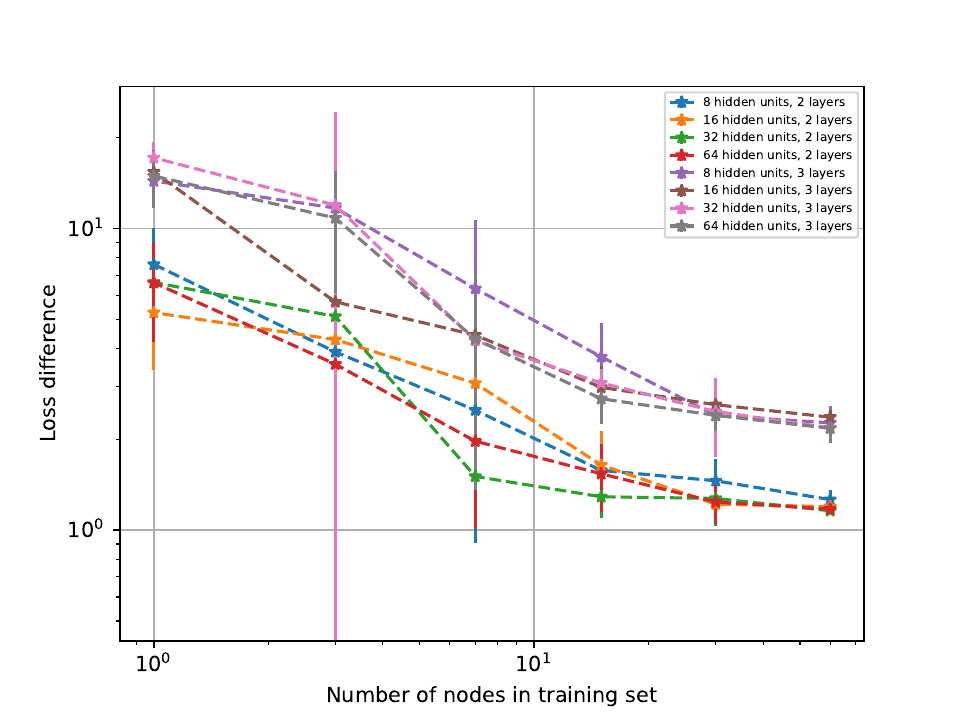}
        % \caption{Loss Difference in PubMed Dataset.}
        % \label{fig:PubMed_acc_3}
            \caption{Generalization Gap}
    \end{subfigure}
    \begin{subfigure}{0.32\textwidth}
        \centering
        \includegraphics[width=\linewidth]{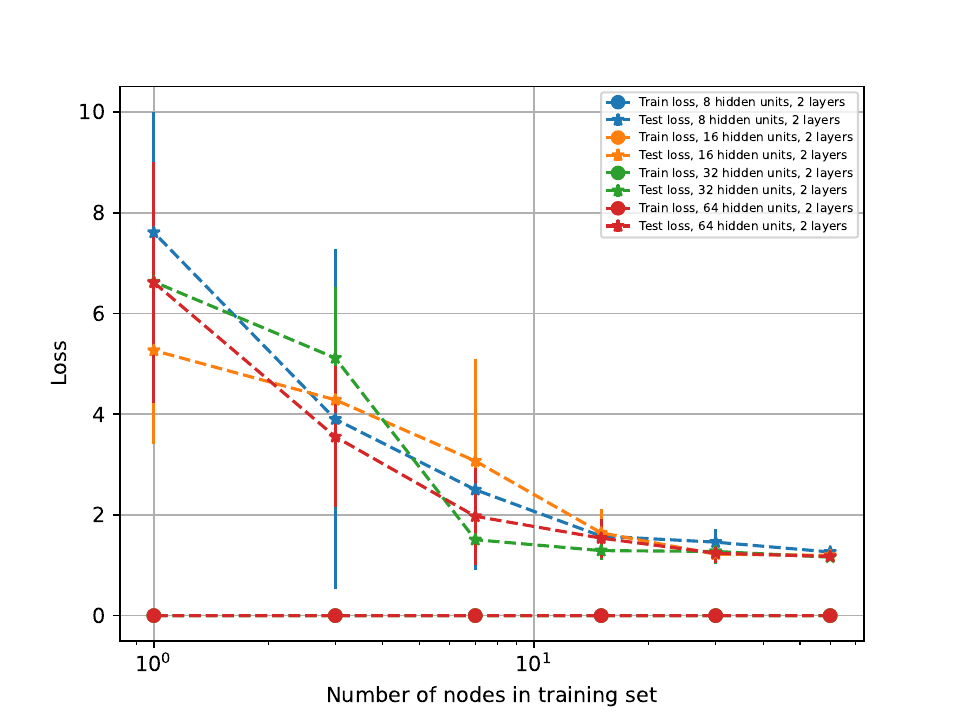}
        % \caption{Loss in Train/Test in PubMed Dataset for a $2$ Layers GNN.}
        \caption{$2$ Layers}
    \end{subfigure}
    \begin{subfigure}{0.32\textwidth}
        \centering
        \includegraphics[width=\linewidth]{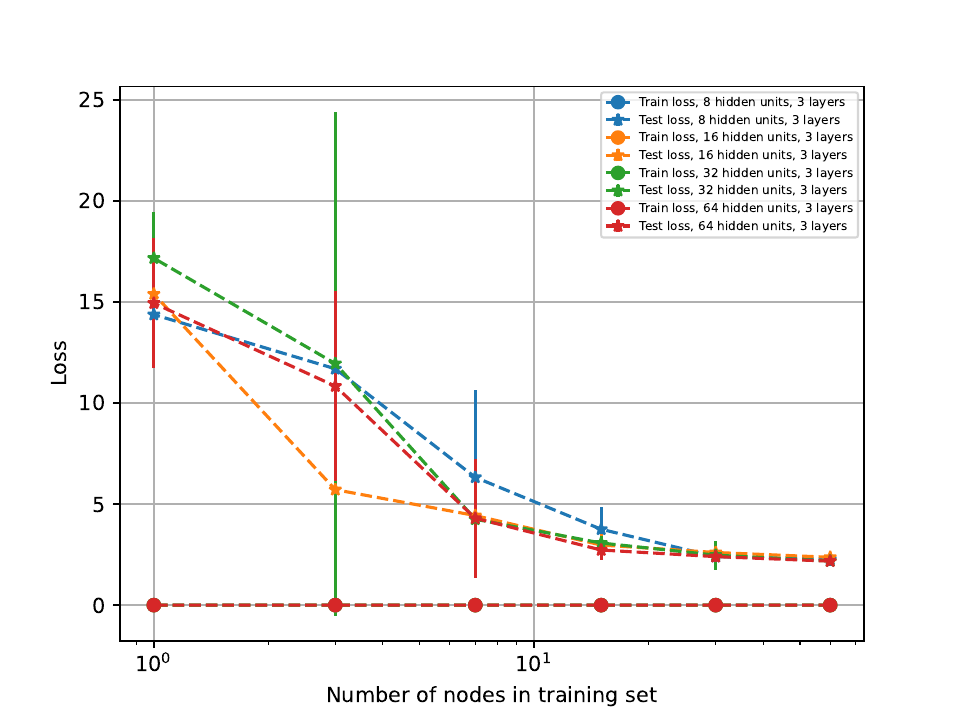}
        % \caption{Loss in Train/Test in PubMed Dataset for a $3$ Layers GNN.}
        \caption{$3$ Layers}
    \end{subfigure}
    \caption{Generalization gap, testing, and training losses with respect to the number of nodes in the PubMed dataset. The top row is in accuracy, and the bottom row is the cross-entropy loss. }
\end{figure*}
\begin{table}[!ht]
\centering
\begin{tabular}{|c|c|c|c|c|c|c|}
\hline
Type             & Lay.     & Feat.       & Slope & Point & \begin{tabular}[c]{@{}c@{}}Pearson\\ Correlation \\ Coefficient\end{tabular}                                         \\ \hline\hline
Accuracy & $2$ & $16$  & $-2.523e-01$ & $1.834e+00$  & $-9.942e-01$  \\ \hline
Accuracy & $2$ & $32$  & $-2.433e-01$ & $1.812e+00$  & $-9.583e-01$  \\ \hline
Accuracy & $2$ & $64$  & $-2.764e-01$ & $1.869e+00$  & $-9.761e-01$  \\ \hline
Accuracy & $3$ & $16$  & $-2.748e-01$ & $1.844e+00$  & $-9.910e-01$  \\ \hline
Accuracy & $3$ & $32$  & $-2.661e-01$ & $1.861e+00$  & $-9.712e-01$  \\ \hline
Accuracy & $3$ & $64$  & $-2.558e-01$ & $1.827e+00$  & $-9.890e-01$  \\ \hline
Loss & $2$ & $16$  & $-4.166e-01$ & $7.695e-01$  & $-9.718e-01$ \\ \hline
Loss & $2$ & $32$  & $-4.733e-01$ & $7.852e-01$  & $-9.137e-01$ \\ \hline
Loss & $2$ & $64$  & $-4.368e-01$ & $7.547e-01$  & $-9.718e-01$ \\ \hline
Loss & $3$ & $16$  & $-4.424e-01$ & $1.067e+00$  & $-9.549e-01$ \\ \hline
Loss & $3$ & $32$  & $-5.518e-01$ & $1.223e+00$  & $-9.655e-01$ \\ \hline
Loss & $3$ & $64$  & $-5.246e-01$ & $1.169e+00$  & $-9.632e-01$ \\ \hline
\end{tabular}
\caption{Details of the linear approximation of the PubMed Dataset. Note that in this case we used all the values given that the training accuracy is $100\%$ for all nodes.}
\label{tab:PubMed}
\end{table}

\subsubsection{Coauthors CS dataset}
For the CS dataset, we used the standard one, which can be obtained running $\texttt{torch\_geometric.datasets.Coauthor(root="./data", name='CS')}$. In this case, given that there are no training and testing sets, we randomly partitioned the datasets and used $90\%$ of the samples for training and the remaining $10\%$ for testing.

\begin{figure*}[ht!]
    \centering
    \begin{subfigure}{0.32\textwidth}
        \centering
        \includegraphics[width=\linewidth]{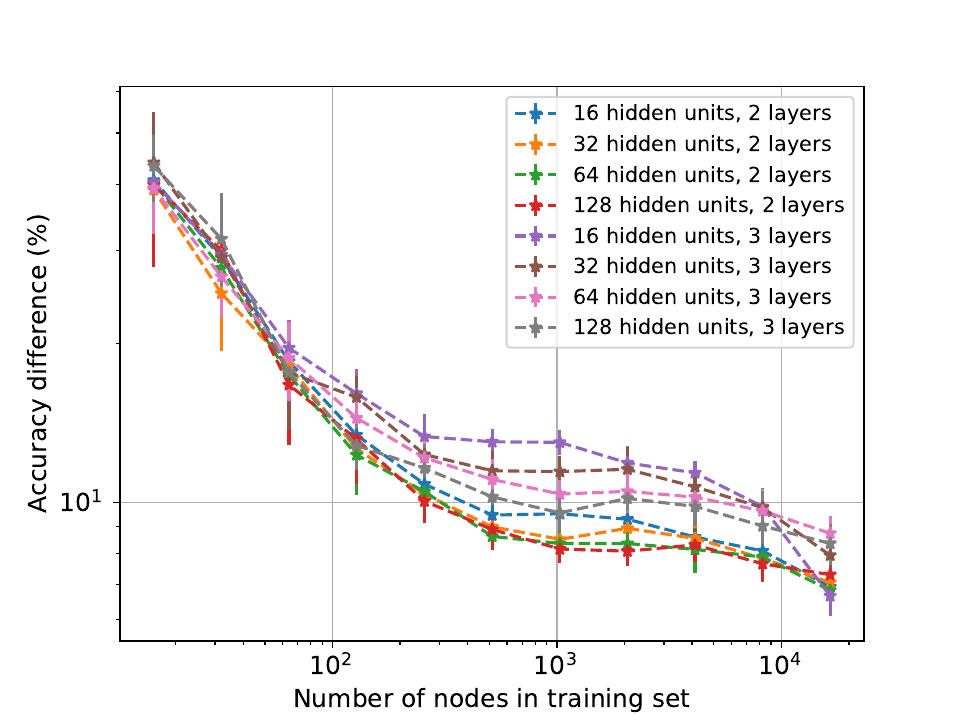}
        % \caption{Accuracy Difference in CS Dataset.}
        % \label{fig:CS_acc_1}
    \end{subfigure}%
        \begin{subfigure}{0.32\textwidth}
        \centering
        \includegraphics[width=\linewidth]{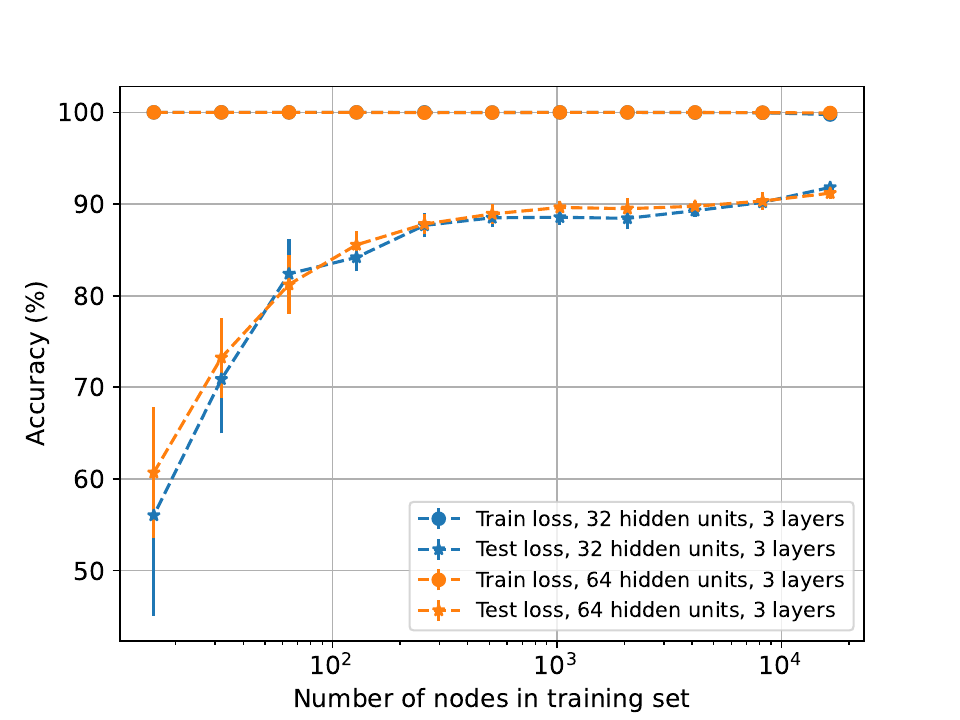}
        % \caption{Accuracy in Train/Test in CS Dataset for a $3$ Layers GNN.}
        % \label{fig:CS_acc_layer_3}
    \end{subfigure}%
    \begin{subfigure}{0.32\textwidth}
        \centering
        \includegraphics[width=\linewidth]{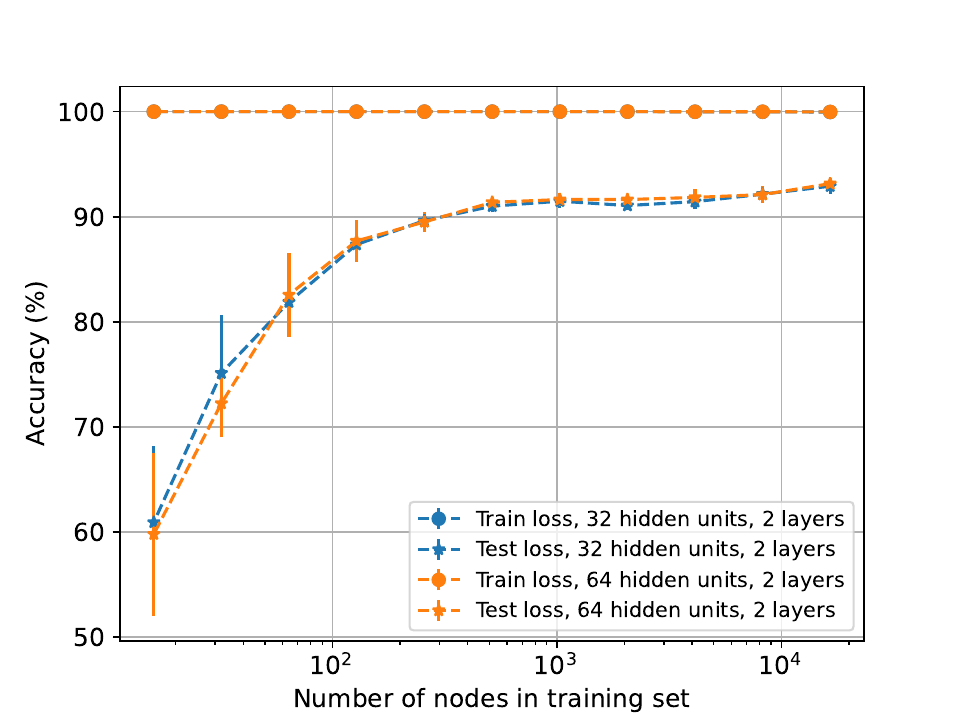}
        % \caption{Accuracy in Train/Test in CS Dataset for a $2$ Layers GNN.}
        % \label{fig:CS_acc_layer_2}
    \end{subfigure}\\
    \begin{subfigure}{0.32\textwidth}
        \centering
        \includegraphics[width=\linewidth]{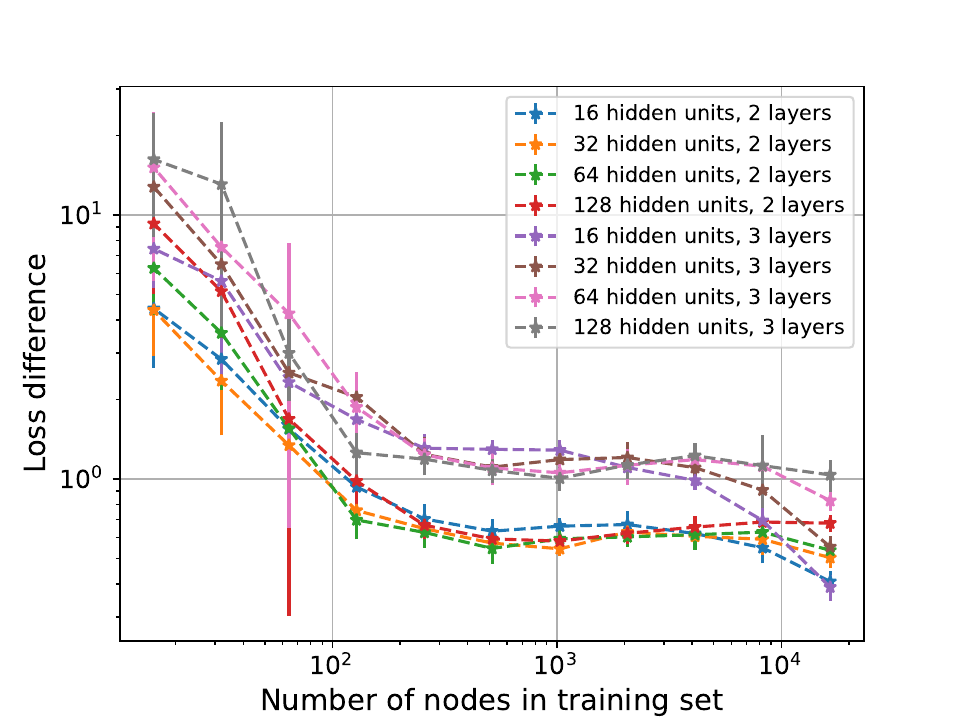}
        % \caption{Loss Difference in CS Dataset.}
            \caption{Generalization Gap}
    \end{subfigure}
    \begin{subfigure}{0.32\textwidth}
        \centering
        \includegraphics[width=\linewidth]{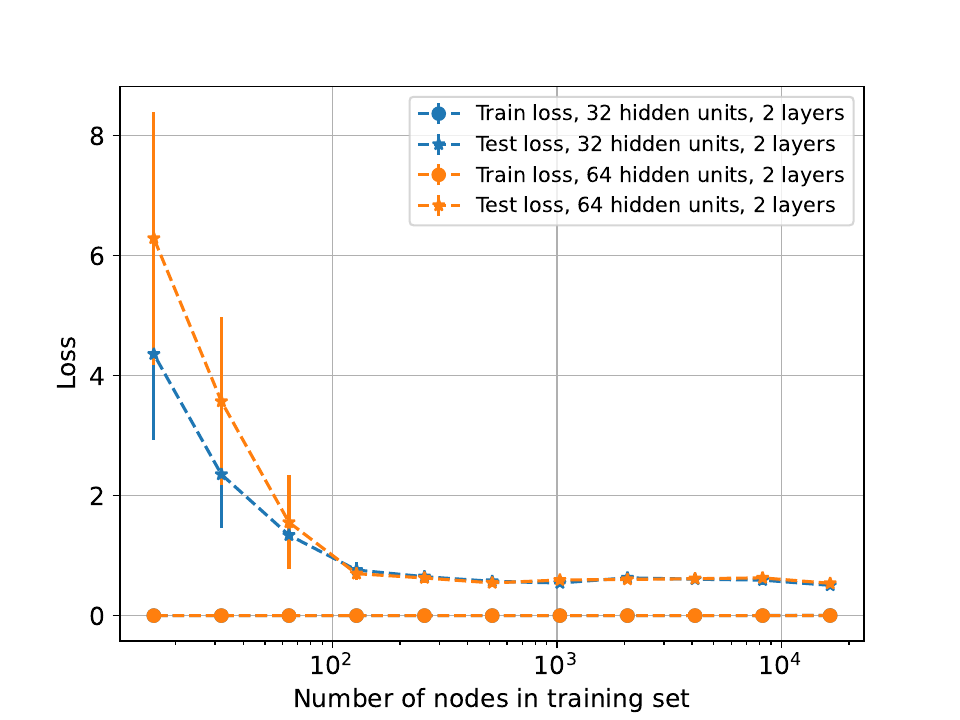}
        % \caption{Loss in Train/Test in CS Dataset for a $2$ Layers GNN.}
        \caption{$2$ Layers}
    \end{subfigure}
    \begin{subfigure}{0.32\textwidth}
        \centering
        \includegraphics[width=\linewidth]{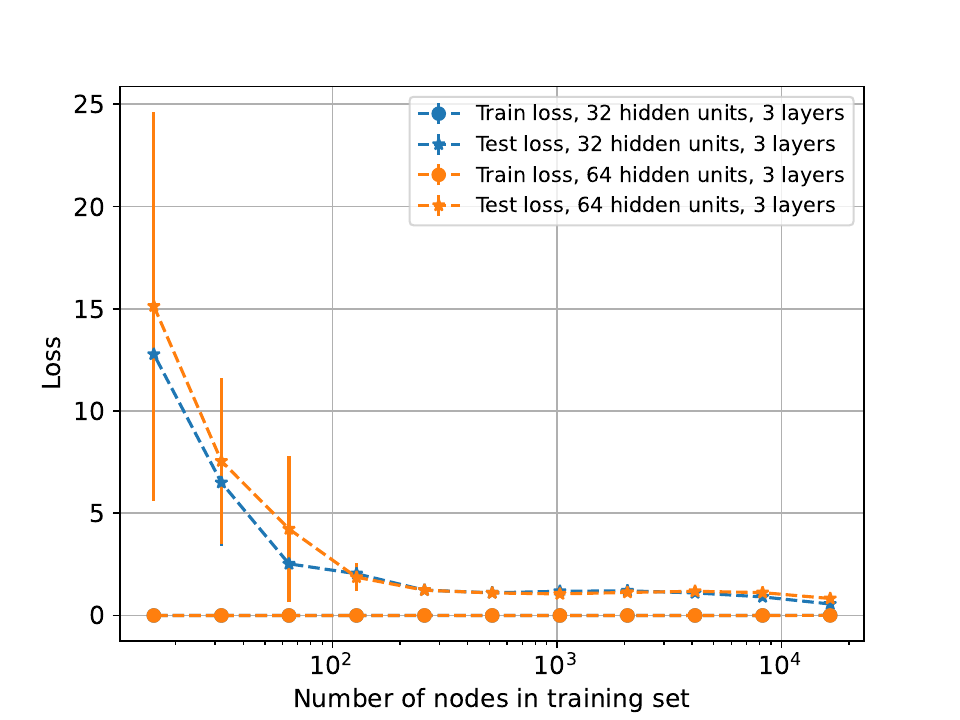}
        % \caption{Loss in Train/Test in CS Dataset for a $3$ Layers GNN.}
        \caption{$3$ Layers}
    \end{subfigure}
    \caption{Generalization gap, testing, and training losses with respect to the number of nodes in the CS dataset. The top row is in accuracy, and the bottom row is the cross-entropy loss. }
\end{figure*}

\begin{figure*}[ht!]
    \centering
    \begin{subfigure}{0.5\textwidth}
        \centering
        \includegraphics[width=\linewidth]{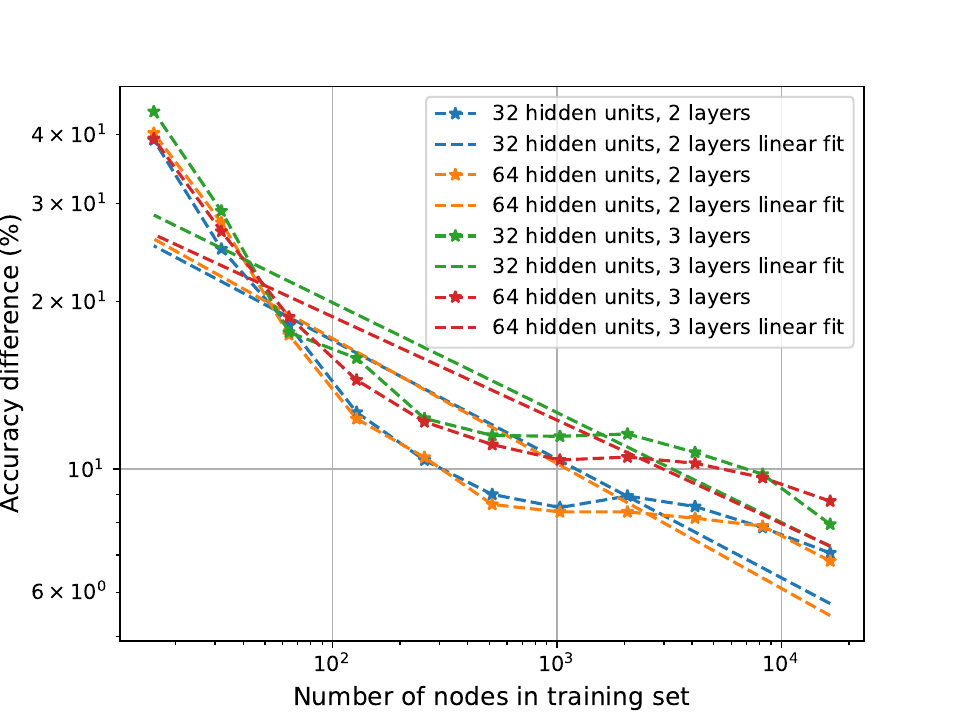}
        \caption{Linear fit for accuracy generalization gap}
        % \label{fig:CS_acc_1}
    \end{subfigure}%
        \begin{subfigure}{0.5\textwidth}
        \centering
        \includegraphics[width=\linewidth]{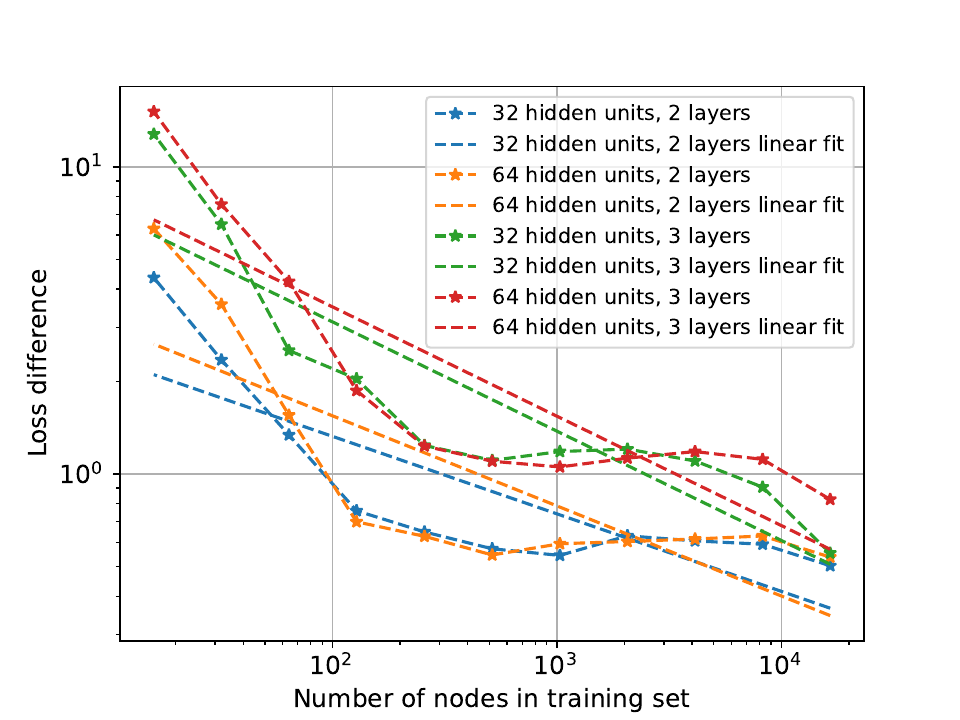}
        % \caption{Accuracy in Train/Test in CS Dataset for a $3$ Layers GNN.}
        \caption{Linear fit for loss generalization gap}
    \end{subfigure}%
\caption{Generalization gaps as a function of the number of nodes in the training set in the CS dataset. }
\end{figure*}
\begin{table}[!ht]
\centering
\begin{tabular}{|c|c|c|c|c|c|}
\hline
Type             & Lay.     & Feat.       & Slope & Point & \begin{tabular}[c]{@{}c@{}}Pearson\\ Correlation \\ Coefficient\end{tabular}                              \\ \hline\hline
Accuracy & $2$ & $32$  & $-2.138e-01$ & $1.659e+00$  & $-9.007e-01$  \\ \hline
Accuracy & $2$ & $64$  & $-2.250e-01$ & $1.685e+00$  & $-8.969e-01$  \\ \hline
Accuracy & $3$ & $32$  & $-1.979e-01$ & $1.695e+00$  & $-9.009e-01$  \\ \hline
Accuracy & $3$ & $64$  & $-1.862e-01$ & $1.646e+00$  & $-8.980e-01$  \\ \hline
Loss & $2$ & $32$  & $-2.523e-01$ & $6.273e-01$  & $-8.244e-01$  \\ \hline
Loss & $2$ & $64$  & $-2.933e-01$ & $7.762e-01$  & $-7.925e-01$  \\ \hline
Loss & $3$ & $32$  & $-3.558e-01$ & $1.207e+00$  & $-8.924e-01$  \\ \hline
Loss & $3$ & $64$  & $-3.560e-01$ & $1.256e+00$  & $-8.568e-01$  \\ \hline
\end{tabular}
\caption{Details of the linear approximation of the CS Dataset. Note that in this case we used all the values given that the training accuracy is $100\%$ for all nodes.}
\label{tab:CS}
\end{table}

\subsubsection{Coauthors Physics dataset}
For the Physics dataset, we used the standard one, which can be obtained running $\texttt{torch\_geometric.datasets.Coauthor(root="./data", name='Physics')}$. In this case, given that there are no training and testing sets, we randomly partitioned the datasets and used $90\%$ of the samples for training and the remaining $10\%$ for testing. 

\begin{figure*}[ht!]
    \centering
    \begin{subfigure}{0.32\textwidth}
        \centering
        \includegraphics[width=\linewidth]{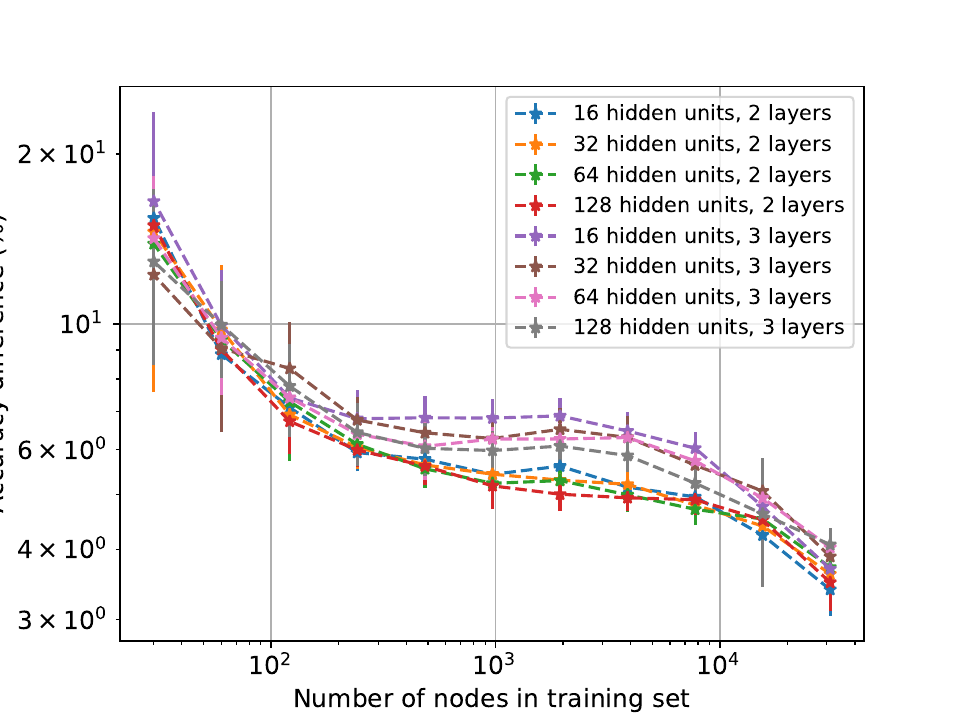}
        % \caption{Accuracy Difference in Physics Dataset.}
        % \label{fig:Physics_acc_1}
    \end{subfigure}%
    \begin{subfigure}{0.32\textwidth}
        \centering
        \includegraphics[width=\linewidth]{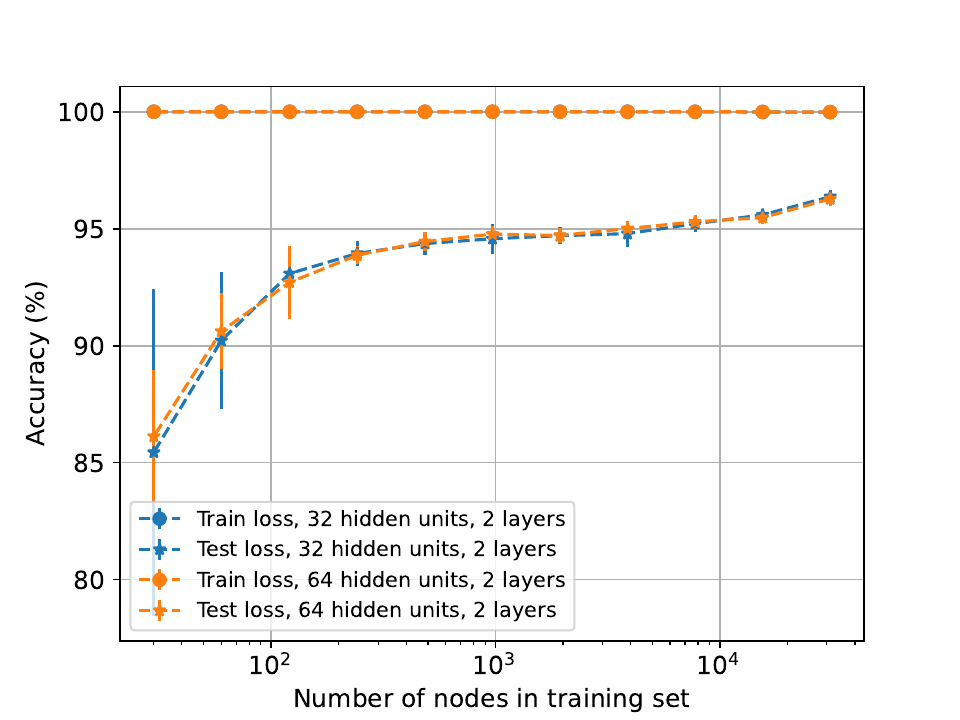}
        % \caption{Accuracy in Train/Test in Physics Dataset for a $2$ Layers GNN.}
        % \label{fig:Physics_acc_layer_2}
    \end{subfigure}%
    \begin{subfigure}{0.32\textwidth}
        \centering
        \includegraphics[width=\linewidth]{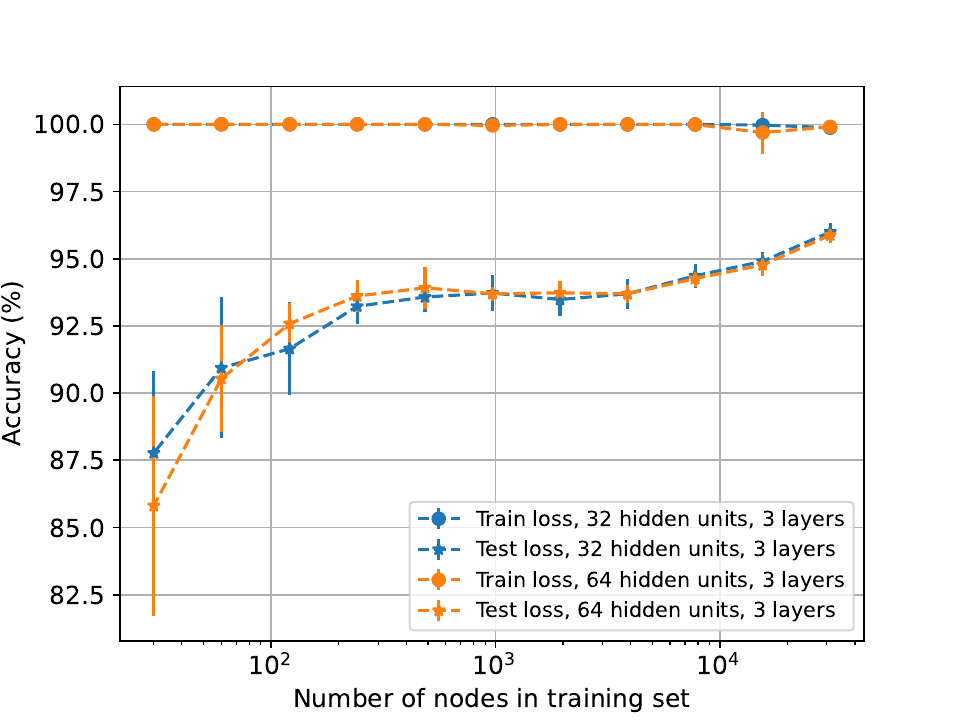}
        % \caption{Accuracy in Train/Test in Physics Dataset for a $3$ Layers GNN.}
        % \label{fig:Physics_acc_layer_3}
    \end{subfigure}\\
    \begin{subfigure}{0.32\textwidth}
        \centering
        \includegraphics[width=\linewidth]{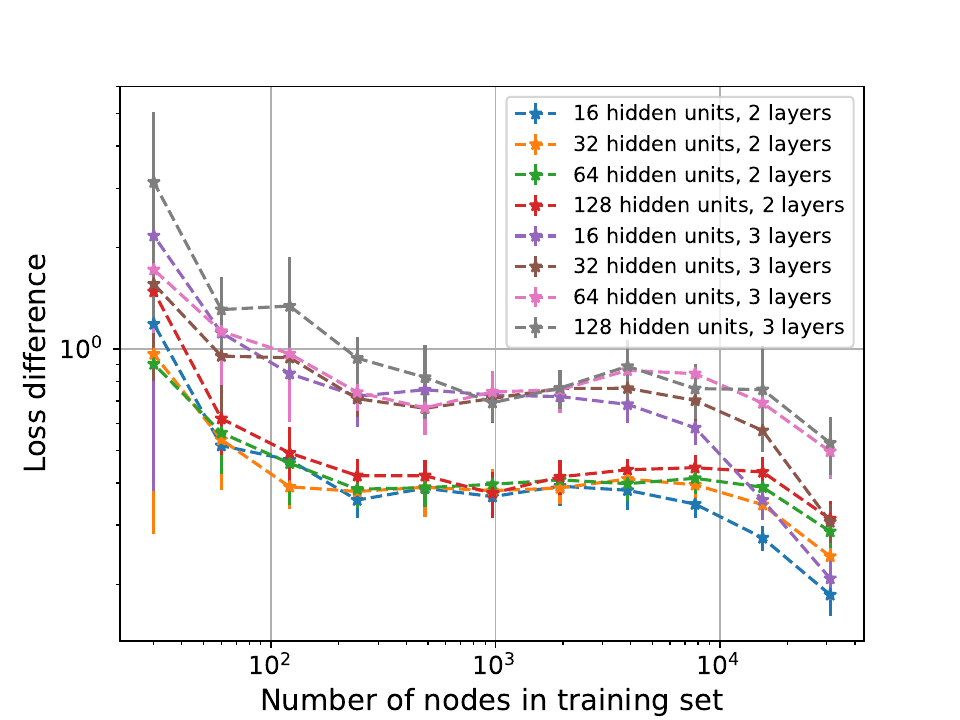}
        % \caption{Loss Difference in Physics Dataset.}
            \caption{Generalization Gap}
    \end{subfigure}
    \begin{subfigure}{0.32\textwidth}
        \centering
        \includegraphics[width=\linewidth]{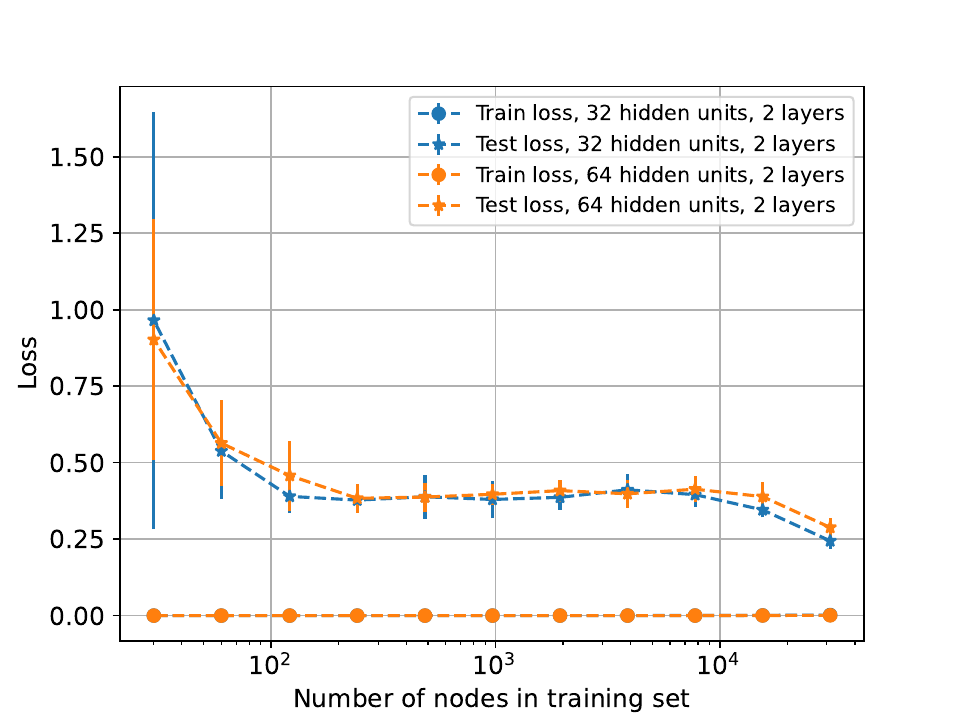}
        % \caption{Loss in Train/Test in Physics Dataset for a $2$ Layers GNN.}
        \caption{$2$ Layers}
    \end{subfigure}
    \begin{subfigure}{0.32\textwidth}
        \centering
        \includegraphics[width=\linewidth]{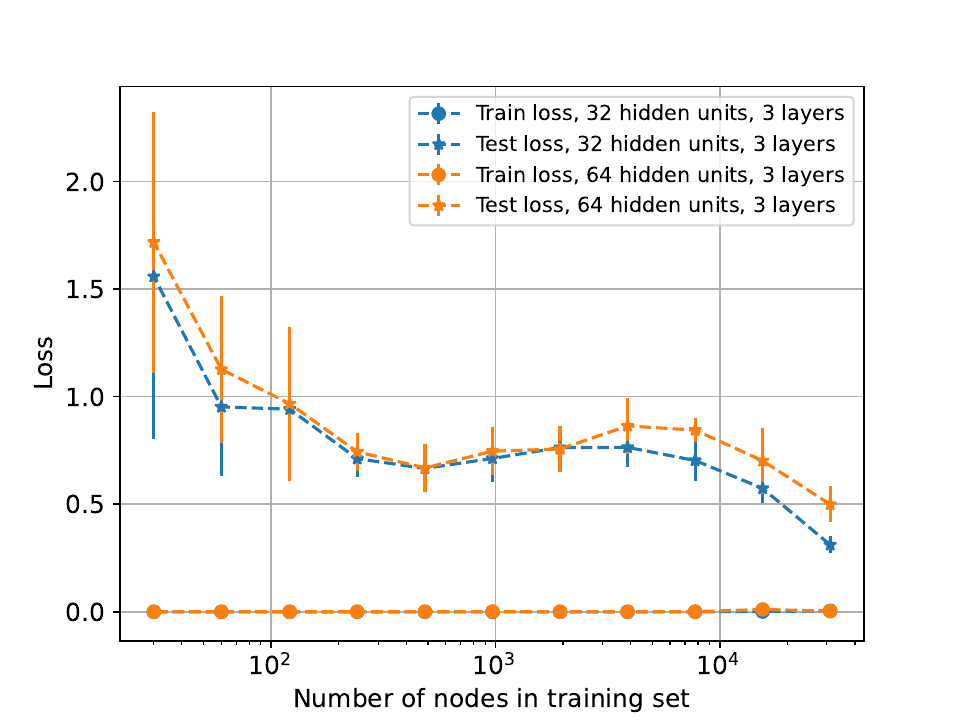}
        % \caption{Loss in Train/Test in Physics Dataset for a $3$ Layers GNN.}
        \caption{$3$ Layers}
    \end{subfigure}\\
    \caption{Generalization gap, testing, and training losses with respect to the number of nodes in the Physics dataset. The top row is in accuracy, and the bottom row is the cross-entropy loss. }
\end{figure*}

\begin{figure*}[ht!]
    \centering
    \begin{subfigure}{0.45\textwidth}
        \centering
        \includegraphics[width=0.9\linewidth]{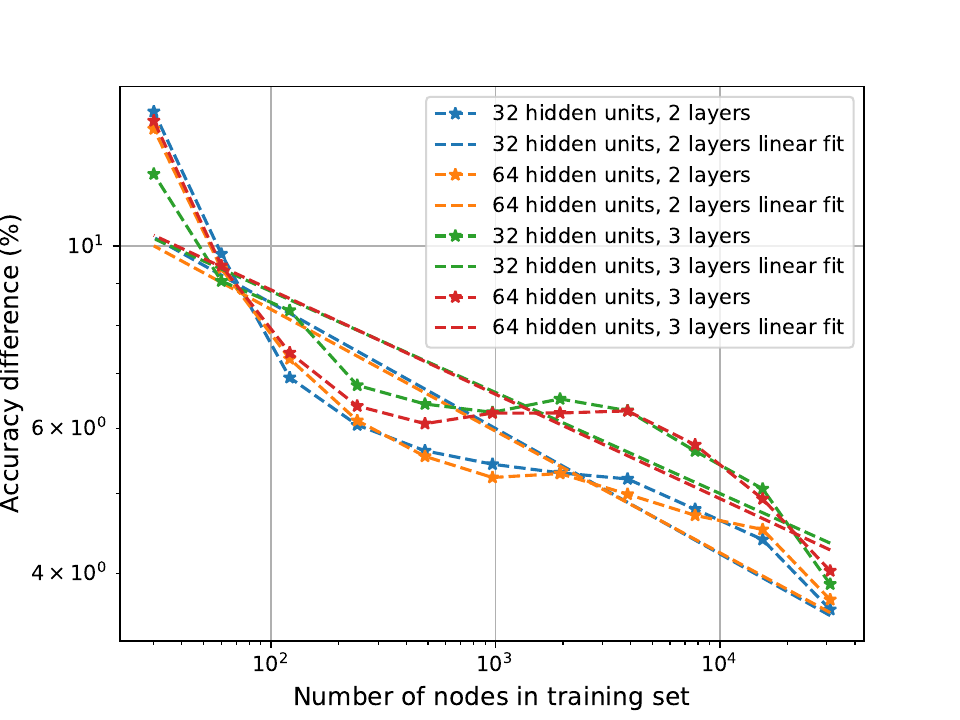}
        \caption{Linear fit for accuracy generalization gap}
        % \label{fig:CS_acc_1}
    \end{subfigure}%
        \begin{subfigure}{0.45\textwidth}
        \centering
        \includegraphics[width=0.9\linewidth]{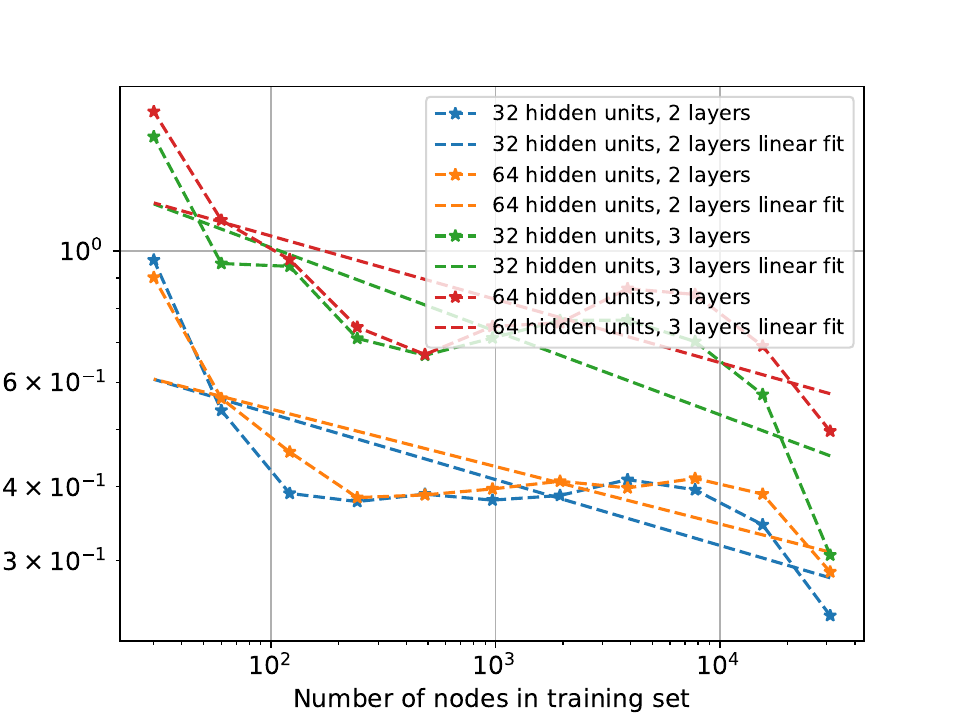}
        % \caption{Accuracy in Train/Test in CS Dataset for a $3$ Layers GNN.}
        \caption{Linear fit for loss generalization gap}
    \end{subfigure}%
    \caption{Generalization Gaps as a function of the number of nodes in the training set in the Physics dataset. }
\end{figure*}

\begin{table}[!ht]
\centering
\begin{tabular}{|c|c|c|c|c|c|}
\hline
Type             & Lay.     & Feat.       & Slope & Point & \begin{tabular}[c]{@{}c@{}}Pearson\\ Correlation \\ Coefficient\end{tabular}                                      \\ \hline\hline
Accuracy & $2$ & $32$  & $-1.524e-01$ & $1.235e+00$  & $-9.064e-01$  \\ \hline
Accuracy & $2$ & $64$  & $-1.478e-01$ & $1.218e+00$  & $-9.145e-01$  \\ \hline
Accuracy & $3$ & $32$  & $-1.227e-01$ & $1.190e+00$  & $-9.328e-01$  \\ \hline
Accuracy & $3$ & $64$  & $-1.268e-01$ & $1.200e+00$  & $-8.826e-01$  \\ \hline
Loss & $2$ & $32$  & $-1.111e-01$ & $-5.257e-02$  & $-7.591e-01$  \\ \hline
Loss & $2$ & $64$  & $-9.684e-02$ & $-7.335e-02$  & $-7.696e-01$  \\ \hline
Loss & $3$ & $32$  & $-1.410e-01$ & $2.875e-01$  & $-8.280e-01$  \\ \hline
Loss & $3$ & $64$  & $-1.068e-01$ & $2.388e-01$  & $-7.679e-01$  \\ \hline
\end{tabular}
\caption{Details of the linear approximation of the Physics Dataset. Note that in this case we used all the values given that the training accuracy is $100\%$ for all nodes.}
\label{tab:Physics}
\end{table}

\subsubsection{Heterophilous Amazon ratings dataset}
For the Amazon dataset, we used the standard one, which can be obtained running $\texttt{torch\_geometric.datasets.HeterophilousGraphDataset(root="./data", name='Amazon')}$. In this case, we used the $10$ different splits that the dataset has assigned. 

\begin{figure*}[ht!]
    \centering
    \begin{subfigure}{0.32\textwidth}
        \centering
        \includegraphics[width=\linewidth]{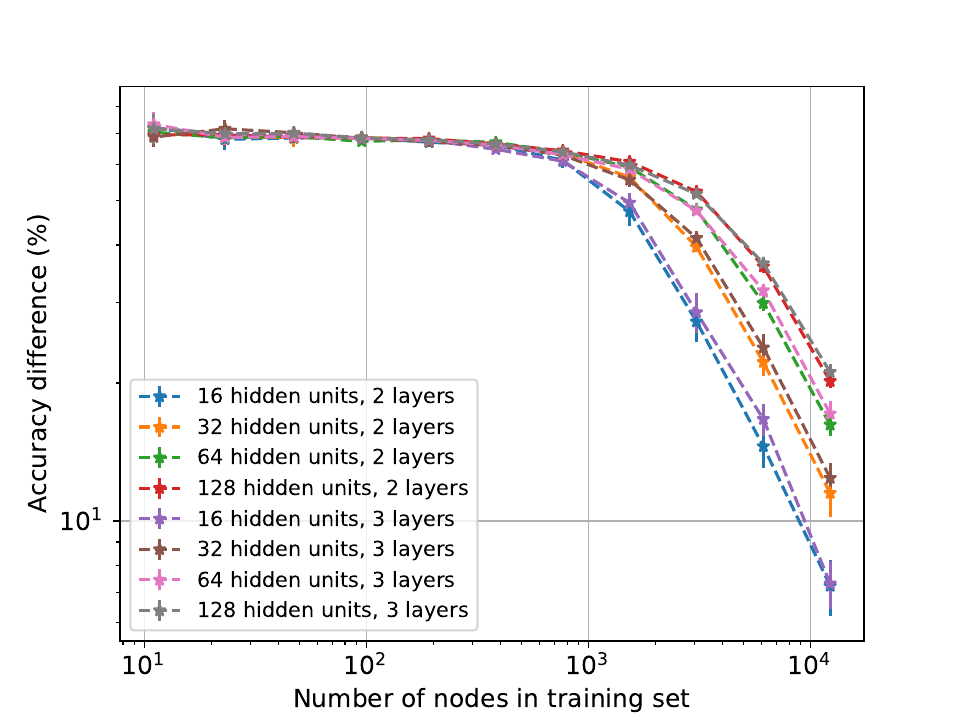}
        % \caption{Accuracy Difference in Amazon Dataset.}
        % \label{fig:Amazon_acc_1}
    \end{subfigure}%
    \begin{subfigure}{0.32\textwidth}
        \centering
        \includegraphics[width=\linewidth]{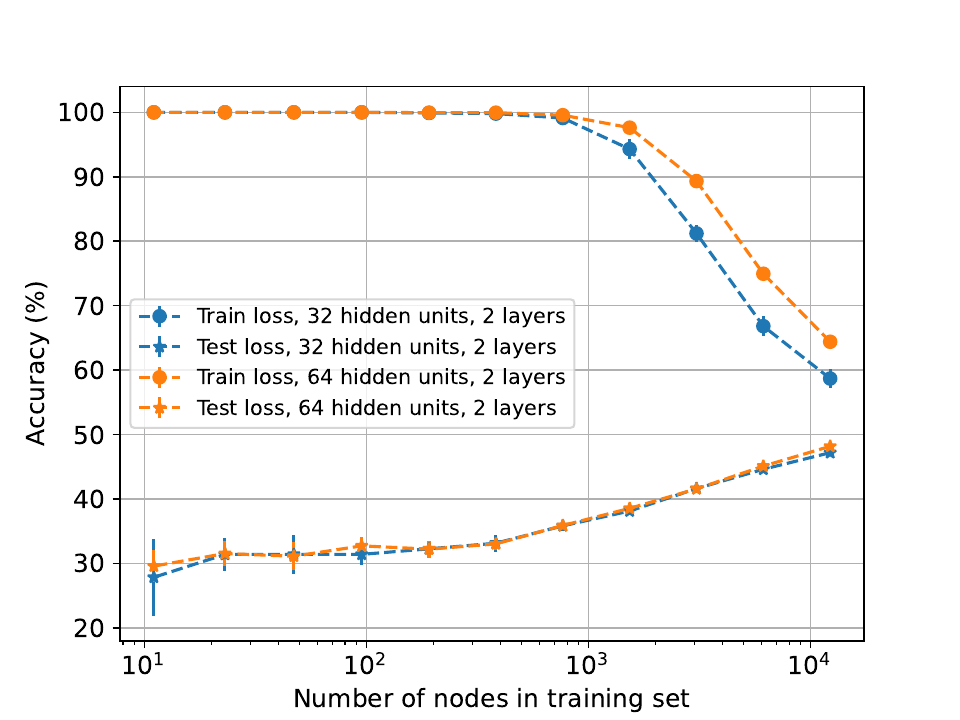}
        % \caption{Accuracy in Train/Test in Amazon Dataset for a $2$ Layers GNN.}
        % \label{fig:Amazon_acc_layer_2}
    \end{subfigure}%
    \begin{subfigure}{0.32\textwidth}
        \centering
        \includegraphics[width=\linewidth]{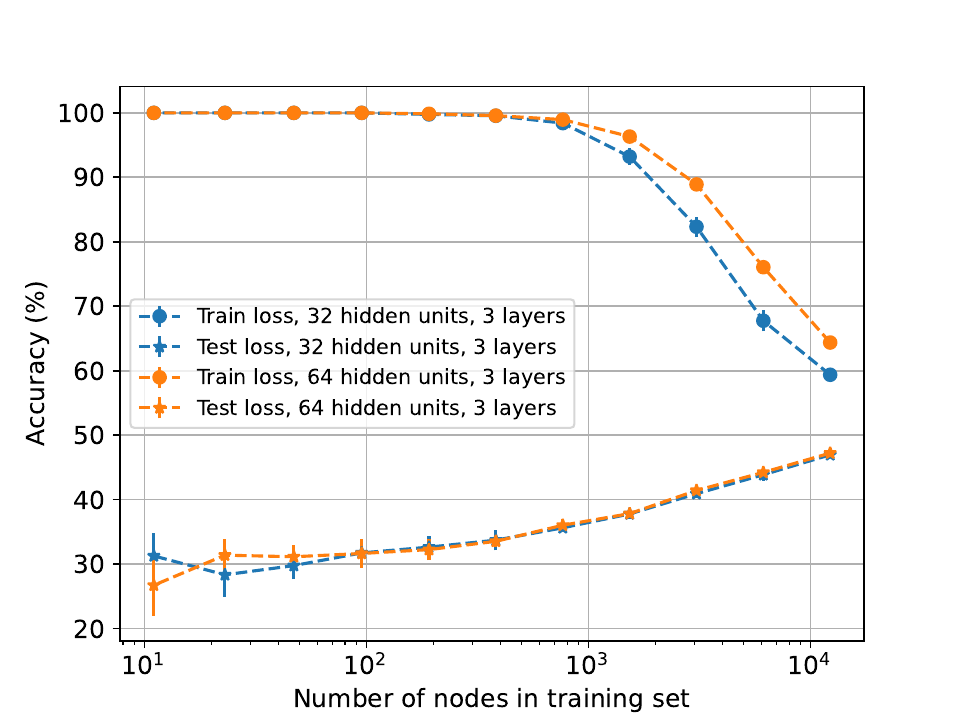}
        % \caption{Accuracy in Train/Test in Amazon Dataset for a $3$ Layers GNN.}
        % \label{fig:Amazon_acc_layer_3}
    \end{subfigure}\\
    \begin{subfigure}{0.32\textwidth}
        \centering
        \includegraphics[width=\linewidth]{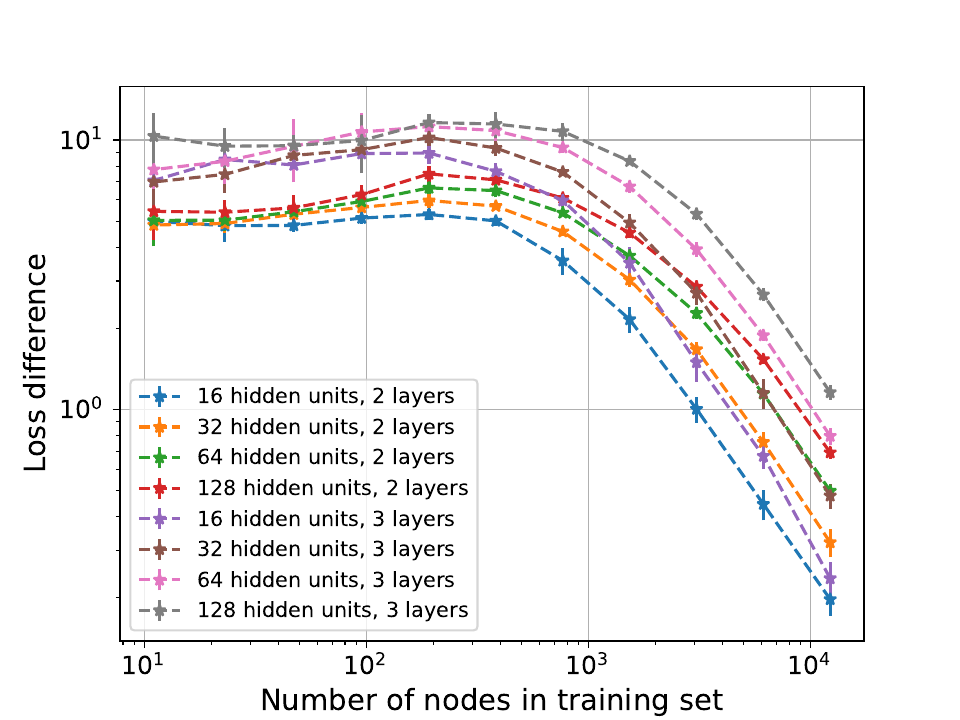}
            \caption{Generalization Gap}
        % \label{fig:Amazon_acc_3}
    \end{subfigure}
    \begin{subfigure}{0.32\textwidth}
        \centering
        \includegraphics[width=\linewidth]{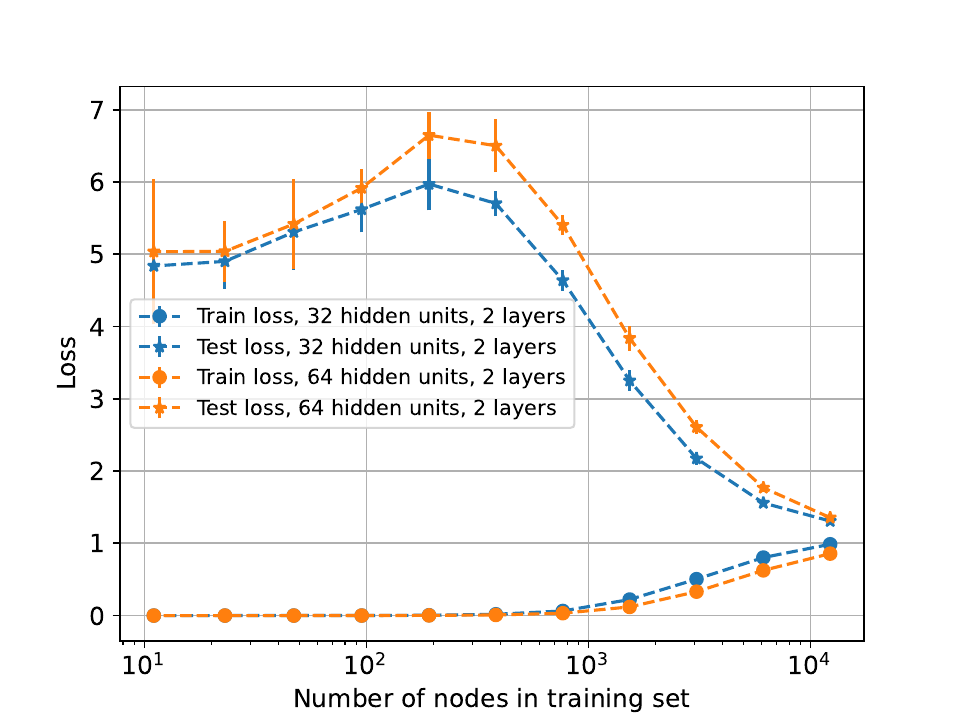}
        \caption{$2$ Layers}
        % \label{fig:Amazon_loss_layer_2}
    \end{subfigure}
    \begin{subfigure}{0.32\textwidth}
        \centering
        \includegraphics[width=\linewidth]{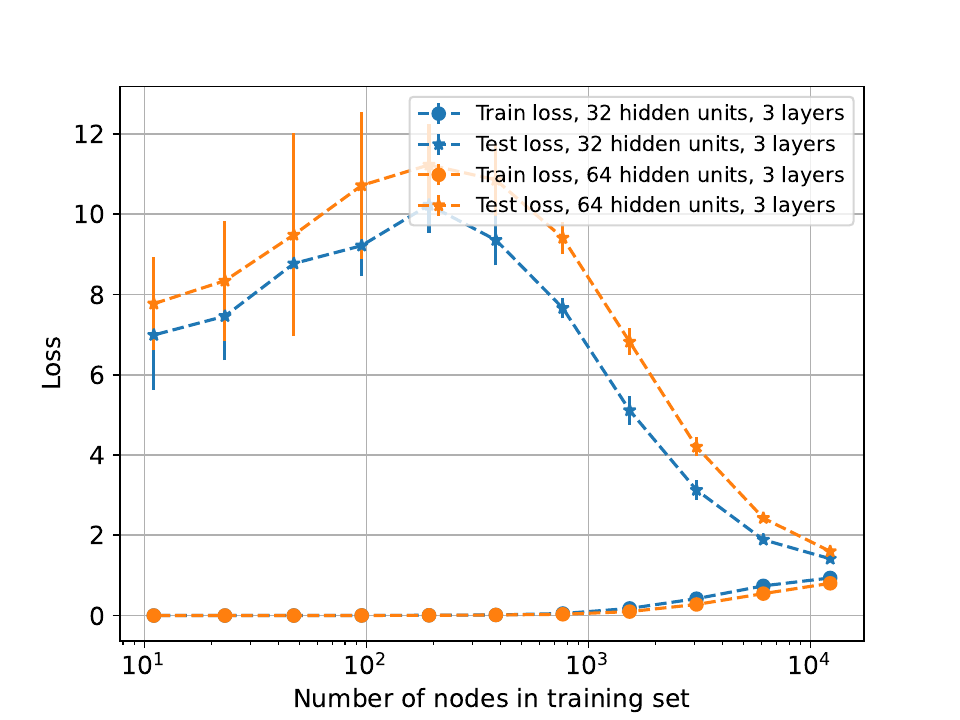}
        % \caption{Loss in Train/Test in Amazon Dataset for a $3$ Layers GNN.}
                    \caption{$3$ Layers}
    \end{subfigure}
    \caption{Generalization gap, testing, and training losses with respect to the number of nodes in the Amazon dataset. The top row is in accuracy, and the bottom row is the cross-entropy loss. }
\end{figure*}

\begin{figure*}[ht!]
    \centering
    \begin{subfigure}{0.5\textwidth}
        \centering
        \includegraphics[width=\linewidth]{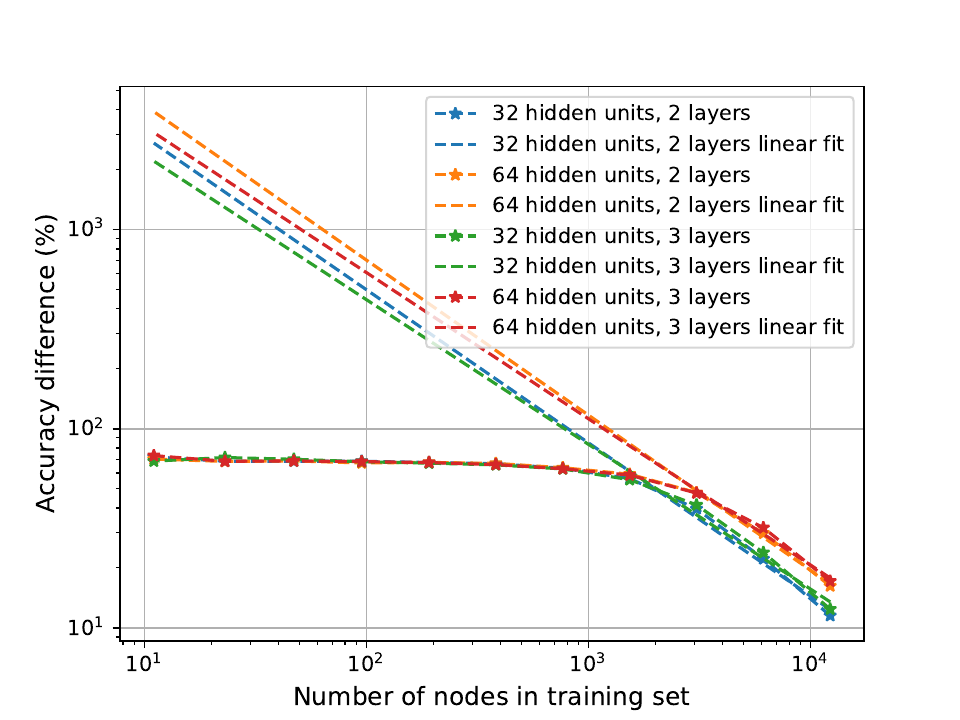}
        \caption{Linear fit for accuracy generalization gap}
        % \label{fig:CS_acc_1}
    \end{subfigure}%
        \begin{subfigure}{0.5\textwidth}
        \centering
        \includegraphics[width=\linewidth]{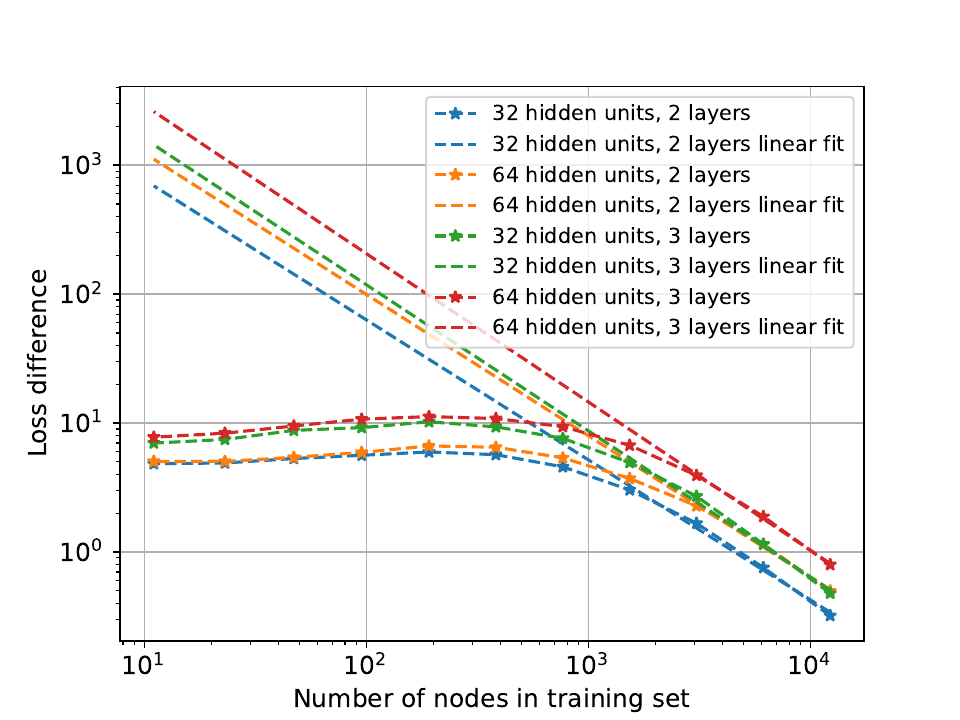}
        % \caption{Accuracy in Train/Test in CS Dataset for a $3$ Layers GNN.}
        \caption{Linear fit for loss generalization gap}
    \end{subfigure}%
    \caption{Generalization Gaps as a function of the number of nodes in the training set in the Amazon dataset. }
\end{figure*}

\begin{table}[!ht]
\centering
\begin{tabular}{|c|c|c|c|c|c|c|}
\hline
Type             & Lay.     & Feat.       & Slope & Point & \begin{tabular}[c]{@{}c@{}}Pearson\\ Correlation \\ Coefficient\end{tabular}                                    \\ \hline\hline
Accuracy & $2$ & $32$  & $-7.693e-01$ & $4.236e+00$  & $-9.914e-01$  \\ \hline
Accuracy & $2$ & $64$  & $-7.788e-01$ & $4.404e+00$  & $-9.972e-01$ \\ \hline
Accuracy & $3$ & $32$  & $-7.268e-01$ & $4.101e+00$  & $-9.868e-01$  \\ \hline
Accuracy & $3$ & $64$  & $-7.354e-01$ & $4.257e+00$  & $-9.921e-01$  \\ \hline
Loss & $2$ & $32$  & $-1.086e+00$ & $3.971e+00$  & $-9.968e-01$  \\ \hline
Loss & $2$ & $64$  & $-1.096e+00$ & $4.189e+00$  & $-9.985e-01$  \\ \hline
Loss & $3$ & $32$  & $-1.134e+00$ & $4.339e+00$  & $-9.965e-01$  \\ \hline
Loss & $3$ & $64$  & $-1.154e+00$ & $4.629e+00$  & $-9.991e-01$  \\ \hline
\end{tabular}
\caption{Details of the linear approximation of the Amazon Dataset. Note that in this case we used only the values of the generalization gap whose training error is below $95\%$.}
\label{tab:Amazon}
\end{table}

\subsubsection{Heterophilous Roman Empire dataset}
For the Roman dataset, we used the standard one, which can be obtained running $\texttt{torch\_geometric.datasets.HeterophilousGraphDataset(root="./data", name='Roman')}$. In this case, we used the $10$ different splits that the dataset has assigned. 

\begin{figure*}[ht!]
    \centering
    \begin{subfigure}{0.32\textwidth}
        \centering
        \includegraphics[width=\linewidth]{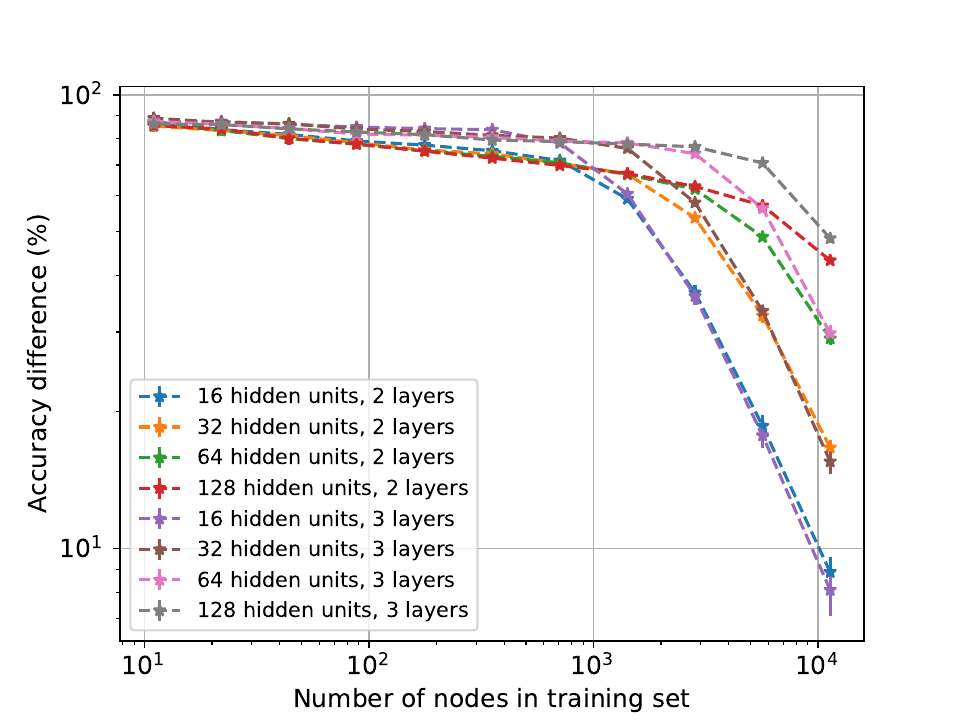}
        % \caption{Accuracy Difference in Roman Dataset.}
        % \label{fig:Roman_acc_1}
    \end{subfigure}%
    \begin{subfigure}{0.32\textwidth}
        \centering
        \includegraphics[width=\linewidth]{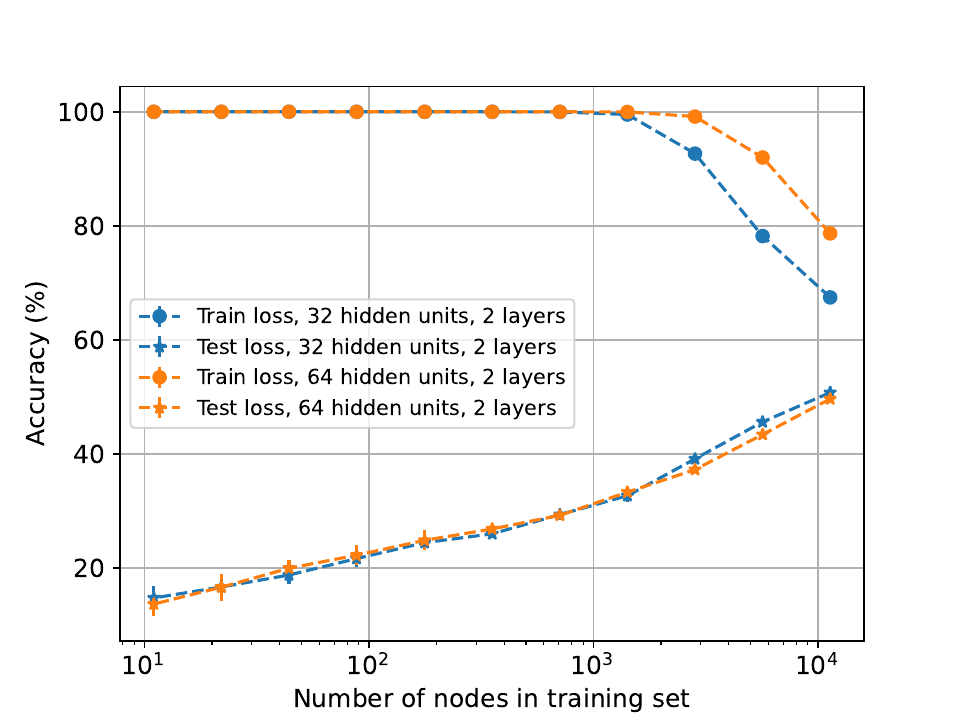}
        % \caption{Loss Difference in Roman Dataset.}
        % \label{fig:Roman_acc_3}
    \end{subfigure}
    \begin{subfigure}{0.32\textwidth}
        \centering
        \includegraphics[width=\linewidth]{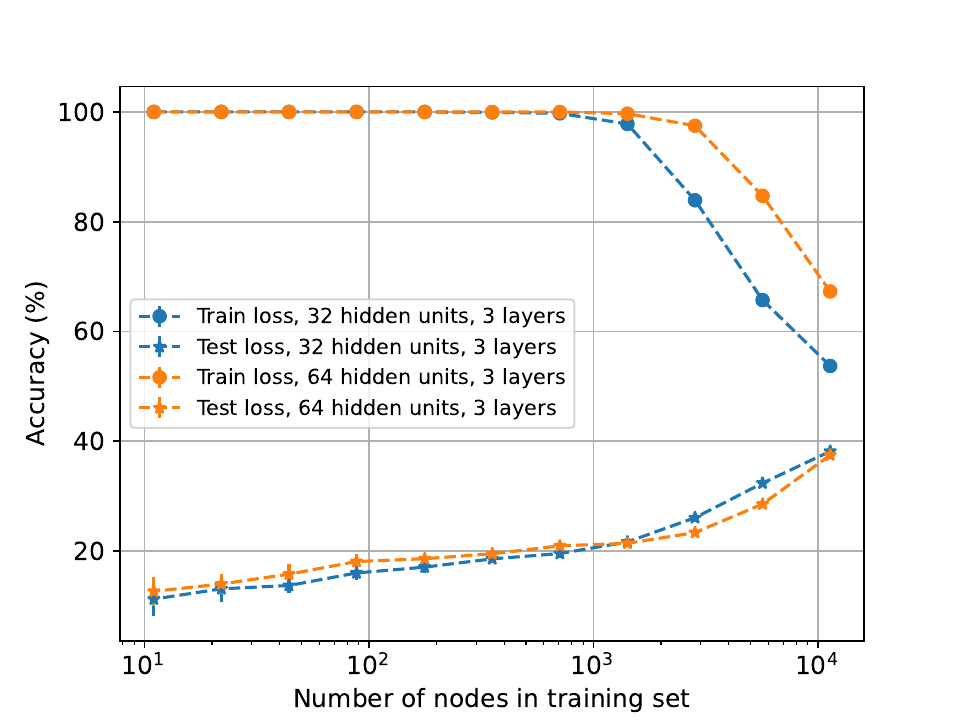}
        % \caption{Accuracy in Train/Test in Roman Dataset for a $2$ Layers GNN.}
        % \label{fig:Roman_acc_layer_2}
    \end{subfigure}\\
    \begin{subfigure}{0.32\textwidth}
        \centering
        \includegraphics[width=\linewidth]{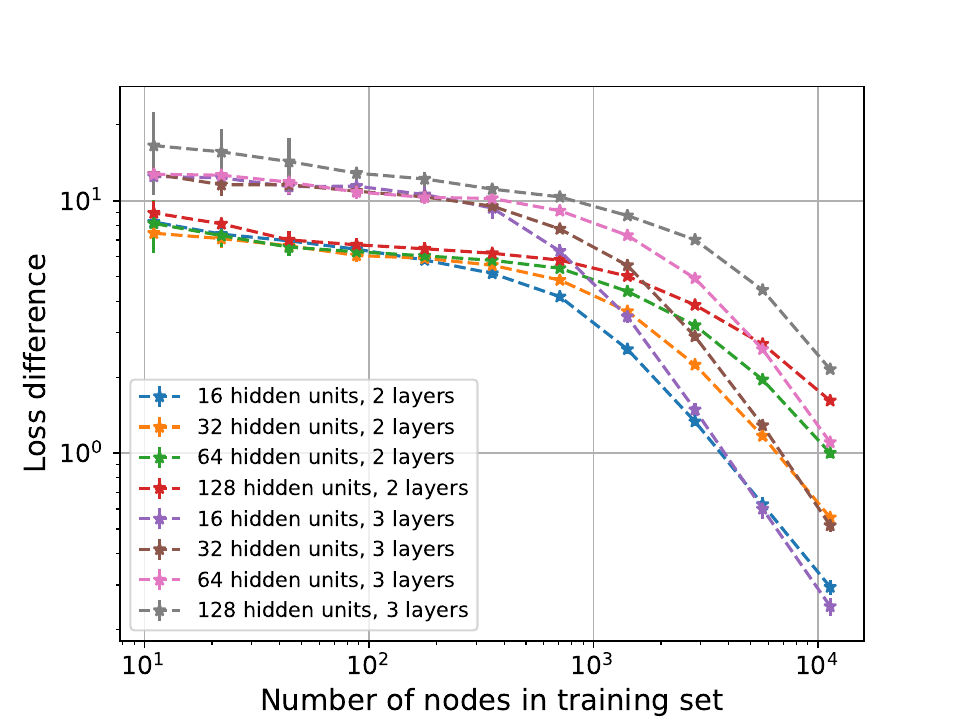}
        % \caption{Loss in Train/Test in Roman Dataset for a $2$ Layers GNN.}
            \caption{Generalization Gap}
    \end{subfigure}
    \begin{subfigure}{0.32\textwidth}
        \centering
        \includegraphics[width=\linewidth]{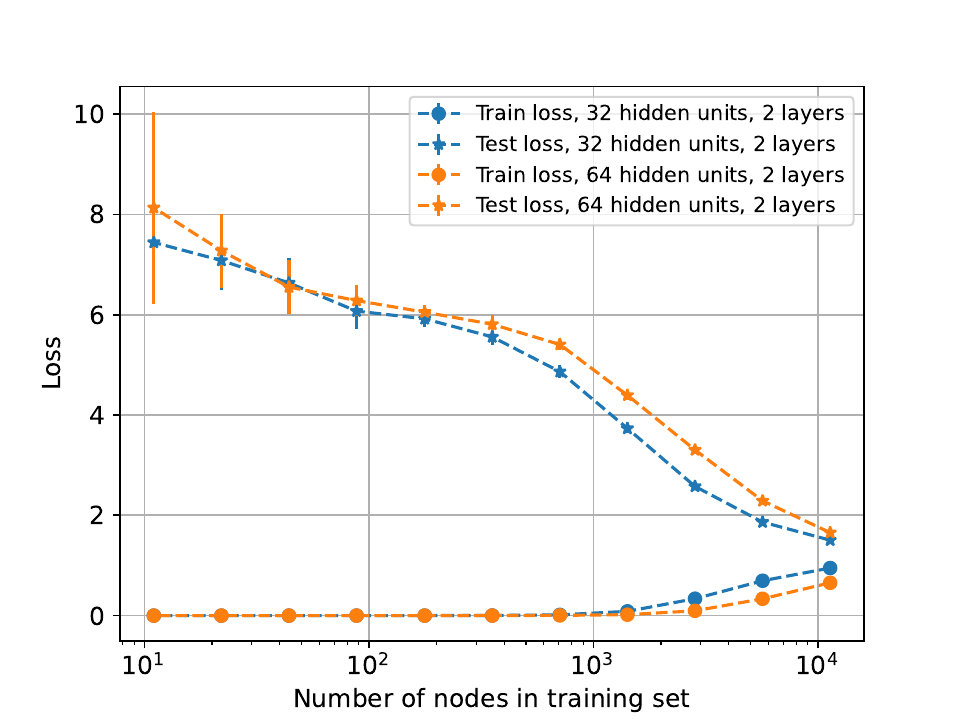}
        % \caption{Accuracy in Train/Test in Roman Dataset for a $3$ Layers GNN.}
        \caption{$2$ Layers}
    \end{subfigure}%
    \begin{subfigure}{0.32\textwidth}
        \centering
        \includegraphics[width=\linewidth]{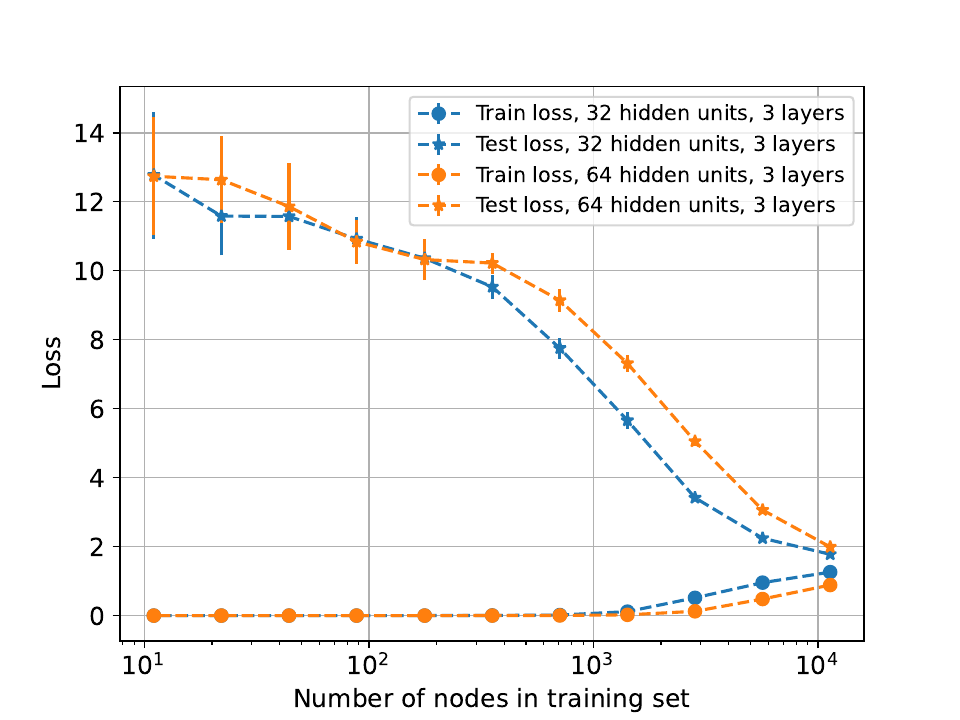}
        % \caption{Loss in Train/Test in Roman Dataset for a $3$ Layers GNN.}
        \caption{$3$ Layers}
    \end{subfigure}
\end{figure*}

\begin{figure*}[ht!]
    \centering
    \begin{subfigure}{0.5\textwidth}
        \centering
        \includegraphics[width=\linewidth]{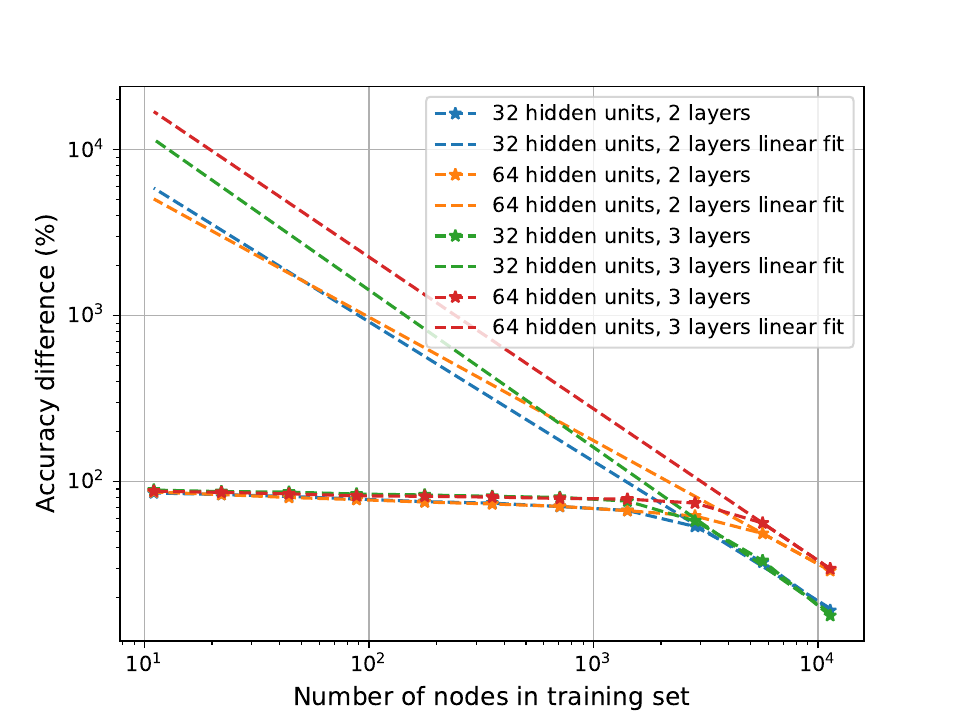}
        \caption{Linear fit for accuracy generalization gap}
        % \label{fig:CS_acc_1}
    \end{subfigure}%
        \begin{subfigure}{0.5\textwidth}
        \centering
        \includegraphics[width=\linewidth]{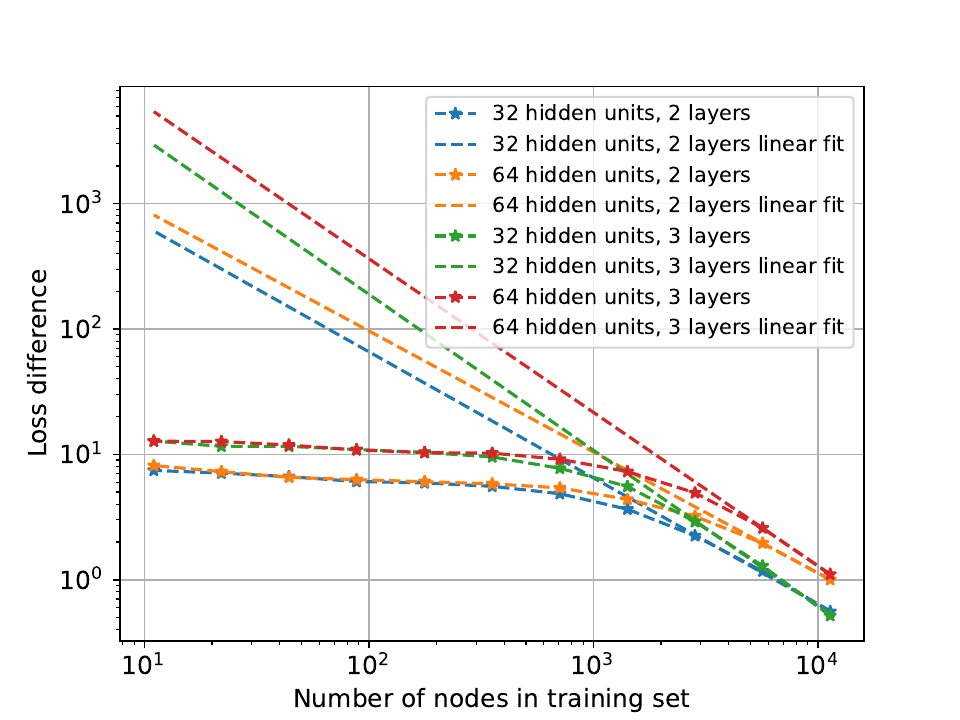}
        % \caption{Accuracy in Train/Test in CS Dataset for a $3$ Layers GNN.}
        \caption{Linear fit for loss generalization gap}
    \end{subfigure}%
    \caption{Generalization Gaps as a function of the number of nodes in the training set in the Roman dataset. }
\end{figure*}

\begin{table}[!ht]
\centering
\begin{tabular}{|c|c|c|c|c|c|}
\hline
Type             & Lay.     & Feat.       & Slope & Point & \begin{tabular}[c]{@{}c@{}}Pearson\\ Correlation \\ Coefficient\end{tabular}                                       \\ \hline\hline
Accuracy & $2$ & $32$  & $-8.408e-01$ & $4.644e+00$  & $-9.963e-01$  \\ \hline
Accuracy & $2$ & $64$  & $-7.435e-01$ & $4.477e+00$  & $-1.000e+00$  \\ \hline
Accuracy & $3$ & $32$  & $-9.476e-01$ & $5.049e+00$  & $-9.956e-01$  \\ \hline
Accuracy & $3$ & $64$  & $-9.145e-01$ & $5.182e+00$  & $-1.000e+00$  \\ \hline
Loss & $2$ & $32$  & $-1.006e+00$ & $3.829e+00$  & $-9.992e-01$ \\ \hline
Loss & $2$ & $64$  & $-9.656e-01$ & $3.915e+00$  & $-1.000e+00$  \\ \hline
Loss & $3$ & $32$  & $-1.244e+00$ & $4.764e+00$  & $-9.994e-01$ \\ \hline
Loss & $3$ & $64$  & $-1.225e+00$ & $5.011e+00$  & $-1.000e+00$  \\ \hline
\end{tabular}
\caption{Details of the linear approximation of the Roman Dataset. Note that in this case we used only the values of the generalization gap whose training error is below $95\%$}
\label{tab:Roman}
\end{table}

\end{document}